\let\footnote=\endnote
\definecolor{strcolor}{rgb}{0.6, 0.2, 0.6}
\definecolor{commentcolor}{rgb}{0.3125, 0.5, 0.3125}
\definecolor{keycol}{rgb}{0, 0, 1}
\newcommand {\bea}{\begin{eqnarray}}
	\newcommand {\eea}{\end{eqnarray}}
\newcommand{\E}{\ensuremath{\mathsf{E}}} % expectation
\DeclareMathOperator{\VaR}{VaR}
\newcommand{\field}[1]{\ensuremath{\mathbb{#1}}}
\newcommand{\Z}{\ensuremath{\field{Z}}} % integers
\newcommand{\I}[1]{\ensuremath{\mathbb{I}_{\left\{#1\right\}}}} % indicator function
\newcommand{\PR}{\ensuremath{\mathsf{P}}} % probability
\newcommand{\Tscr}{\ensuremath{\mathcal T}}
\def\blot{\quad \mbox{$\vcenter{ \vbox{ \hrule height.4pt
				\hbox{\vrule width.4pt height.9ex \kern.9ex \vrule width.4pt}
				\hrule height.4pt}}$}}
\gdef\AQ#1{}
\gdef\CQ#1{}
\newcommand{\revise}{\color{black}}
\newtcolorbox{myquote}[1][]{%
	colback=black!5,
	colframe=black!5,
	notitle,
	sharp corners,
	borderline west={2pt}{0pt}{red!80!black},
	enhanced,
	breakable,
}
\begin{document}
	%%%%%%%%%%%%%%%%
	
%	\AIA
% \setcounter{page}{1} %
% \VOL{00}%
% \NO{0}%
% \MONTH{Xxxxx}%
% \YEAR{2017}%
% \FIRSTPAGE{1}%
% \LASTPAGE{16}%
% \FIRSTPAGEAIA{1}%
% \LASTPAGEAIA{16}%
\def\COPYRIGHTHOLDER{INFORMS}%
\def\COPYRIGHTYEAR{2017}%
\def\DOI{\fontsize{7.5}{9.5}\selectfont\sf\bfseries\noindent https://doi.org/10.1287/opre.2017.1714\CQ{Word count = 9740}}
%\def\RECEIVED{November 1, 2016}
%\def\REVISED{June 22, 2017; October 6, 2017}
%\def\ACCEPTED{November 15, 2017}
% \PUBONLINEAIA{}

	\RUNAUTHOR{Chen et~al.} %

	\RUNTITLE{The Use of BCF to Model and Estimate Discrete Choices}

\TITLE{The Use of Binary Choice Forests to Model and Estimate Discrete Choices}

	% Block of authors and their affiliations starts here:
	% NOTE: Authors with same affiliation, if the order of authors allows,
	%   should be entered in ONE field, separated by a comma.
	%   \EMAIL field can be repeated if more than one author

\ARTICLEAUTHORS{
	\AUTHOR
	{Ningyuan Chen}
	\AFF{Rotman School of Management, University of Toronto, Toronto, Ontario M5S 1A1, Canada,  \EMAIL{ningyuan.chen@utoronto.ca}}
	
	\AUTHOR
	{Guillermo Gallego}
	\AFF{School of Data Science, The Chinese University of Hong Kong, Shenzhen, China, 518172, \EMAIL{gallegoguillermo@cuhk.edu.cn, }}
	
	\AUTHOR
	{Zhuodong Tang}
	\AFF{Antai College of Economics and Management, Shanghai Jiao Tong University,  Shanghai, China, 200030, \EMAIL{zdtang@sjtu.edu.cn}}
}
	 % end of the block
	
%\ARTICLEINFO{\textbf{Received:} November 1, 2016\\ \textbf{Revised:} June 22, 2017; October 6, 2017\\ \textbf{Accepted:} November 15, 2017\\ \textbf{Published Online in Articles in Advance:}}

	\ABSTRACT{
		\textbf{Problem definition.} In retailing, discrete choice models (DCMs) are commonly used to capture the choice behavior of customers when offered an assortment of products. When estimating DCMs using transaction data, flexible models (such as machine learning models or nonparametric models) are typically not interpretable and hard to estimate, while tractable models (such as the multinomial logit model) tend to misspecify the complex behavior represeted in the data. 
		\textbf{Methodology/results.} In this study, we use a forest of binary decision trees to represent DCMs. This approach is based on random forests, a popular machine learning algorithm. The resulting model is interpretable: the decision trees can explain the decision-making process of customers during the purchase. We show that our approach can predict the choice probability of any DCM consistently and thus never suffers from misspecification. Moreover, our algorithm predicts assortments unseen in the training data. The mechanism and errors can be theoretically analyzed. We also prove that the random forest can recover preference rankings of customers thanks to the splitting criterion such as the Gini index and information gain ratio. 
		\textbf{Managerial implications.} The framework has unique practical advantages. It can capture customers' behavioral patterns such as irrationality or sequential searches when purchasing a product. It handles nonstandard formats of training data that result from aggregation. It can measure product importance based on how frequently a random customer would make decisions depending on the presence of the product. It can also incorporate price information and customer features. Our numerical experiments using synthetic and real data show that using random forests to estimate customer choices can outperform existing methods. }

%\FUNDING{The research of the first author is supported by NWO Grant 613.001.208. The third author acknowledges the funding support from the Singapore Ministry of Education Social Science Research Thematic Grant MOE2016-SSRTG-059.}

%\SUBJECTCLASS{\AQ{Please confirm subject classifications.} {\color{blue} discrete choice model; machine learning.}}
%
%\AREAOFREVIEW{\color{blue} Analytics in Operations.}

\KEYWORDS{machine learning, online retailing, discrete choice model, data-driven, random forest}%{\CQ{Kindly provide the keywords.}}

	%%%%%%%%%%%%%%%%%%%%%%%%%%%%%%%%%%%%%%%%%%%%%%%%%%%%%%%%%%%%%%%%%%%%%%
	
	% Samples of sectioning (and labeling) in OPRE
	% NOTE: (1) \section and \subsection do NOT end with a period
	%       (2) \subsubsection and lower need end punctuation
	%       (3) capitalization is as shown (title style).
	%
	%\section{Introduction.}\label{intro} %%1.
	%\subsection{Duality and the Classical EOQ Problem.}\label{class-EOQ} %% 1.1.
	%\subsection{Outline.}\label{outline1} %% 1.2.
	%\subsubsection{Cyclic Schedules for the General Deterministic SMDP.}
	%  \label{cyclic-schedules} %% 1.2.1
	%\section{Problem Description.}\label{problemdescription} %% 2.
	% Text of your paper here

\maketitle

\section{Introduction}\label{sec:intro}
In retailing, firms collect data on the choice behavior of past customers when they are offered an assortment of products.
The data can be used to predict the choice behavior of future customers, which in turn can help firms to develop effective assortment strategies to improve profits or market shares. 
Discrete choice models (DCMs) play a central role in describing and estimating the choice behavior from the data.

To understand and predict consumers' choice behavior, academics and practitioners have proposed several frameworks, some of which are widely adopted in the industry. 
One ubiquitous framework is to first propose a DCM that potentially captures the underlying choice behavior of customers.
The DCM has a number of parameters that can be estimated with historical data.
Once the model has been appropriately estimated, it can be used as a workhorse to predict the choice behavior of future consumers.
The well-known multinomial logit (MNL) model \citep{McFa73} and its maximum likelihood estimation is a classic example of this framework.

In this framework, there is a trade-off between flexibility and accuracy.  
A flexible DCM has a large number of parameters and incorporates a wide range of consumers' behavior, but it may be difficult to estimate and may overfit the training data. 
On the other hand, a parsimonious model may fail to capture complex choice patterns in the data.
Such misspecification may lead to poor performance in prediction.
The key to a successful model is to reach a delicate balance between flexibility and predictability.
Not surprisingly, it is not straightforward to find the ``sweet spot'' when selecting the DCM. 
For this reason, firms usually estimate and test various models using cross-validation to find the best DCM, which is often repeated as more data is collected.

Another framework favored by data scientists is to apply advanced machine learning algorithms
to the historical data and predict future choice behavior.
This framework skips ``modeling'' entirely and does not attempt to understand the rationality (or irrationality) hidden behind the patterns observed in the training data.
With engineering tweaks, the algorithms can be implemented efficiently and capture various choice behavior.
This approach may sound appealing: if an algorithm achieves impressive accuracy when predicting the choice behavior of consumers, why do we care about the actual rationale in consumers' minds when they make choices?
There are two counterarguments. First, the firm may be interested in not only making accurate predictions but also in other goals such as finding an optimal assortment that maximizes the expected revenue, which may not have appeared in the data.
Without a proper model, it is unclear if revenue maximization can be formulated as an optimization problem. 
Second, when the market environment is subject to secular changes,  having a reasonable model often provides a certain degree of generalizability, while black-box algorithms may fail to capture an obvious pattern just because the pattern has not appeared frequently in the past.

In this paper, we introduce a data-driven framework combining machine learning with interpretable behavioral models, retaining the strengths of both frameworks mentioned previously.
We propose to use random forests, a popular machine learning algorithm \citep{breiman2001random} that is fully nonparametric, to fit the data and predict future customer behavior.
Hence, it differs from the first framework because it does not postulate a parametric DCM upfront.
It also differs from the second framework by providing an interpretable model to describe customers' decision-making process.
Random forests are easy to implement using R or Python \citep{scikit-learn,rrandomforest} and have been shown to have extraordinary predictive power in other applications.
The resulting predictive model, which we refer to as \emph{binary choice forests}, is a mixture of decision trees, each of which reflects the decision-making process of a potential customer.
We provide theoretical analyses for the framework.
First, as the sample size increases, random forests can successfully recover \emph{any} DCM underlying the data.
Although the proof technique is standard, this is the first estimator for DCMs to be shown to have such a property.
Second, we show that random forests serve as adaptive nearest neighbors and effectively use historical data to extrapolate the choice of future customers.
Viewing random forests as adaptive nearest neighbors has been proposed in the statistics literature, and we have adapted the analysis of sampling error (Section~\ref{sec:sampling}) to our setting.
However, due to the special structure of DCMs (the domain of predictors is not continuous), 
the analysis of the distance to the nearest neighbors is combinatorial in nature (Section~\ref{sec:distance}) and has not appeared in the random forest literature.
Third, we show the splitting criterion used by the random forest is intrinsically connected to the preference ranking of customers.
That is, when each customer is endowed with a preference ranking of the products and always chooses the most preferred product in the offered assortment, the random forest can recover the ranking from the data by representing it as a decision tree.
To the best of our knowledge, the third set of results is new to both streams of literature on DCMs and random forests.
%sed for classification and regression, it is inherently a decision tree with the essence of sequential decision-making.

Besides the theoretical properties, 
we explore the practical advantages of the framework thanks to random forests:
(1) It can capture patterns of behavior that elude other models, such as irregularity and sequential searches \citep{weitzman1979optimal}, 
whose details can be found in Section~\ref{sec:behavoral}.
(2) It can return an importance index for all products based on how frequently a random customer would make decisions depending on the presence of the product. The details can be found in Section~\ref{sec:importance}.
(3) It can incorporate prices and reflect the information in consumers' decision-making. The details can be found in Section~\ref{sec:price}.
(4) It can naturally incorporate customer features and is compatible with personalized online retailing. The details can be found in Section~\ref{sec:customer_features}.
(5) It can deal with nonstandard historical data formats, which is a major challenge in practice. The details can be found in Appendix~\ref{sec:aggregate-choice}.
Therefore, we propose random forests as an effective approach to learning consumer choice behavior when the data is abundant and parametric models cannot capture the complex and potentially irrational patterns in the data.

\subsection{Literature Review}\label{sec:literature}
We first review DCMs proposed in the literature, in the increasing order of flexibility and the difficulty of estimation.
The independent demand model and the MNL model \citep{McFa73} have very few parameters (one per product), which are easy to estimate \citep{train2009discrete}.
Although the MNL model is still widely used, its inherent property of independence of irrelevant alternatives (IIA) has been criticized for being unrealistic (see \citealt{anderson1992discrete}).
The nested logit model, the Markov chain DCM, the mixed logit model and the rank-based DCM (see, e.g., \citealt{williams1977formation,train2009discrete,farias2013nonparametric,blanchet2016markov}) are able to capture more complex choice behavior than the MNL model. The mixed logit model can approximate any random utility model (RUM), encompassing an important class of DCMs. 
Although there has been exciting progress in recent years \citep{farias2013nonparametric,van2014market,van2017expectation,csimcsek2018expectation,jagabathula2018conditional},
the computational feasibility and susceptibility to overfitting remain a challenge in practice.
In addition, the class of RUM belongs to the class of \emph{regular} DCMs: the probability of choosing an alternative cannot increase if the offered set is enlarged.  
Experimental studies show strong evidence that regularity may be violated in practice \citep{simonson1992choice}.
Several models are proposed to capture more general behavior than RUM \citep{natarajan2009persistency,flores2017assortment,berbeglia2018generalized,feng2017relation, liu2020assortment, yousefi2020choice}, but it is not yet clear if such models can be estimated efficiently.

The binary choice forest in this paper can be seen as a mixture of customer segments, where each segment has the choice behavior represented by a decision tree. In this sense, it is related to recent studies on consumer segmentation such as \cite{bernstein2018dynamic,jagabathula2018model,aouad2023market, feng2021robust}.
This paper focuses on the estimation of DCMs and the segments emerge as a byproduct to improve the predictive accuracy: the trees have equal weights and we do not control for the number of trees (segments).
In contrast, \citet{bernstein2018dynamic,jagabathula2018model,aouad2023market, feng2021robust} design algorithms to cluster customers so the objectives differ. 
It is worth noticing that \citet{aouad2023market} also use the tree structure for market segmentation. However, the tree splits on the customer features instead of the products.

%\subsection{Other Related Literature}
The specifications of random forests used in this paper are introduced by \citet{breiman2001random}, although many of the ideas were discovered earlier.
The readers may refer to \citet{hastie2009elements} for a general introduction.
Although random forests have been very successful in practice, little is known about their theoretical properties relatively.
Most studies focus on stylized assumptions or simplified versions of the random forest algorithm used in practice.\label{page:literature}
\citet{biau2016random} provide an excellent survey of the recent theoretical and methodological developments in the field. 
\label{page:lite}
{\revise \Copy{rev:lite}{
There are recent papers on theoretical properties (e.g., consistency and asymptotic normality) under less restrictive assumptions.
For example, \citet{lin2006random} study the effect of terminal node sizes on the mean squared error (MSE) of random forest predictions. 
\citet{scornet2015consistency} establish the $L^2$ consistency of random forests in regression problems. \cite{wager2014asymptotic, wager2018estimation} show that asymptotic normality can be established from the ``honest'' assumption:
a tree is honest if it uses separate samples in a node to determine respectively the split points and the prediction. 
This assumption is typically violated by random forest algorithms used in practice.
Although our paper borrows some ideas from the existing literature, its contributions differ substantially in several key aspects:  

\begin{itemize}
	\item \textbf{Problem setting.}  
	\citet{lin2006random, scornet2015consistency, wager2014asymptotic, wager2018estimation} study the \emph{regression} problem in a \emph{continuous} setting, where data points are sampled from $[0,1]^d$. By contrast, we focus on a multi-label \emph{classification} problem in a \emph{discrete} setting, where data points are restricted to the corner points $\{0,1\}^N$, with $N$ denoting the number of products.  
	
	\item \textbf{Assumptions.}  
	These papers typically assume the regression function is continuous (or Lipschitz continuous) and that samples have positive density in the predictor domain. \citet{wager2014asymptotic, wager2018estimation} further require honesty and certain regularity conditions. Such assumptions are not applicable in the context of DCMs. Instead, we rely on $c$-continuity, defined specifically for DCMs. The discrete structure of DCMs enables us to establish consistency and analyze error bounds under milder assumptions, using standard tools such as concentration inequalities.  
	
	\item \textbf{Methodology and main results.}  
	While these papers analyze regression errors (e.g., MSE), we focus on classification error, measured by the $L^1$ distributional distance. In Section~\ref{sec:nearest-neighbor}, we build on the nearest-neighbor interpretation of random forests to explain why the algorithm performs well on assortments that have not appeared in the training data. Although the connection between random forests and nearest-neighbor methods was noted by \citet{lin2006random}, our work extends this perspective to the discrete domain under $c$-continuity for DCMs and leverages the combinatorial nature of the problem.  
	Furthermore, in Section~\ref{sec:gini_rank_list}, we analyze the recovery of customer preference rankings under various splitting criteria for classification problems in random forests. We show that the information gain ratio allows the random forest to recover the preference ranking of customers under a much milder assumption than the Gini index.
\end{itemize}

%The study of popular splitting criteria, including the Gini index, information gain and information gain ratio, can be dated back to \citet{leo1984classification, quinlan1986induction, quinlan1993program}.
%We refer readers to the textbook \citet{zhou2021machine} for a detailed discussion.
%The empirical and theoretical performance of the criteria is shown to be only mildly different
%\citep{mingers1988comparison, mingers1989empirical,raileanu2004theoretical}. 
%In this study, however, we show that the information gain ratio allows the random forest to recover the preference ranking of customers under a much milder assumption than the Gini index.

A recent paper by \citet{chen2019decision} proposes a similar tree-based DCM.
Our study differs substantially in how to estimate the tree from the data and key results.
We use random forests algorithm, while \citet{chen2019decision} rely on mixed-integer programming to estimate the tree structure.
On the theoretical side, \citet{chen2019decision} mainly characterize the
depth of the forest needed to fit a training dataset of assortments. In contrast, we focus on explain why the algorithm works well for \emph{unseen assortments}, which refer to the assortments having not appeared in the training data, and investigate the impact of the splitting criteria.
For the numerical studies, \citet{chen2019decision} benchmark their model against MNL and rank-based DCMs, and focus mainly on datasets with around 10 products.
By constrast, we conduct a more comprehensive and extensive numerical studies.
We find that random forests are quite robust and have good performance even compared with the Markov chain model estimated using the expectation-maximization (EM) algorithm, which has been shown to have outstanding empirical performance compared to MNL, the nested logit, the mixed logit and rank-based DCM \citep{berbeglia2022comparative}, especially when the training data is large.
We also compare the performance of random forests to \cite{chen2019decision} in real and synthetic datasets, and show random forests perform better and more robust. We also evaluate settings with a large number of products (up to 400), and explore aspects such as parameter tuning, assortment variation, and extensions to pricing and customer features. }}
It is worth noticing that some recent studies on optimization frameworks (e.g., assortment optimization) under tree ensemble models greatly extend the applicability of the tree-based modeling frameworks \citep{mivsic2020optimization, biggs2023constrained, perakis2021umotem}. 
%that provides an optimization framework to solve the optimal assortment planning problem for tree-based DCMs.
 %\citet{biggs2023constrained} propose a data-driven optimization framework when an uncertain objective function is estimated by random forests. 
%\citet{perakis2021umotem} propose UMOTEM to solve a constrained optimization problem where the objective function is determined by a tree ensemble model.

{\revise
To help readers clearly distinguish our work from the existing literature, we summarize the key differences from representative literature in Table~\ref{tab:literature}.}

\begin{table}[t]
	\centering
	\caption{\revise Comparison with related literature.}
	\renewcommand{\arraystretch}{1.2}
	{\revise
	\begin{tabular}{c c c c c}
		\hline
		Paper & \makecell{Feature $\bm x$} & \makecell{Task} & \makecell{Method} & Main unique result \\
		\hline
		\cite{lin2006random} & Continuous & Regression & RF & MSE on leaf node size \\
		\cite{scornet2015consistency} & Continuous & Regression & RF & $L^2$ Consistency \\
		\cite{wager2018estimation} & Continuous & Regression & RF & Asymptotic normality  \\
		\cite{chen2019decision} & Discrete & Classification & MIP & Tree depth needed to fit training data \\
		This paper & Discrete & Classification & RF & \makecell{Error bound of unseen assortments,\\preference ranking recovery} \\
		\hline
	\end{tabular}}
	\label{tab:literature}
\end{table}

\section{Data and Estimation}\label{sec:data_estimation}
Consider a set $[N]\triangleq\left\{1,\ldots,N\right\}$ of $N$ products and define $[N]_+ \triangleq [N] \cup \{0\}$ where $0$ represents the no-purchase option.
We use $\bm x \in \left\{0,1\right\}^N$, a binary vector, to represent an assortment of products, where $\bm x(i) = 1$ indicates the inclusion of product $i$ in the assortment and $\bm x(i) = 0$ otherwise.
A {\bf discrete choice model} (DCM)  is a non-negative mapping $p(i,\bm x): [N]_+\times \left\{0,1\right\}^N \to [0,1]$ such that
\begin{equation}\label{eq:dcm}
	\sum_{i \in [N]_+}p(i,\bm x)=1,\quad
	p(i,\bm x)=0\quad \text{if }\;\bm x(i) = 0.
\end{equation}
Here $p(i,\bm x) \in [0,1]$ represents the probability that a customer selects product $i$ from assortment $\bm x$. We refer to a subset $S\subseteq [N]$ as an assortment associated with $\bm x \in \{0,1\}^N$, i.e., $i\in S$ if and only if $\bm x(i) = 1$.
In the remaining paper, we use $p(i,S)$ and $p(i,\bm x)$ interchangeably.

We assume that arriving consumers make choices independently based on an unknown DCM $p(i, \bm x)$.
The firm collects data of the form $(i_t, \bm x_t)$ (or equivalently $(i_t,S_t)$) where $\bm x_t$ is the assortment offered to the $t$th consumer and $i_t \in S_t \cup \{0\}$ is the choice made by consumer $t = 1,\ldots, T$.  
Our goal is to use the data to estimate the underlying DCM $p(i,\bm x)$.
We view the problem as a classification problem. Given the input $\bm x$, we would like to provide a classifier that maps the input to a probability distribution over the \emph{class labels} $i\in [N]_+$, which is referred to as the \emph{class probability}.

To this end, we use a random forest as a classifier.
The output of a random forest consists of $B$ individual classification 
and regression trees (CARTs), $\left\{t_b(\bm x)\right\}_{b=1}^B$, where $B$ is a tunable hyper-parameter.
Here $t_b\colon \{0,1\}^N\to [N]_+$ is the output of CART $b$. 
The choice probability of item $i$ in the assortment $\bm x$ is estimated as
\begin{equation}\label{eq:class-prob}
	\sum_{b=1}^B \frac{1}{B}\I{t_b(\bm x)=i},
\end{equation}
which measures the fraction of trees that assign label $i \in [N]_+$ to the input assortment $\bm x$.
Note that although a single tree only outputs a deterministic class label for each assortment $\bm x$, the aggregation of the trees, i.e., the forest, is naturally equipped with the class probabilities.

We briefly review the basic mechanism of CART and describe how it is used to fit the data $\{(i_t, \bm x_t)\}_{t=1}^T$.
CART recursively splits the input space $[0,1]^N$ (a hypercube), which is a continuous extension to $\left\{0,1\right\}^N$, along its $N$ dimensions.
In the first iteration, it selects a product $i\in [N]$ and a split point $s_i\in[0,1]$ to split the input space.
More precisely, the split $(i,s_i)$ divides the samples to $ \left\{(i_t,\bm x_t): \bm x_t(i)\le s_i\right\}$ and $\left\{(i_t,\bm x_t): \bm x_t(i)> s_i\right\}$.
In our problem, because the predictor $\bm x_t \in \left\{0,1\right\}^N$ is at the corner of the hypercube, all split points between 0 and 1 create the same partition of the observations, and thus we simply set $s_i\equiv 0.5$.
To select a product dimension for splitting, an empirical criterion is optimized to favor splits that create ``purer'' regions (or child nodes in the language of decision trees).
Each of the resulting two regions, $R_1$ and $R_2$, should contain observations that mostly belong to the same class.
We use a common criterion called \emph{Gini index} to demonstrate the idea. (Other splitting criteria and their theoretical properties are discussed in Section~\ref{sec:gini_rank_list}.)
With the Gini index, $(i,s_i)$ is chosen to minimize $\sum_{j=1,2} {t_j}/{T}\sum_{k=0}^N \hat p_{jk}(1-\hat p_{jk})$ where $t_j$ is the number of observations in region $R_j$ and $\hat p_{jk}$ is the empirical frequency of class $k$ in $R_j$.
The splitting procedure is then applied recursively to the two regions and their subregions.
Finally, the outcome of the tree $t_b(\bm x)$ is typically defined as the majority class label in the leaf node (the smallest region after the splits) $\bm x$ belongs to.

Next, we explain the details of random forests on top of CART.
To create $B$ CARTs, for each $b=1,\dots,B$, we randomly choose $z$ samples with replacement from the $T$ observations (bootstrap samples).
Only the sub-sample of $z$ observations is used to train the $b$th CART.
Splits are performed only on a random subset of $[N]$ of size $m$ to optimize a criterion such as the Gini index.
The random sub-sample of training data and random products to split are two key ingredients in creating less correlated CARTs in the random forest. The depth of the tree is controlled by the minimal number of observations, say $l$, in a leaf node for the tree to keep splitting.
These steps are formalized in Algorithm~\ref{alg:rf}.
{\SingleSpacedXI
	\begin{algorithm}
		\caption{Random forests for DCM estimation}\label{alg:rf}
		\begin{algorithmic}[1]
			\State Data: $\left\{(i_t,\bm x_t)\right\}_{t=1}^T$
			\State Tunable hyper-parameters: number of trees $B$, sub-sample size $z\in\left\{1,\dots,T\right\}$,
			number of products to split $m\in \left\{1,\dots,N\right\}$, terminal leaf size $l\in\{1,\dots,z\}$\label{step:parameter}
			\For{$b=1$ to $B$}
			\State Select $z$ observations from the training data with replacement, denoted by $Z$\label{step:select-subsample}
			\State Initialize the tree $t_b(\bm x)\equiv 0$ with a single root node
			\While{some leaf has greater than or equal to $l$ observations belonging to $Z$ and can be split}\label{step:while-split}
			\State Select $m$ products without replacement among $\left\{1,\dots,N\right\}$\label{step:choose-direction}
			\State Select the optimal one to split among the $m$ products that optimizes a splitting criterion such as the Gini index  \label{step:criterion}
			\State Split the leaf node into two
			\EndWhile
			\State Denote the partition associated with the leaves of the tree by $\left\{R_1,\dots,R_M\right\}$. 
			Let $c_i\in [N]_+$ be the class label of a randomly chosen observation in $R_i$ from the training data\label{step:random-label}
			\State Define $t_b(\bm x) = \sum_{i=1}^{M} c_i\I{\bm x\in R_i}$
			\EndFor
			\State The trees $\left\{t_b(\cdot)\right\}_{b=1}^B$ are used to estimate the class probabilities as \eqref{eq:class-prob}\label{step:forest}
		\end{algorithmic}
\end{algorithm}}

The use of random forests as a generic classifier has a few benefits:
(1) Many machine learning algorithms such as neural networks have numerous hyper-parameters to tune and the performance crucially depends on the suitable choice of hyper-parameters.
Random forests, on the other hand, have only a few hyper-parameters.
In the numerical studies in this paper, we simply choose a set of hyper-parameters that are commonly used for classification problems, without cross-validation or tuning, in order to demonstrate the robustness of the algorithm.
In particularly, we mostly use $B=1000$, $z=T$, $m=\sqrt{N}$ and $l=50$.
The effect of the hyper-parameters is studied in Online Appendix~\ref{sec:hyper-parameter}. (Due to page limit, Online Appendix is available in online version or upon request.)
(2) The implementation of the generic algorithm is included in packages of R and Python.
In Appendix~\ref{sec:sample-code}, we demonstrate the algorithm using scikit-learn, a popular machine learning package in Python that implements random forests, to estimate consumer choices.
It usually takes less than 20 lines to implement the procedure.

More specifically, because of the application structure, there are a few observations.
(1) Because the entries of $\bm x$ are binary $\left\{0,1\right\}$, the split position of decision trees is always $0.5$.
Therefore, along a branch of a decision tree, there can be at most one split on a particular product,
and the depth of a decision tree is at most $N$.
(2) The output random forest is not necessarily a DCM.
\label{page:normalization}{\revise \Copy{rev:normalization}{In particular, the probability of class $i$, or the choice probability of product $i$ given assortment $\bm x$, may be positive even when $\bm x(i)=0$, i.e., product $i$ is not included in the assortment.
To fix the issue, we adjust the probability of class $i$ by conditioning on the trees that output reasonable class labels:
$\sum_{b=1}^B\frac{1}{\sum_{j:\bm x(j)=1} \sum_{b=1}^B \I{t_b(\bm x)=j}}\I{t_b(\bm x)=i, \bm x(i)=1}.$}}
%\begin{equation*}
%	\label{eq:normalization}
%	\sum_{b=1}^B\frac{1}{\sum_{j:\bm x(j)=1} \sum_{b=1}^B \I{t_b(\bm x)=j}}\I{t_b(\bm x)=i, \bm x(i)=1}.
%\end{equation*}
(3) We slightly modify the generic algorithm.
In particular, when returning the class label of a leaf node in a decision tree, we use a randomly chosen observation instead of taking a majority vote (Step~\ref{step:random-label} in Algorithm~\ref{alg:rf}).
While not a typical choice, it is crucial in deriving our consistency result (Theorem~\ref{thm:consistency}).
In practice, such adjustments are minimal and hardly change the predictive accuracy.
In the experiments, we fix $l=50$ for the terminal leaf regardless of the sample size.
It implies that the trees in the random forest are deep for large datasets. 
This is a feature of random forests that average out the high variance and reap the benefit of low bias of individual deep trees.

\subsection{Interpretability}\label{sec:interpretability}
Next, we connect the random forest fitted from Algorithm~\ref{alg:rf} to the context of customer choices.
Consider a simple scenario of $N=2$ products. 
Suppose one CART resulting from Step~\ref{step:random-label} of Algorithm~\ref{alg:rf} is illustrated in the left panel of Figure~\ref{fig:partition}.
It can be equivalently represented by a decision tree in the right panel of Figure~\ref{fig:partition}. 
The decision tree can be interpreted as follows: a customer first checks the presence of product one ($\bm x(1)\ge 0.5$). If it is present, then product one is purchased ($c_2=1$). 
Otherwise, she proceeds to check product two, and purchases it if it is present ($c_3=2$).
We refer to the decision tree and the associated behavior as a {\bf binary choice tree}.
\begin{figure}[t]
	\begin{center}
		\caption{A binary choice tree representation of the partition.}
		\scalebox{0.8}{
		\begin{tikzpicture}
			\draw (0,0) --(0,4) --(4,4) --(4,0) --(0,0);
			\draw (0,2) --(2,2);
			\draw (2,0) --(2,4);
			\draw [->] (4,0) -- (5,0) node[below right] {Product 1};
			\draw [->] (0,4) -- (0,5) node[left] {Product 2};
			\node [below left] at (0,0) {0};
			\node [below] at (4,0) {1};
			\node [left] at (0,4) {1};
			\node at (1,1) {$R_1, c_1 = 0$};
			\node at (3,2) {$R_2, c_2 = 1$};
			\node at (1,3) {$R_3, c_3 = 2$};
		\end{tikzpicture}
		\hspace{1cm}
		\begin{forest}
			for tree={l sep+=.8cm,s sep+=.5cm,shape=rectangle, rounded corners,
				draw, align=center,
				top color=white, bottom color=gray!20}
			[$\bm x(1)\ge 0.5$
			[1,edge label={node[midway,left]{Y}}
			]
			[$\bm x(2)\ge 0.5$,edge label={node[midway,right]{N}}
			[2, edge label={node[midway,left]{Y}}]
			[0, edge label={node[midway,right]{N}}]
			]
			]
		\end{forest}}
		\label{fig:partition}
	\end{center}
\end{figure}
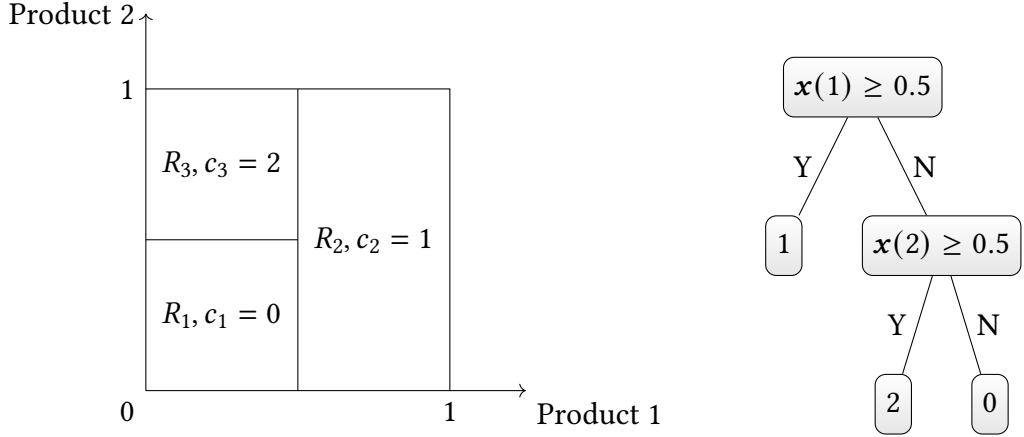

As Algorithm~\ref{alg:rf} aggregates the outputs of $B$ binary choice trees in \eqref{eq:class-prob},
the class probability can be interpreted as randomly following the choice of one of $B$ customers, each of which makes a decision based on a binary choice tree $t_b(\bm x)$.
We refer to such a mixture of binary choice trees as a {\bf binary choice forest} (BCF).
Therefore, Algorithm~\ref{alg:rf} not only allows us to predict the choice behavior from the data, but also provides a modeling tool to capture the behavior.
See Section~\ref{sec:behavoral} for more details.

\section{Why Do Random Forests Work Well?}\label{sec:theory}
Many machine learning algorithms, including random forests, have strong performances in practice.
However, with a few exceptions, there is little theoretical understanding of the impressive performance.  
For example, consistency and asymptotic normality, the two most fundamental properties a statistician would demand, are only recently established for random forests under restrictive assumptions \citep{wager2014asymptotic,scornet2015consistency,wager2018estimation}.
The lack of theoretical understanding may worry practitioners when stakes are high and the failure may have harmful consequences. 
In this section, we attempt to answer the ``why'' question.
The section consists of three major results:
(1) random forests are consistent for \emph{any} DCM, (2) random forests can be viewed as nearest neighbors, whose performance is explained by a few crucial factors, and (3) the choice of the splitting criterion can help random forests recover a class of widely used DCMs called rank-based models.
Note that all three theoretical explanations depend specifically on the structure of the discrete choice and do not hold for general classification or regression problems. Therefore, our findings reveal the benefits of applying random forests to DCMs specifically.

\subsection{Random Forests are Consistent for Any DCM} \label{sec:consistency}
We now show that given enough data, random forests can recover the choice probability of \emph{any} DCM. To obtain our theoretical results, we impose mild assumptions on how the data is generated.
\begin{assumption}\label{asp:independent-choice}
	There is an underlying DCM from which all $T$ consumers independently make choices from the offered assortments, generating data $(i_t,\bm x_t)$, $t = 1, \ldots T$.
\end{assumption}

Notice that the assumption only requires consumers to make choices independently. We do not assume that the offered assortments $\bm x_t$s are IID, and allow the sequence of assortments offered $\bm x_t$ to be arbitrary as the assortments are chosen by the firm and are typically not randomly generated.
Such a design reflects how firms select assortments to maximize expected revenues or to explore customer preferences.

%Since the consistency result requires the sample size $T\to\infty$, we use the subscript $T$ to emphasize the fact that the parameters may be chosen based on $T$.

For a given assortment $\bm x$, let $k_T(\bm x)  \triangleq \sum_{t=1}^T \I{\bm x_t=\bm x}$ be the number of consumers who have seen assortment $\bm x$. We are now ready to establish the consistency of random forests.
\begin{theorem}\label{thm:consistency}
	Suppose Assumption~\ref{asp:independent-choice} holds, then for any $\bm x$ and $i$, if  $\liminf_{T \to \infty} k_T(\bm x) / T>0$, $l_T$ is fixed, $z_T\to\infty$, $B_T\to\infty$, then the random forest is consistent for the assortment $\bm x$:
	\begin{equation*}
		\lim_{T\to\infty}\PR\left(\left|\sum_{b=1}^{B_T} \frac{1}{B_T}\I{t_b(\bm x)=i}-p(i, \bm x)\right|>\epsilon\right) =0
	\end{equation*}
	for all $\epsilon>0$.
\end{theorem}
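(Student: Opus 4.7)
The plan is to split the convergence of $\hat q_T(\bm x) \triangleq \frac{1}{B_T}\sum_{b=1}^{B_T}\I{t_b(\bm x)=i}$ to $p(i,\bm x)$ into (i) an averaging step across trees and (ii) a single-tree consistency step. Condition on the training data $\Dscr_T = \{(i_t,\bm x_t)\}_{t=1}^T$. The trees $t_1,\ldots,t_{B_T}$ are conditionally i.i.d.\ given $\Dscr_T$ since each uses its own bootstrap draw, its own random subsets of candidate coordinates at every split, and its own uniform label pick at every leaf. With $\bar q_T \triangleq \E[\I{t_1(\bm x)=i}\mid \Dscr_T]$, Hoeffding's inequality gives
\begin{equation*}
\PR\bigl(|\hat q_T(\bm x) - \bar q_T| > \epsilon/2 \bigm| \Dscr_T \bigr) \le 2\exp(-B_T\epsilon^2/2),
\end{equation*}
which vanishes as $B_T\to\infty$. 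It therefore suffices to show $\bar q_T\to p(i,\bm x)$ in probability.

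For the single-tree step, the crucial observation is that because $\bm x$ sits at a corner of $\{0,1\}^N$ and every split occurs at $0.5$, the leaf $L(\bm x)$ containing $\bm x$ consists exactly of those bootstrap points that agree with $\bm x$ on every coordinate used along the root-to-leaf path. I would first show by a Chernoff bound, using $k_T(\bm x)/T\ge c$ eventually, that with probability tending to one the bootstrap sample $Z$ contains at least $\tfrac{1}{2}c\,z_T$ observations with assortment $\bm x$. Since $l_T=l$ is fixed, this makes the leaf-size stopping rule unable to terminate splitting of $L(\bm x)$ so long as $L(\bm x)$ still contains a ``contaminant'' observation $\bm x'\ne\bm x$. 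Because only $2^N$ distinct assortments exist, the set of possible contaminants is finite, and every contaminant $\bm x'$ differs from $\bm x$ on at least one coordinate. At each splitting attempt the random subset of $m$ candidate coordinates contains a separating coordinate with probability at least $m/N$, and whenever one is present the Gini criterion prefers a separating split: splitting on a coordinate on which every observation of $L(\bm x)$ coincides leaves the Gini index unchanged, while a separating split weakly decreases it (and expels the contaminant irrespective of ties). Iterating, and using that each root-to-leaf path has depth at most $N$, one concludes that with probability tending to one $L(\bm x)$ becomes ``$\bm x$-pure.''

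Conditional on $L(\bm x)$ being $\bm x$-pure, Step~\ref{step:random-label} assigns the leaf's label by picking a uniformly random observation of $L(\bm x)$, and all such observations carry assortment $\bm x$; under Assumption~\ref{asp:independent-choice} their choices are i.i.d.\ with law $p(\cdot,\bm x)$. Hence, on the purity event, $\PR(t_1(\bm x)=i\mid \Dscr_T)$ equals the empirical frequency $\hat p_Z(i,\bm x)$ of choice $i$ among the $\bm x$-observations in $Z$. A standard law-of-large-numbers argument, applied first to the $k_T(\bm x)\to\infty$ original $\bm x$-observations and then composed with the bootstrap resampling, gives $\hat p_Z(i,\bm x)\to p(i,\bm x)$ in probability. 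Combining with the vanishing probability of impurity yields $\bar q_T\to p(i,\bm x)$, and plugging into step (i) completes the proof.

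The main obstacle will be making the ``expulsion of contaminants'' argument in step (ii) fully rigorous: one must rule out pathological scenarios where the Gini-optimal split among the random $m$ candidates repeatedly fails to separate $\bm x$ from some contaminant, and one must control jointly, over the many splitting iterations and the (at most $2^{N}-1$) contaminant assortments, the event that a separating coordinate is eventually chosen. The three ingredients to be combined carefully are the finiteness of $\{0,1\}^N$, the $m/N$ lower bound on the probability of sampling any particular coordinate into the candidate subset, and the growth of the number of $\bm x$-observations available in the bootstrap sample.
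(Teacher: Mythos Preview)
Your proposal shares the paper's overall scaffolding---Hoeffding across the $B_T$ conditionally i.i.d.\ trees, then single-tree consistency driven by bootstrap concentration and an LLN on empirical choice frequencies---and those parts match the paper's proof (which carries them out via a chain of Hoeffding bounds on $\sum_j Y_j$, on $\sum_j Y_j\I{Z_j=i}$, and on the empirical frequency among the $k_T$ original $\bm x$-observations). The substantive divergence is in how you establish that the leaf containing $\bm x$ is $\bm x$-pure, and here you make the argument much harder than it needs to be.

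The paper's purity argument is a two-line \emph{deterministic} observation from the stopping rule in Step~\ref{step:while-split}: all bootstrap copies of $\bm x$ land in the same leaf (they agree on every coordinate), so once the bootstrap contains more than $l_T$ such copies, the leaf holding $\bm x$ has $>l_T$ observations; if that \emph{terminal} leaf also contained any $\bm x'\neq\bm x$, it would still be splittable on a coordinate where they differ, contradicting termination of the while loop. No probabilistic tracking of which $m$ coordinates are sampled, and no appeal to Gini preferences, is needed. Your route instead argues that a separating coordinate appears with probability at least $m/N$ per draw and that Gini ``prefers'' it; but the Gini decrease from a separating split is only weak (equality occurs whenever the class proportions among the $\bm x$-observations and among the contaminant match the parent's), so your tie-breaking claim that the contaminant is ``expelled irrespective of ties'' is precisely the unproved step. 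The ``main obstacle'' you flag at the end therefore dissolves entirely once you switch to the stopping-rule contradiction; what remains is exactly the paper's proof.
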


\label{page:thm1_compare}
All technical proofs are provided in Appendix \ref{sec:proof}. {\revise \Copy{rev:thm1_compare}{  \cite{chen2019decision} show that decision forests can represent any general choice model, and our Theorem \ref{thm:consistency} demonstrates that the forest generated by the random forest algorithm can accurately predict any choice model as $T \rightarrow \infty$. The consistency of {\em individual} decision trees has been shown under certain conditions, typically including the diminishing diameter and the increasing number of data points in terminal nodes.
See Theorem 13.1 in \cite{gyorfi2006distribution} or Chapters 20 and 21 of \cite{devroye2013probabilistic}.
While these conditions hold in our setting, because of the clustering of data points at the corners,
we have to handle the additional resampling issue in Step~\ref{step:select-subsample} in Algorithm~\ref{alg:rf}.
The proof is based on standard concentration inequalities.}}

According to Theorem~\ref{thm:consistency}, the random forest can accurately predict the choice probability of any DCM, given that the firm has offered the assortment many times.
Practically, the result can guide us about the choice of parameters.
In fact, we just need to generate many trees in the forest ($B_T\to\infty$), resample many observations in a decision tree ($z_T\to\infty$), and keep the terminal leaf small ($l_T$ is fixed).
The requirement is easily met by choice of parameters in the remarks following Algorithm~\ref{alg:rf}, i.e., $z=T$, $m=\sqrt{N}$ and $l=50$.
Theorem~\ref{thm:consistency} guarantees good performance of the random forest when the seller has collected a large dataset. This is a typical case in online retailing, especially in the era of ``big data.''
Random forests thus provide a novel \emph{data-driven} approach to model customer choices.
In particular, the model is first trained from data, then used to interpret the inherent thought process of consumers when they make purchases.
By Theorem~\ref{thm:consistency}, when the historical data has a large sample size, the model can accurately predict how consumers make decisions in reality.
%This reflects the universality of the model.
%Indeed, in Section~\ref{sec:practical-considerations}, we provide concrete examples demonstrating several practical considerations that can hardly be captured by other DCMs and handled well by random forests.

\subsection{Random Forests and Nearest Neighbors} \label{sec:nearest-neighbor}
In Section~\ref{sec:consistency}, we state that when an assortment is offered frequently, the choice probabilities estimated by random forests of this assortment are consistent.
However, this doesn't explain the strong performance of random forests on assortments that have not been offered frequently in the training data (so-called unseen assortments).
In this section, following the intuition provided in \cite{lin2006random}, we attempt to provide a unique perspective based on nearest neighbors.
To motivate, consider the following examples.

%When $N$ is large, firms do not have the luxury of offering all of the $2^N$ possible assortments.
%This brings up the problem of making good prediction for unseen assortments. In this section, we show that this is possible provided that the dataset contains neighbors that are close to the unseen assortment for which we need to make predictions.

%Unfortunately for binary choice forests, the atoms are concentrated on the extreme points, so  the continuity condition is violated. Nevertheless, we can still borrow some ideas to estimate choice probabilities for unseen assortments. The following example will show that customers' behaviour would not significantly change when only a few products are different. To facilitate the discussion we will not consider the outside option in the rest of this section.

\begin{example}
	\label{exmp:PNN_frequency}
	Consider $N=4$ products. Suppose only four assortments are offered in the training data:
	$S_1 = \{1,2,3\}$, $S_2 = \{1,2,4\}$, $S_3 = \{3,4\}$, $S_4 = \{1,2\}$.
	As a result, the assortment $S=\left\{1,2,3,4\right\}$ (or $\bm x=(1,1,1,1)$) is never offered in the data.
	How would random forests predict the choice probabilities of the unseen $S$?
	By searching for the terminal leaves which $S$ falls in among individual trees, Step~\ref{step:random-label} in Algorithm~\ref{alg:rf} uses the samples of the assortment appearing in the training data which happens to be in the \emph{same} leaf node to extrapolate the choice probabilities of $S$.
	If we use Algorithm~\ref{alg:rf} with $B = 1000, z = T, m = \sqrt{N}, l = 1$, then Figure~\ref{fig:PNN_frequency} illustrates the frequencies of the three assortments appearing in the same leaf node as $S$ (a deeper color indicates a higher frequency).
	As we can see, $S_1$ and $S_2$ are more likely to fall in the same leaf node as $S$, while $S_3$ is less frequent and $S_4$ is never used to predict the choice probabilities of $S$.
	The frequency roughly aligns with the ``distance'' from $S_i$ to the unseen $S$ in the graph, defined as the number of edges to traverse between two vertices.
	However, the distance doesn't explain why $S_4$, which is of the same distance as $S_3$, is never used.
	We will address this puzzle later in the section.
	\begin{figure}[t]
		\centering
		\caption{The frequency (number of trees in the forest) at which the assortments in the training data are used to predict the unseen assortment for $N=4$.}
		\scalebox{0.8}{
		\begin{tikzpicture}[scale=0.6]
			\pgfmathsetmacro{\a}{9};  %outer cube side length
			\pgfmathsetmacro{\b}{\a/3};  %inner cube side length
			\pgfmathsetmacro{\xo}{\a/9};  %outer cube stereo perspective increment on x axis
			\pgfmathsetmacro{\yo}{\a*2/9};  %outer cube stereo perspective increment on y axis
			\pgfmathsetmacro{\xi}{\a*2/27};  %inner cube stereo perspective increment on x axis
			\pgfmathsetmacro{\yi}{\a*4/27};  %inner cube stereo perspective increment on y axis
			%\pgfmathsetmacro{\a}
			
			% outer cube nodes
			\node [circle, draw, color = black, fill = red, fill opacity = 1, minimum size=38pt, label = {[red] left: \large Unseen $S$}] (1111) at (0,\a) {1111};
			\node [circle, draw, color = black, fill = blue, fill opacity = 0.5, text opacity = 1, label = { left: \large $S_2$}] (1101) at (0,0) {1101};
			\node [circle, draw, color = black, fill = blue, fill opacity = 0, text opacity = 1, label = { right: \large $S_4$}] (1100) at (\a,0) {1100};
			\node [circle, draw, color = black, fill = blue, fill opacity = 0.6, text opacity = 1, label = { below right: \large $S_1$}] (1110) at (\a,\a) {1110};
			\node (1001) at (\xo, \yo) {1001};
			\node (1000) at (\a+\xo, \yo) {1000};
			\node (1011) at (\xo, \a+\yo) {1011};
			\node (1010) at (\a+\xo,\a+\yo) {1010};
			
			% inner cube nodes
			\node (0101) at (\b,\b) {0101};
			\node (0100) at (\b*2, \b) {0100};
			\node (0111) at (\b, \b*2) {0111};
			\node (0110) at (\b*2, \b*2) {0110};
			\node (0001) at (\b+\xi,\b+\yi) {0001};
			\node (0000) at (\b*2+\xi, \b+\yi) {0000};
			\node [circle, draw, color = black, fill = blue, fill opacity = 0.2, text opacity = 1, label = { above: \large $S_3$}] (0011) at (\b+\xi, \b*2+\yi) {0011};
			\node (0010) at (\b*2+\xi, \b*2+\yi) {0010};
			
			% draw lines of outer cube
			\draw[black] (1111) -- (1110) -- (1100) -- (1101) -- (1111);
			\draw[black] (1011) -- (1010) -- (1000) -- (1001) -- (1011);
			\draw[black] (1111) -- (1011);
			\draw[black] (1110) -- (1010);
			\draw[black] (1101) -- (1001);
			\draw[black] (1100) -- (1000);
			
			% draw lines of inner cube
			\draw[black] (0111) -- (0110) -- (0100) -- (0101) -- (0111);
			\draw[black] (0011) -- (0010) -- (0000) -- (0001) -- (0011);
			\draw[black] (0111) -- (0011);
			\draw[black] (0110) -- (0010);
			\draw[black] (0101) -- (0001);
			\draw[black] (0100) -- (0000);
			
			% draw lines between outer and inner cubes
			\draw[black] (1111) -- (0111);
			\draw[black] (1101) -- (0101);
			\draw[black] (1110) -- (0110);
			\draw[black] (1100) -- (0100);
			\draw[black] (1001) -- (0001);
			\draw[black] (1000) -- (0000);
			\draw[black] (1011) -- (0011);
			\draw[black] (1010) -- (0010);
			
		\end{tikzpicture}}
		\label{fig:PNN_frequency}
	\end{figure}
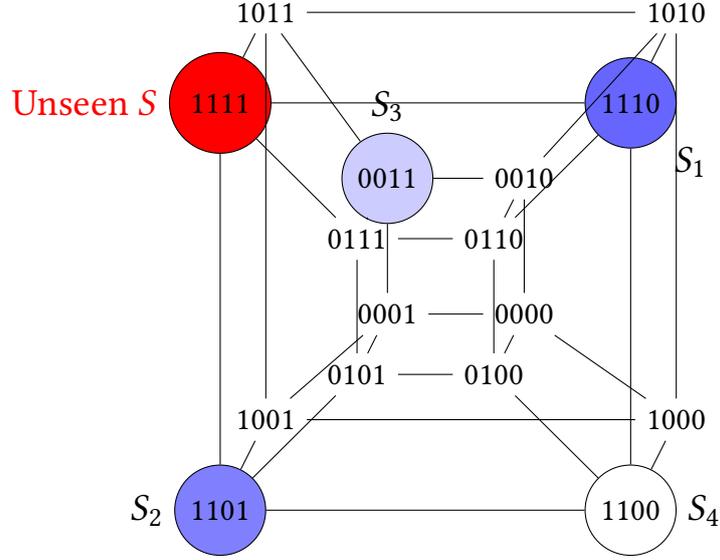
\end{example}

\begin{example}\label{exmp:PNN_frequency_table}
	To provide a more concrete example, we consider $N=10$ and only allow $100$ out of $2^{10}=1024$ assortments to be included in the training data.
	After generating $T=1000$ samples for the training data using the MNL model,
	we use Algorithm~\ref{alg:rf} with $B = 1000, z = T, m = \sqrt{N}, l = 1$ to predict the choice probability of the unseen assortment $S = \{1,2,3,4,5,6\}$.
	The 10 most frequent assortments in the training data that fall into the same leaf node as $S$ are listed in Table~\ref{tab:PNN_result} as well as their frequencies.
	We also count the number of different products between the assortments and $S$, in terms of the symmetric difference of two sets.
	Clearly, the frequency is strongly negatively associated with the number of different products, a distance measure of two assortments.
	\begin{table*}[t]
		\centering
		\caption{The frequency at which the assortments in the training data are used to predict the unseen assortment $S=\left\{1,2,3,4,5,6\right\}$ for $N=10$.}
	\def\arraystretch{0.8}\begin{tabular}{crr}
				\hline
				Assortment & \#Different products & Frequency \\
				\hline
				$\{1,2,3,4,5\}$  &  1  & 0.328\\
				$\{1,2,3,4,5,6,8\}$  &  1  & 0.202\\
				$\{1,2,3,6\}$  &  2  &  0.173\\
				$\{1,2,4,6\}$  &  2  & 0.097\\
				$\{1,2,4,5,6,7,10\}$  &  3  &  0.074\\
				$\{1,3,4,6,10\}$  &  3  &  0.051\\
				$\{2,3,4,5,6,9\}$  & 2 & 0.038\\
				$\{1,3,4,6,7\}$  & 3 & 0.032\\
				$\{1,2,5,6,9,10\}$  & 4 & 0.004\\
				$\{1,4,5,6,9\}$  &  3  & 0.001\\
				\hline
			\end{tabular}
			\label{tab:PNN_result}
			\vspace{-4mm}
	\end{table*}
\end{example}
The examples reveal the intrinsic connection between random forests and nearest neighbors.
Namely, if an assortment is not offered in the training data, then random forests would look for its neighboring assortments in the training data, by grouping them in the same leaf node, to predict the choice probabilities of the unseen assortment.
Unlike nearest neighbors, random forests don't always find the nearest one, as the chosen neighbor is determined by layers of mechanisms such as splitting and randomization.
Arguably, this difference leads to the improved performance of random forests, as well as its intractability.

{\revise
Based on this perspective, three crucial factors determine the performance of random forests when predicting unseen assortments:
\begin{enumerate}
	\item How ``far'' is the unseen assortment to the neighboring assortments in the training data? If all the assortments in the training data include many different products from the unseen assortment, then the extrapolation may not perform well.
	This is a property regarding what assortments are offered in the training data and the unseen assortment.
	\item How ``continuous'' is the DCM to be estimated? The information of the neighbors is not helpful if the choice probabilities vary wildly as the assortment ``travels'' to the neighbors.
	This is a property regarding the underlying DCM that generates the choice of the training data.
	\item How representative are the choices made by customers? To extrapolate to the unseen assortment, the estimated choice probabilities of the assortments offered in the training data need to be accurate.
	This is determined by the number of samples for each assortment.
\end{enumerate}

Next, we analyze the impact of the three factors on the performance of random forests.

\subsubsection{Distance to Neighbors in the Training Data}\label{sec:distance}
For two assortments (subsets) $S_1$ and $S_2$, the \emph{symmetric difference} is defined as $ S_1 \ominus S_2 \triangleq S_1 \cup S_2 - S_1 \cap S_2 = (S_1 - S_2) \cup (S_2-S_1)$.
We define the distance between $S_1$ and $S_2$, $d(S_1,S_2)$, by the cardinality of the symmetric difference:
\begin{equation}\label{eq:distance}
	d(S_1,S_2) \triangleq |S_1 \ominus S_2|.
\end{equation}
The distance defined in equation \eqref{eq:distance} is the number of different products mentioned in Examples~\ref{exmp:PNN_frequency} and \ref{exmp:PNN_frequency_table}.

We first attempt to study what assortments in the training data can potentially be a neighbor to the unseen assortment, i.e., they fall into the same leaf node in some trees in the forest.
Let $\mathcal{T} \subseteq 2^{[N]}$ denote the set of assortments observed in training data.
%Next we will define the \emph{dominance} relation of two assortments for given training set and an arbitrary unseen assortment.
For an unseen assortment $S$ and an assortment $S_1\in \mathcal T$, we define the concept of \emph{Potential Nearest Neighbor} (PNN). {\revise Let $\ell$ denote the maximum number of distinct assortments in a leaf node ($\ell \ge 1$). Note that $\ell$ differs from the leaf node size $l$, since multiple observations may correspond to the same assortment. Roughly speaking, $l = \ell \times \#$ observations in each assortment.}
%\begin{definition}
%	For an unseen assortment $S \notin \mathcal{T}$ and $S_1, S_2 \in \mathcal{T}$, we say $S_1$ dominates $S_2$ for $S$, denoted by $S_1 \succ^{(S)} S_2$ if and only if $S \ominus S_1 \subset S \ominus S_2$, and in this case we have $d(S, S_1) \leq d(S, S_2)$.
%\end{definition}
%
%Dominance in the above definition means that $S_1$ is strictly more similar to $S$ than $S_2$. Notice that two assortments may not dominate each other. In Example \ref{exmp:PNN_frequency}, $S_1$ and $S_2$ dominate $S_4$, but $S_1$ and $S_2$ cannot dominate each other.
\begin{definition}
	\label{def: PNN}
	Given the maximum number of distinct assortments $\ell \ge 1$, an assortment $S_1\in \mathcal T$ is an $\ell$-PNN of $S\notin \mathcal T$ if $\#\{S_2\in \mathcal T: S \ominus S_2 \subsetneq S \ominus S_1\} \le \ell-1$.
\end{definition}
In other words, $S_1$ is an $\ell$-PNN of $S$ if at most $\ell-1$ assortments observed in the training data
dominate $S_1$ in terms of the similarity to $S$ at the product level.
% The concept is closely related to the samples fall into the same terminal leaf node as $S$.
%The PNN definition is similar to $k$-PNN in the  random forest literature for a general classification or regression problem on $[0,1]^N$. Readers of interest are referred to  for detailed and more general definition.  In our problem, we restrict to corner points and 1-PNN is considered as we assume $l = 1$.
\begin{proposition}
	\label{prn: RF_PNN}
	\Copy{rev:prn1}{Suppose the the maximum number of distinct assortments is $\ell$ in Algorithm~\ref{alg:rf} ($\ell \ge 1$). An unseen assortment $S \notin \mathcal{T}$ and an assortment $S_i \in \mathcal{T}$ can fall into the same terminal leaf node for some trees if and only if $S_i$ is an $\ell$-PNN of $S$.}
\end{proposition}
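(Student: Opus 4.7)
The plan rests on a simple structural observation: as an input $\bm x$ traverses a binary choice tree from root to a terminal leaf, a split on product $j$ sends $\bm x_S$ and $\bm x_{S_i}$ to the same child if and only if $\bm x_S(j) = \bm x_{S_i}(j)$, i.e., $j \notin S \ominus S_i$. Hence $S$ and $S_i$ reside in the same terminal leaf exactly when every split along the root-to-leaf path is on a product outside $S \ominus S_i$. I would combine this with the consequence of $l = 1$: a node stops being split only when it cannot be split further, so each terminal leaf is occupied by samples from a single training assortment.

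For the ``only if'' direction, I would fix a tree placing $S$ and $S_i$ in the same leaf $L$. By leaf-purity, the unique training assortment in $L$ is $S_i$. Pick any $S_k \in \mathcal{T}$ with $S_k \neq S_i$; since $S_k$'s samples are not in $L$, they were routed off the path at some split on a product $j^{*}$. The split separating $S_i$ from $S_k$ forces $j^{*} \in S_i \ominus S_k$, while keeping $S$ with $S_i$ forces $j^{*} \notin S \ominus S_i$. Chasing $\bm x_S(j^{*}) = \bm x_{S_i}(j^{*}) \neq \bm x_{S_k}(j^{*})$ then yields $j^{*} \in (S \ominus S_k) \setminus (S \ominus S_i)$, so $S \ominus S_k \not\subset S \ominus S_i$. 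Since $S_k$ was arbitrary, $S_i$ satisfies Definition~\ref{def: PNN} and is a PNN of $S$.

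For the ``if'' direction, I would assume $S_i$ is a PNN and, for each $S_k \in \mathcal{T} \setminus \{S_i\}$, use the PNN condition to choose a witness $j_k \in (S \ominus S_k) \setminus (S \ominus S_i)$. I would then build a tree that splits, in any order, on the $j_k$'s along the branch taken by $\bm x_{S_i}$. Each such split keeps $\bm x_S$ and $\bm x_{S_i}$ together (since $j_k \notin S \ominus S_i$) while peeling the samples of $S_k$ into the sibling branch (since $\bm x_{S_k}(j_k) \neq \bm x_{S_i}(j_k)$). After at most $|\mathcal{T}| - 1$ splits, only $S_i$'s samples remain in the branch, and $\bm x_S$ terminates in the same leaf as $S_i$. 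To close the argument, I would verify that this tree is produced by Algorithm~\ref{alg:rf} with strictly positive probability: at each node, the needed feature $j_k$ is in the random subset of $m$ features with probability at least $m/N > 0$, and strictly reduces the Gini impurity (by isolating one more assortment), so it can be selected as the Gini-optimizer.

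The main obstacle is the realizability check in the ``if'' direction. Random feature selection together with the Gini tie-breaking rule could, a priori, prefer a different feature over $j_k$ at some internal node, derailing the intended construction. The argument thus needs a careful bookkeeping of Gini values before and after the desired split, together with the independence of feature selections across nodes, to establish that the sequence of ``good'' random draws occurs with strictly positive probability. An auxiliary subtlety is the legitimacy of the leaf-purity claim used in necessity: one has to argue that, with $l = 1$, the ``can be split'' stopping condition is only activated when all samples in the leaf share the same $\bm x$, which is immediate from the binary structure of the feature space but worth stating explicitly.
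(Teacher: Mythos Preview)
Your argument is correct and shares the same structural insight as the paper, but the two proofs diverge in the ``if'' construction and in how much they worry about realizability within Algorithm~\ref{alg:rf}.

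For the ``only if'' direction you argue directly while the paper argues by contrapositive: if some $S_j$ satisfies $S\ominus S_j\subsetneq S\ominus S_i$, then every split keeping $S$ and $S_i$ together (necessarily on a product outside $S\ominus S_i$, hence outside $S\ominus S_j$) also keeps $S_j$, so the $l=1$ purity of terminal leaves prevents $S_i$ from ever being the sole occupant. These are logically equivalent.

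For the ``if'' direction the paper's construction is more economical than yours: rather than selecting one witness $j_k$ per competitor $S_k$, it simply splits on \emph{every} product in $[N]\setminus(S\ominus S_i)$. The leaf containing $S$ then consists of exactly those assortments $S'$ with $S\ominus S'\subseteq S\ominus S_i$; by the PNN definition the only training assortment in this set is $S_i$ itself. This avoids your bookkeeping of witnesses entirely.

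Your realizability discussion (whether Algorithm~\ref{alg:rf}, with its Gini-driven choices, can actually output such a tree with positive probability) is a concern the paper does not address: its proof simply exhibits a tree and treats ``for some trees'' as an existential statement about binary decision trees satisfying the $l=1$ termination rule. Under that reading your Gini bookkeeping is unnecessary. If instead one insists on realizability by the algorithm, your observation that the desired feature need only be Gini-optimal among the $m$ randomly selected ones is the right starting point, and the paper's simpler construction helps here too: along the path of $S$, one can condition on the event that at each node the random subset of $m$ features is drawn entirely from $[N]\setminus(S\ominus S_i)$ (or, when $m$ exceeds that set's remaining size, that any selected feature from $S\ominus S_i$ yields a trivial split and is discarded), which forces every chosen split to keep $S$ and $S_i$ together regardless of Gini comparisons.
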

% \begin{remark}\label{rmk:size-one-node-pnn}
% We study the case $l=1$ for the simplicity of the analysis and to provide technical insights.
% For $l>1$, the condition for PNN is much more involved.
% Moreover, $l=1$ is a popular choice for classification problems \citep{biau2016random}.
% \end{remark}
%\todo{Add a remark about when $l\neq 1$. The proof needs to be reworked as I changed some notations.}
%\textbf{Remark:} When the leaf node size $l>1$, then $S_1$ is a PNN if $\# \{S_2 \in \mathcal{T}: S \ominus S_2 \subsetneq S \ominus S_1\} \leq l-1$.

Based on Proposition~\ref{prn: RF_PNN}, the distance of an unseen assortment $S$ to the training data is essentially the distance to its $\ell$-PNNs.
We next provide an estimate of this distance when a fraction of assortments are observed in the training data.
In particular, we show that for all $\ell$-PNNs of an unseen assortment to be close, the training set only needs to have $\Big\lceil2^{N+2} \cdot c_0 \cdot \lceil\log_2 N\rceil \cdot \log N / (N-2)\Big\rceil$ assortments for constant $c_0 \ge 1$.
%\todo{I think we used $\log$ for $\ln$ in some proofs. Need to make consistent.}

%We will next consider when only a certain fraction $\epsilon:= |\mathcal{T}| / 2^N$ {\color{blue} ($\epsilon$ is also used in Theorem \ref{thm:consistency}, do we need a new notation?)} of assortments are observed in training set, the symmetric distance between an unseen assortment and its leaf node by random forest algorithm in three different cases.

%We consider the number of assortments in training set such that neighbor in each tree of an unseen assortment is within distance $r$.
%The following theorem shows that if each assortment is observed with certain frequency, then for an unseen assortment, neighbors in every tree is within distance $O(\log N)$ with high probability.
\begin{proposition}
	\label{prop:distance_worst_case}
	\Copy{rev:prn2}{Suppose $M = \Big\lceil2^{N+2} \cdot c_0 \cdot \lceil\log_2 N\rceil \cdot \log N / (N-2)\Big\rceil$ assortments are drawn randomly with replacement as the training data and set $\ell \le c_0\log_2 N$ in Algorithm~\ref{alg:rf}, where $c_0 \ge 1$ is a constant.
	For an arbitrary assortment $S$, with probability no less than $1 - 1/(\lceil \log_2 N \rceil)!$,\footnote{We can show that $(\lceil \log_2 N \rceil)!>\lceil\log_2 N\rceil \cdot (\lceil\log_2 N\rceil -1)\cdot \ldots \cdot (\lceil\log_2 N /2\rceil) > (\log_2 N / 2)^{\log_2 N/2}$, so all bounds hold when we replace $\lceil\log_2 N\rceil!$ by $(\log_2 N/2)^{\log_2 N/2}$.}
	the distance $d(S,S')\leq \lceil\log_2 N\rceil-1$ for all PNNs $S'$.}
\end{proposition}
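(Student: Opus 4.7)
Let $k := \lceil \log_2 N \rceil$ and $n := \lceil 2^N k \log N / (N-2) \rceil$. My plan is to bound the probability of the ``bad'' event by a union bound over size-$k$ subsets of $[N]$, after a small reduction.

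First, I would prove the following sufficient condition via the contrapositive of Definition~\ref{def: PNN}: if for every $A \subseteq [N]$ with $|A| = k$ there exists some $B \in \mathcal{T}$ with $S \ominus B \subsetneq A$, then every PNN $S'$ of $S$ satisfies $d(S,S') \leq k-1$. Indeed, suppose by contradiction $S' \in \mathcal{T}$ is a PNN with $d(S,S') \geq k$. Let $A_0 := S \ominus S'$ and pick any $A \subseteq A_0$ with $|A| = k$. By assumption there is $B \in \mathcal{T}$ with $S \ominus B \subsetneq A$; since $|S \ominus B| < k \leq |A_0|$, we have $S \ominus B \subsetneq A_0 = S \ominus S'$, which contradicts $S'$ being a PNN.

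Next, I would estimate $\Pr[E^c]$, where $E$ is the event that the sufficient condition holds. Fix $A \subseteq [N]$ with $|A| = k$. The number of $B \subseteq [N]$ satisfying $S \ominus B \subsetneq A$ equals the number of proper subsets of $A$, namely $2^k - 1$ (each proper subset $A' \subsetneq A$ corresponds bijectively to $B = S \ominus A'$). Since the $n$ assortments are drawn i.i.d.\ uniformly from $2^{[N]}$, the probability that no sampled $B$ falls in this set of size $2^k - 1$ is
\begin{equation*}
\left(1 - \frac{2^k - 1}{2^N}\right)^n \leq \exp\!\left(-\frac{n(2^k - 1)}{2^N}\right).
\end{equation*}
A union bound over $\binom{N}{k}$ size-$k$ subsets, combined with $\binom{N}{k} \leq N^k / k!$, yields
\begin{equation*}
\Pr[E^c] \leq \frac{N^k}{k!}\exp\!\left(-\frac{n(2^k - 1)}{2^N}\right).
\end{equation*}

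Finally, I would close the numerics using $2^k \geq N$, which gives $2^k - 1 \geq N - 1 \geq N - 2$. Plugging in the chosen $n$,
\begin{equation*}
\frac{n(2^k - 1)}{2^N} \geq \frac{2^N k \log N}{N-2} \cdot \frac{N-2}{2^N} = k \log N,
\end{equation*}
so $\exp(-n(2^k-1)/2^N) \leq N^{-k}$ and therefore $\Pr[E^c] \leq 1/k!$, which is exactly the claimed failure probability.

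I do not anticipate a genuine obstacle; the only points requiring care are (i) the reduction from ``all $|A| \geq k$'' to ``all $|A| = k$'', which must invoke the strict-subset form of Definition~\ref{def: PNN}, and (ii) correctly counting the $2^k - 1$ good $B$'s as proper-subset images of $A$ under $A' \mapsto S \ominus A'$ rather than, e.g., $2^k$ or $N^k$.
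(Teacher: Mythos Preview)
Your proof is correct and is in fact cleaner than the paper's own argument. The paper proceeds differently: it fixes a distance $r\ge k:=\lceil\log_2 N\rceil$, applies a union bound over all $\binom{N}{r}$ assortments at distance $r$ from $S$, bounds the probability that a given $S'$ at distance $r$ is a PNN by $(1-(2^r-2)/2^N)^M$, and then sums the resulting bounds over $r\ge k$ via a geometric-series comparison of successive terms $\PR_{r+1}/\PR_r$. Your reduction---observing that if every size-$k$ set $A$ is strictly dominated by some sampled $B$ then no PNN can sit at distance $\ge k$---collapses this sum to a single union bound over $\binom{N}{k}$ sets and removes the need for the geometric tail argument entirely. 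A minor additional difference is that you count $2^k-1$ ``good'' $B$'s (including $B=S$) rather than the paper's $2^r-2$; this is harmless because if the draw ever hits $S$ itself then $S\in\mathcal T$ and the PNN claim is vacuous. The paper's route does yield per-distance information (a bound on the chance of a PNN at each radius $r$), which could be useful elsewhere, but for the stated proposition your argument is shorter and loses nothing.
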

In other words, if a fraction $O(\log(N)^2/N)$ of assortments appear in the training set and $\ell = O(\log N)$, then the distance of $S$ to its $\ell$-PNNs is guaranteed (in a probabilistic sense) to be of order $O(\log N)$.
However, this is a strong condition.
% for \emph{all the trees} in the forest, the assortment $S'$ that falls into the same terminal leaf node as $S$ is close to, or equivalently, has a similar set of products to $S$.
% It guarantees the choice probabilities of $S'$ can predict those of $S$ reasonably well.
In practice, even if there are a few trees that predict $S$ by a PNN that is ``far'' from the unseen assortment (including a very different set of products),
random forests are able to mitigate the bias by averaging them with other closer trees.
This is one of the reasons why random forests perform much better than CARTs in general.

To explore such an effect, we explore the \emph{expected} distance of a PNN.
To simplify the analysis, we consider a stylized splitting rule called random splitting,
that is, $m=1$ in Algorithm~\ref{alg:rf}.
Random splitting is commonly used to shed theoretical insights about the performance of random forests \citep{biau2016random}.
It also satisfies the ``honest tree'' assumption in \cite{wager2014asymptotic, wager2018estimation}.
%In this case, we assume the random forest algorithm adopts randomly split, i.e., randomly select a product as split point instead of selecting the product with greatest reduction on Gini index in each iteration.
%Following \cite{wager2014asymptotic, wager2018estimation},
%%\cite{meinshausen2006quantile, biau2008consistency, biau2012analysis}
%we also consider a special case of \emph{honest} tree to reduce bias.
%A tree is honest if it does not use labels for both determining split points and making nearest neighbor  prediction.
%Randomly split is a special case of honest tree because no label is used for splitting.
%CART is considered dishonest since training labels are used for both prediction and splitting rule.
%Although CART induces subtle bias, CART has a good performance in most scenarios.
We can derive the following result.
\begin{proposition}
	\label{thm:dis_upper_bound}
	\Copy{rev:prn3}{Suppose we draw $M = \lceil 2^N / N\rceil$ assortments with replacement as the training data and set $m=1$ in Algorithm~\ref{alg:rf}.
	For an arbitrary assortment, its expected distance to an $\ell$-PNN in the training data is bounded above by
	$\log_2 \ell +\log_2 N + 2.56$, where $\ell$ is the maximum number of distinct assortments in a leaf node.}
\end{proposition}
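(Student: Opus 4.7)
The plan is to reduce the problem to bounding the expected popcount (number of $1$ bits) of a minimum-of-uniforms random variable, then estimate this expectation bit-by-bit via two complementary bounds and split the sum at a carefully chosen threshold.

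Following the reduction in the paragraph preceding the proposition, I may WLOG fix the unseen assortment to $S=(1,\ldots,1)$ and the random splitting order to the permutation $1,2,\ldots,N$. Under those reductions and with $l=1$, the PNN of $S$ is precisely the training assortment whose binary encoding attains the maximum value $X_{\max}$ in $\{0,1,\ldots,2^N-1\}$. Since the $M=\lceil 2^N/N\rceil$ samples are i.i.d.\ uniform on $\{0,\ldots,2^N-1\}$, the expected distance from $S$ to the PNN equals $\E[\text{number of zero bits in }X_{\max}]=\E[\mathrm{popcount}(W)]$, where $W\defeq 2^N-1-X_{\max}$ is, by the symmetry $X_i\mapsto 2^N-1-X_i$, the minimum of $M$ i.i.d.\ uniforms on $\{0,\ldots,2^N-1\}$. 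This yields the closed-form tail $\PR(W\ge w)=(1-w/2^N)^M$ for $w\ge 1$.

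Next, I decompose $\E[\mathrm{popcount}(W)]=\sum_{j=0}^{N-1}\PR(b_j(W)=1)$, where $b_j(W)\in\{0,1\}$ is the $j$-th bit of $W$, and establish two bounds on each term. The \emph{tail bound} is $\PR(b_j(W)=1)\le \PR(W\ge 2^j)\le e^{-M\cdot 2^j/2^N}\le e^{-2^j/N}$, obtained from $1-x\le e^{-x}$, $M\ge 2^N/N$, and the implication $W<2^j\Rightarrow b_j(W)=0$. The \emph{symmetry bound} is $\PR(b_j(W)=1)\le 1/2$: the PMF $p(w)=\PR(W=w)$ is non-increasing (it integrates the decreasing density $M(1-t)^{M-1}$ over unit-length intervals that shift to the right in $w$), so each ``bit-$j$-equals-$1$'' interval $[(2a+1)2^j,(2a+2)2^j)$, being a $2^j$-shift of the immediately preceding ``bit-$j$-equals-$0$'' interval, carries no more probability. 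Setting $K\defeq\lceil\log_2 N\rceil$ so that $2^K/N\ge 1$, and applying the symmetry bound for $j<K$ and the tail bound for $j\ge K$ yields
\begin{equation*}
\E[\mathrm{popcount}(W)] \;\le\; \tfrac{K}{2} \;+\; \sum_{j=K}^{N-1} e^{-2^j/N} \;\le\; \tfrac{K}{2} \;+\; \sum_{k=0}^{\infty} e^{-2^k},
\end{equation*}
using $2^j/N\ge 2^{j-K}$ in the last step. A short case check confirms $K/2=\lceil\log_2 N\rceil/2\le\lceil\log_2(N)/2\rceil$, so only the doubly-exponential tail needs to be controlled.

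The main obstacle is recovering the precise constant $a\le 0.455$: the direct telescoping above gives $\sum_{k\ge 0}e^{-2^k}\approx 0.522$, which is slightly too loose. To close the gap, the estimates at the transitional indices $j\approx K$ (where $e^{-2^j/N}$ and $1/2$ are of the same order) should be combined by taking the pointwise minimum $\min(1/2,e^{-2^j/N})$, or equivalently by exploiting the strict convexity of $p(w)\propto(1-w/2^N)^{M-1}$ to push $\PR(b_j(W)=1)$ strictly below $1/2$ for those borderline indices. The refinement is elementary, but its careful bookkeeping is what yields the exact numerical constant claimed.
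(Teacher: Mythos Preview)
Your reduction and bit-by-bit decomposition are correct and yield a genuinely different argument from the paper's. The paper does not decompose by individual bits; instead it conditions on $Z$, the number of leading ones in $X_{\max}$, and bounds the remaining $N-Z-1$ bits by the crude single-sample estimate $r(N-Z-1,1)=(N-Z-1)/2$. After summing, this produces
\[
r(N,M)\;\le\;\tfrac{1}{2}\Bigl[\sum_{i=1}^{N}(1-2^{-i})^{M}+(1-2^{-N})^{M}\Bigr],
\]
i.e.\ a factor $\tfrac12$ in front of \emph{every} tail term $(1-2^{-i})^{M}\le e^{-2^{N-i}/N}$, not just the low-index ones. Your approach recovers the $\tfrac12$ only for the low bits (via the monotone-PMF symmetry), while the high bits carry full weight $e^{-2^{j}/N}$, which is why you land at $\sum_{k\ge 0}e^{-2^{k}}\approx 0.522$ rather than $0.455$.

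The gap is therefore real, and your proposed fix does not close it. Taking $\min(\tfrac12,e^{-2^{j}/N})$ at the transitional indices is exactly the split you already performed, so it yields no further gain. Strict convexity of the PMF does push $\PR(b_{j}(W)=1)$ below $\tfrac12$ for $j<K$, but the shortfall you need is on the \emph{other} side of the threshold: for $j\ge K$ one cannot bound $\PR(b_{j}(W)=1)\le\tfrac12\PR(W\ge 2^{j})$ by monotonicity alone, since conditional on $W\ge 2^{j}$ the first block (where $b_{j}=1$) actually carries the most mass. Recovering the paper's constant along your route would require a sharper per-bit estimate such as $\PR(b_{j}(W)=1)\approx (1+e^{2^{j}/N})^{-1}$, which is obtainable from the exponential approximation $W\approx N\cdot\mathrm{Exp}(1)$ but needs its own error analysis; as written, the proof stops at $0.522$.
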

Although the order of magnitude seems to be similar to Proposition~\ref{prop:distance_worst_case}, i.e.,
with $\tilde{O}(2^N/N)$ assortments\footnote{The notation $\tilde O$ represents the asymptotic order neglecting the logarithmic factors.} and $\ell = O(\log_2(N))$, the distance is bounded 
by $\tilde{O}(\log_2(N))$,
the constants and logarithmic factors may play a role in explaining why random forests (the average distance) perform better than individual trees.
We have also conducted numerical studies showing the bound in Proposition~\ref{thm:dis_upper_bound} is more or less tight:
If $M$ grows at a slower rate, for example, polynomially in $N$ or $M=O(N^{\log N})$, then the average distance cannot be bounded by $O(\log_2(N))$. See Online Appendix~\ref{sec:distance_numerical}.

If the $M$ assortments in the training data are selected by the firm to minimize the distance to all unseen assortments, then how do the results change?
This is similar to the setting of the experimental design.
For this question, we refer to the literature on the so-called \emph{Covering Code} problem, see \cite{cohen1997covering, ostergaard1998new}.
\label{page:covering-code}
The Covering Code problem aims to find the minimum number of binary vectors in $\{0,1\}^N$, such that every other element in the set is within distance $r$ to some selected ones.
To our knowledge, the covering code problem is still an open problem, and numerous bounds are established.
In general, when $M = O(2^N / N)$, these binary vectors can cover all the binary vectors within distance 1.
This improves the distance in Proposition~\ref{thm:dis_upper_bound} from $\log_2(N)$ to $1$.
This literature may provide some useful heuristics and algorithms on how to design assortments in a new market in order to explore customers' choices efficiently.
		
We summarize the main results in this section below.
When $M =\Big\lceil2^{N+2} \cdot c_0 \cdot \lceil\log_2 N\rceil \cdot \log N / (N-2)\Big\rceil$ and $\ell \le c_0 \log_2 (N)$, where $c_0 \ge 1$, it is guaranteed that the distance of all PNNs for an unseen assortment to be less than $\lceil \log_2 N \rceil-1$ with high probability.
Under random splitting instead of the Gini index, then for $M = \lceil2^N/N\rceil$, the average distance is bounded by $\log_2 \ell +\log_2 N + 2.56$.
When the firm can design assortments to minimize the distance, $M = O(2^N / N)$ assortments are sufficient to guarantee that every other assortment has a PNN of distance 1. }

\subsubsection{Continuity of DCMs}\label{sec:dcm-continuity}
Having established bounds on the distance to PNNs of an unseen assortment, we next explore the continuity of DCMs.
The continuity is a crucial property for DCMs: the estimated choice probabilities for an assortment appearing frequently in the training data can be used to extrapolate an unseen assortment, only if choice probabilities do not vary significantly when the assortment changes slightly.
However, the notion of continuity is specific to DCMs and deviates from the literature on random forests.
First, the $\bm x$-space of our problem is not continuous and consists of extreme points of a hypercube.
Second, the ``$y$'' variable is a vector of choice probabilities.

To formalize the notion, define the following quantity between the choice probabilities of two assortments $S_1, S_2$ under a given DCM $p(i,S)$:
\begin{equation}
	\Phi(S_1, S_2) \triangleq \frac{\sum_{i \in [N]_+} \Big|p(i, S_1) - p(i, S_2)\Big|}{d(S_1,S_2)}.
\end{equation}
The quantity is similar to $(f(x)-f(y))/(x-y)$ for continuous functions: how close are their choice probabilities relative to the distance of $S_1$ and $S_2$?
If $\Phi(S_1, S_2)$ can be bounded, then the DCM is more or less ``Lipschitz continuous'' and the proximity of the unseen assortments to PNNs (results in Section~\ref{sec:distance}) leads to a good performance of random forests.
\begin{definition}\label{def:dcm-continuity}
	The DCM is $c$-continuous if for all $\emptyset \subsetneq S_1,S_2 \subset [N]$ and $S_1 \neq S_2$,
	\begin{equation}
		\label{eq:c-continuous}
		\Phi(S_1,S_2) \leq c / N.
	\end{equation}
\end{definition}
By the triangular inequality, it is sufficient to show a DCM is $c$-continuous if \eqref{eq:c-continuous} holds for all $S_1, S_2$ such that $d(S_1, S_2) = 1$.
If a DCM is $c$-continuous, then roughly speaking, the difference in the choice probabilities of two neighboring assortments with distance one is $c/N$.
Combining the results with Section~\ref{sec:distance}, if the distance between two assortments is $O(\log N)$ (e.g., Proposition~\ref{thm:dis_upper_bound}), then the error in the extrapolation of the choice probabilities is $O(\log N/N)$.
We next consider the continuity of popular DCMs.

\emph{The MNL model.}
Suppose $p(i,S)= v_i/(v_0+\sum_{j\in S}v_j)$ where $\{v_l\}_{l \in [N]_+}$ represents the attraction of the products.
Then for $\emptyset \subsetneq S \subsetneq [N]$ and $j \notin S$ we have that
\begin{equation*}
	\Phi(S, S \cup \{j\}) = \frac{2v_j}{v_0+v_j+\sum_{l \in S} v_l},
\end{equation*}
If the size of $S$ is $\epsilon N$, then it is easy to verify that $\Phi(S, S \cup \{j\})\le 2v_{\max}/(v_0+v_{\max}+\epsilon v_{min}N)$.

\emph{Rank-based DCMs. }
Suppose $\pi$ is a permutation of $[N]$ and $\pi(i)$ denotes the rank of product $i$ in $\pi$.
Define $\pi^*(S)\triangleq \argmin_{i \in S_+} \pi(i)$ to be the top choice in $\pi$ when $S$ is offered.
The rank-based model is represented by $w_{\pi}$, the weight of customers whose preference is consistent with $\pi$ in the population.
One can show that
\begin{equation*}
	\Phi(S, S \cup \{j\}) = \sum_{j = \pi^*(S \cup \{j\})} w_\pi.
\end{equation*}
If the fraction of customers who rank $j$ as the top choice in $S\cup \left\{j\right\}$ is small, then the DCM is more continuous according to Definition~\ref{def:dcm-continuity}.

\subsubsection{Sampling Error}\label{sec:sampling}
Another source of error stems from the empirical distribution used to estimate the choice probabilities of the assortments in the training data.
In the terminal leaf node, suppose a tree uses the choice probabilities of $S'$ to predict those of the unseen $S$.
Moreover, let $k$ denote the number of observations for $S'$ in the training data.
For each $i \in S \cap S'$, the frequency of customers choosing $i$ from the assortment $S'$ can be approximated by a normal distribution with mean $p(i,S')$ as the sample size increases.
The standard deviation is bounded by $\sqrt{p(i,S')(1-p(i,S'))/k} \leq 1/(2\sqrt{k})$.
Fortunately, the sampling error is more or less independent of the other two sources of errors articulated in Section~\ref{sec:distance} and~\ref{sec:dcm-continuity} and can be controlled using the standard concentration inequalities.
In particular, when an assortment has $O(|S|^2 \cdot N^2 / (\log N)^2)$ samples, the error in using the frequencies to approximate the choice probabilities is at most $O(\log N / (N \cdot |S|))$.

{\revise
\subsubsection{Combining the Errors}\label{sec:total-error}
In this section, we provide a unified bound combining the three sources mentioned above.
It provides a finite sample result for the performance of random forests predicting unseen assortment.
The error of prediction in the following theorem is defined as $\sum_{i \in [N]_+} \big|p(i, S) - \hat{p}(i, S)\big|$, where $S$ is the unseen assortment and $\hat{p}(i, S)$ is the estimated probability by random forests. 
\begin{theorem}
	\label{thm:total_error}
	\Copy{rev:thm2} {Suppose the DCM satisfies $c$-continuity and $\ell$ is the maximum number of distinct assortments in Algorithm~\ref{alg:rf}. $c_1 > 0$ is a constant.
	\begin{itemize}
		\item Suppose that $\ell \le c_0 \cdot \log_2 (N)$, where $c_0 \ge 1$ is a constant. If we draw $\Big\lceil2^{N+2} \cdot c_0 \cdot \lceil\log_2 N\rceil \cdot \log N / (N-2)\Big\rceil$ assortments with replacement in the training data and each assortment $S'$ has at least $\lceil\frac{N^3 \cdot (|S'|+1)^2}{c_1^2(\log_2 N)^2}\rceil$ transactions, then the error of predicting an unseen assortment using Algorithm~\ref{alg:rf} is bounded by $(c+c_1) (\log_2 N) / N$ with probability no less than $1-1/N-1/(\lceil \log_2 N \rceil)!$;
		\item If $m=1$ (random splitting) and we draw $\lceil 2^N / N\rceil$ assortments with replacement and each assortment $S' $ has at least $\lceil\frac{N^2 \cdot (|S'|+1)}{ c_1^2(\log_2 N)^2}\rceil$ transactions for each assortment in the training data, then the expected error of predicting an unseen assortment using Algorithm~\ref{alg:rf} is bounded by $\Big((c+c_1) \cdot \log_2 N + c \log_2 \ell + 2.56c\Big) / N$.
	\end{itemize}}
\end{theorem}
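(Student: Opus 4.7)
The plan is to decompose the prediction error at an unseen assortment $S$ into a \emph{bias} term coming from replacing $S$ by a PNN $S'\in\mathcal T$ in the training data, and a \emph{variance} term coming from estimating $p(\cdot,S')$ from only finitely many observations at $S'$. Since $l=1$, Proposition~\ref{prn: RF_PNN} guarantees that every tree in the forest predicts $S$ by returning a random observation drawn from some PNN $S'$ of $S$. Letting $B\to\infty$ and writing $\hat p_{\mathrm{emp}}(i,S')$ for the empirical frequency of class $i$ among transactions at $S'$, the forest prediction equals $\hat p(i,S)=\mathbb E_{S'}[\hat p_{\mathrm{emp}}(i,S')]$, where $\mathbb E_{S'}$ is expectation over the (data-dependent) distribution the algorithm induces on PNNs. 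Two applications of the triangle inequality then give
\begin{equation*}
\sum_{i\in[N]_+}\bigl|\hat p(i,S)-p(i,S)\bigr|\ \le\ \mathbb E_{S'}\!\Bigl[\sum_{i}\bigl|\hat p_{\mathrm{emp}}(i,S')-p(i,S')\bigr|\Bigr]\ +\ \mathbb E_{S'}\!\Bigl[\sum_{i}\bigl|p(i,S')-p(i,S)\bigr|\Bigr].
\end{equation*}

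For the \textbf{bias} term I would invoke $c$-continuity: for each realization of $S'$, the inner sum is $\Phi(S,S')\,d(S,S')\le c\,d(S,S')/N$. In part one, Proposition~\ref{prop:distance_worst_case} upper bounds $d(S,S')$ uniformly by $\lceil\log_2 N\rceil-1$ on an event of probability at least $1-1/(\lceil\log_2 N\rceil)!$, which gives a bias contribution of at most $c(\log_2 N)/N$. In part two I would invoke Proposition~\ref{thm:dis_upper_bound} to bound $\mathbb E[d(S,S')]\le \lceil\log_2 N/2\rceil+0.455\le \log_2 N/2+1.455$, which gives expected bias at most $(c/2)(\log_2 N)/N+1.455c/N$.

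For the \textbf{sampling} term the two parts call for different concentration tools. In part one I need a high-probability statement uniform over every assortment in $\mathcal T$ and every class. I would apply Hoeffding's inequality to each of the at most $|S'|+1$ choice probabilities of each $S'\in\mathcal T$, then union-bound over the $M=\lceil 2^N\lceil\log_2 N\rceil\log N/(N-2)\rceil=\tilde O(2^N)$ assortments and $N+1$ classes, taking the failure probability at level $1/N$. Since $\log M=O(N)$, achieving per-class error $c_1\log_2 N/\bigl(N(|S'|+1)\bigr)$ requires exactly $k=\lceil N^3(|S'|+1)^2/(c_1^2(\log_2 N)^2)\rceil$ transactions at $S'$, summing to at most $c_1\log_2 N/N$ per assortment. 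Combining with the bias event via a final union bound yields the claimed probability $1-1/N-1/(\lceil\log_2 N\rceil)!$. In part two I only need an \emph{expected} bound, so I would use the standard multinomial $\ell_1$ concentration $\mathbb E[\sum_i|\hat p_{\mathrm{emp}}(i,S')-p(i,S')|]\le\sqrt{(|S'|+1)/k}$ (which is tighter than summing Hoeffding bounds); substituting the stated $k=\lceil N^2(|S'|+1)/(c_1^2(\log_2 N)^2)\rceil$ gives sampling error at most $c_1\log_2 N/N$ directly, and adding the bias contribution produces $((c/2+c_1)\log_2 N+1.455c)/N$.

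The main obstacle is the sampling-error analysis: the loose route (per-coordinate Hoeffding summed over $|S'|+1$ classes) would demand a sample size scaling like $(|S'|+1)^2$ and is exactly what is needed for the high-probability bound of part one after absorbing the $\log M=O(N)$ from the union bound into the $N^3$ factor, whereas part two's sharper $\sqrt{|S'|+1}$-type bound requires the multinomial $\ell_1$ inequality rather than a naive union bound. A secondary subtlety is treating the forest as an average over the tree-induced distribution on PNNs so that Jensen/triangle inequalities pass the bias and sampling bounds through the outer expectation $\mathbb E_{S'}$; once that is set up, everything else is a careful bookkeeping of the three Propositions established in Sections~\ref{sec:distance}--\ref{sec:sampling}.
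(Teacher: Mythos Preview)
Your approach is correct and matches the paper's structure: decompose into a bias term (handled by $c$-continuity together with Propositions~\ref{prop:distance_worst_case} and~\ref{thm:dis_upper_bound}) and a sampling term, then combine by union bound or linearity of expectation. The only notable difference is in part one's sampling bound. The paper uses Chebyshev's inequality per product, exploiting the identity $\sum_{i\in S^*\cup\{0\}}p(i,S^*)=1$: each coordinate fails with probability at most $p(i,S^*)/N$, so a union bound over the $|S^*|+1$ products already gives total failure probability $\sum_i p(i,S^*)/N=1/N$, with no need to union-bound over the $M$ assortments. Your Hoeffding-plus-union-bound-over-$M$ route also succeeds (with the stated $k$ one gets $2kt^2=2N$, which comfortably dominates $\log M=O(N)$, yielding failure probability $e^{-\Omega(N)}\ll 1/N$), and is arguably more careful about the data-dependence of the PNN $S^*$ under Gini splitting; the paper implicitly treats $S^*$ as fixed. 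For part two your multinomial $\ell_1$ bound $\sum_i\sqrt{p(i,S^*)/Q}\le\sqrt{(|S^*|+1)/Q}$ is exactly what the paper does.
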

Roughly speaking, when the maximum number of distinct assortments in a leaf node is $O(\log N)$, the number of assortments in the training data is $\tilde O(2^N/N)$
and the transactions of each assortment is $O(N^2)$,
the estimation error of random forests is $\tilde O(1/N)$.
In Theorem \ref{thm:total_error}, we assume that only one assortment from the training data is used to predict an unseen assortment. However, if there are $\ell' \le \ell$ distinct assortments in the same leaf node, the error bound still holds, and the number of transactions required for each assortment $S'$ is reduced to $1/ \ell'$ of that required in Theorem \ref{thm:total_error}.
Note that the result only provides an upper bound for the error and we have seen much better performance of random forests in practice. 

\label{page:thm2_compare}
\Copy{rev:thm2_compare}{ 
\cite{chen2019decision} use mixed-integer programming to show that decision forests can represent historical assortments, whereas Theorem \ref{thm:total_error} analytically establishes an error bound for \emph{unseen} assortments.
Theorem \ref{thm:total_error} differs from existing random forests literature in several key aspects: (1) prior work studies regression for continuous $\bm{x} \in [0,1]^d$, whereas we address classification for discrete $\bm{x} \in \{0,1\}^N$; (2) instead of assuming continuity, positive density, and honest trees, we introduce $c$-continuity tailored to DCMs; (3) we bound prediction error for unseen assortments in classification, $\sum_{i \in [N]_+} |p(i, S) - \hat{p}(i, S)|$, while prior work focuses on MSE for regression; (4) methodologically, we exploit the combinatorial structure of DCMs and apply concentration inequalities to handle the discrete setting. }
}

\subsection{Splitting Criteria and the Recovery of Rank-based DCMs} \label{sec:gini_rank_list}
%{\revise \Copy{rev:sec3.3}{In this section, we examine the theoretical performance of different splitting criteria for classification and highlight the interpretability of the random forest algorithm for a special case of DCMs, showing how it can capture the underlying customer decision-making process.}}
In this section, we discuss a few commonly used splitting criteria for Step~\ref{step:criterion} in Algorithm~\ref{alg:rf} and their implications in a theoretical framework.
	Suppose a parent node with dataset $\mathcal{D}$ splits into $K$ child nodes $\{\mathcal{D}_k\}_{k=1}^K$ along dimension $i$, corresponding to $K$ regions $\{R_k\}_{k=1}^K$, i.e., $\mathcal{D}_k = \{(i_t, \bm x_t): \bm x_t \in R_k\}$.
	In Algorithm~\ref{alg:rf}, we choose $K=2$ and $i$ is the product on which the split is performed, thus grouping observations based on whether product $i$ is included in the assortment.
	With a slight abuse of notation, let $T$ denote the number of samples in the parent node, denoted as $\mathcal{D}$, and $t_k$ the number of observations in the child node $R_k$. 
	Let $\hat{p}_{jk}$ denote the empirical purchase frequency of product $j$ in $R_k$, i.e., $\hat{p}_{jk} = \sum_{\bm x_t \in R_k} \I{i_t=j}/t_k$.
	Similarly, we use $\hat p_{j0}$ to denote the empirical purchase frequency in the parent node.
	
	The Gini index is one of the most widely used criteria, defined as 
	\begin{definition}[Gini index]\label{def:gini}
		\begin{equation*}
		\texttt{GI}(\mathcal{D}, i) \triangleq \sum_{k=1}^K \frac{t_k}{T} \sum_{j=0}^N \hat{p}_{jk} (1-\hat{p}_{jk}).
		\end{equation*}
	\end{definition}
	Replacing the purity measure in the Gini index $\hat{p}_{jk} (1-\hat{p}_{jk})$ by entropy-related measures, $H(\mathcal{D}_k) \triangleq -\sum_{j=0}^N \hat{p}_{jk} \ln(\hat{p}_{jk})$,
	we can obtain another common splitting criterion, the information gain, measuring the reduction of entropy by the split:
	\begin{definition}[Information Gain]\label{def:info-gain}
		\begin{equation*}
			\texttt{IG}(\mathcal{D}, i) \triangleq H(\mathcal{D}) - \sum_{k=1}^K \frac{t_k}{T} \cdot H(\mathcal{D}_k) 
			=-\sum_{j=0}^N \hat{p}_{j0} \ln(\hat{p}_{j0}) - \Big(-\sum_{k=1}^K \frac{t_k}{T}\sum_{j=0}^N \hat{p}_{jk} \ln(\hat{p}_{jk})\Big).
		\end{equation*}
	\end{definition}	
	A similar criterion normalizes the information gain by the number of samples in the child nodes.
	\begin{definition}[Information Gain Ratio]\label{def:info-gain-ratio}
		\begin{equation*}
		\texttt{IGR}(\mathcal{D}, i) \triangleq \frac{\texttt{IG}(\mathcal{D}, i)}{-\sum_{k=1}^K \frac{t_k}{T} \ln(\frac{t_k}{T})}
		\end{equation*}
	\end{definition}
	The above criteria use slightly different notions of the purity of a region.
	More details about the splitting criteria can be found in \citet{zhou2021machine}.
	
	To explore the impact of the splitting criteria, we consider a specific DCM that generates the data.
	In particular, we assume that the underlying DCM is a rank-based model characterized by a single \emph{preference ranking}: the consumers always prefer product $i$ to $i+1$, for $i=1,\dots,N-1$, and product $N$ to the no-purchase option.
	The following result demonstrates that the ranking can be recovered from the random forest using the Gini index with high probability.

{\revise
	\begin{theorem}\label{thm:single tree}
		\Copy{rev:thm3}{ Suppose the underlying DCM is a preference ranking and the assortments in the training data are sampled uniformly and independently: each assortment includes product $j$ with probability $1/2$ for all $j=1,\dots,N$.
		The random forest algorithm with sub-sample size $z=T$ (without replacement), $m=N$, terminal leaf size {\revise $l = 1.77 T / 2^k$} and $B=1$ correctly predicts the choices of more than $(1-\epsilon)2^N$ assortments with probability no less than
		\begin{equation*}
			1-\sum_{i=1}^k \left[13\exp\left(-\frac{T}{164\cdot 2^{i-1}}\right)+10(N-i-1)\exp\left(-\frac{T}{113 \cdot 2^{i-1}}\right)\right],
		\end{equation*}
		where $k=\lceil \log_2 \frac{1}{\epsilon} \rceil$.}
	\end{theorem}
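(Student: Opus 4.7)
The plan is to track the tree level by level and show that, with the stated probability, the splits at the first $k=\lceil\log_2(1/\epsilon)\rceil$ levels align with the ranking: the root splits on product~$1$; on the ``product~$1$ absent'' sub-branch the next split is on product~$2$; and so on. If these $k$ early splits are all correct, then every ``present'' sub-branch becomes label-pure (everyone chooses the product the algorithm just split on), so further splits down to $l=1$ cannot corrupt the classification on those leaves. The only assortments that can be misclassified are those missing every one of $1,\ldots,k$, and there are exactly $2^{N-k}\le\epsilon\cdot 2^N$ of them, which yields the $(1-\epsilon)\cdot 2^N$ lower bound on correct predictions.

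To prove that the correct split occurs at level $i$, fix the subset $\Sscr_i$ of training samples in which products $1,\ldots,i-1$ are all absent. Under the ranking DCM the label of a sample equals its smallest present product, so within $\Sscr_i$ the label distribution is geometric: product $i$ with probability $1/2$, $i+1$ with $1/4$, and so on. Splitting on product $i$ sends every ``present'' sample to a pure leaf (Gini $0$) and leaves an ``absent'' branch whose Gini is bounded by $2/3$, for a weighted Gini of at most $1/3$. Splitting on any $j>i$ instead keeps product $i$ present with probability $1/2$ in \emph{both} resulting branches, so the mass on label $i$ alone forces each branch's Gini to exceed $1/2$ via a Cauchy--Schwarz bound on the remaining mass. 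Splits on products $1,\ldots,i-1$ produce empty leaves and are discarded. Consequently, the correct split beats every alternative in the population Gini by a constant margin $\Delta \gtrsim 1/6$.

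The rest is concentration. Since each $\bm x_t$ contains each product independently with probability $1/2$, $T_i\triangleq|\Sscr_i|$ is $\mathrm{Binomial}(T,2^{-(i-1)})$, and Hoeffding gives $T_i\ge T/(2\cdot 2^{i-1})$ except with probability at most a constant multiple of $\exp(-cT/2^{i-1})$; this produces the first exponential in the bound. Conditional on $T_i$, each class frequency deviates from its population value by at most $O(1/\sqrt{T_i})$ and this propagates to each candidate's empirical Gini, so when $T_i \gtrsim 1/\Delta^2$ the empirical ordering matches the population ordering except with probability $(N-i)\exp(-c'T/2^{i-1})$, where the prefactor comes from a union bound over the alternative products in $\{i+1,\ldots,N\}$; this produces the second exponential (a minor cosmetic slack accounts for the $N-i-1$ appearing in the statement). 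A union bound over $i=1,\ldots,k$ then gives the claimed probability.

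I expect the main obstacle to be the explicit bookkeeping of Hoeffding constants needed to recover the precise exponential rates $1/(164\cdot 2^{i-1})$ and $1/(113\cdot 2^{i-1})$. Because the Gini margin $\Delta\approx 1/6$ is not very large compared with the natural $1/\sqrt{T_i}$ fluctuations when $T_i$ is near its Hoeffding-concentrated lower bound, one must chain the two-step concentration (first on $T_i$, then on the class frequencies) carefully, and simultaneously control the Gini contributions of both child branches for every candidate split, rather than invoking a generic concentration lemma.
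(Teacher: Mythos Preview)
Your proposal is correct and follows essentially the same route as the paper: compute the population Gini margin between the correct split and each alternative, concentrate the empirical Gini indices via Chernoff-type bounds on the binomial counts, recurse on the ``absent'' branch, and take a union bound over the first $k$ levels.

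One correction worth noting: the $N-i-1$ in the second exponential is \emph{not} cosmetic slack. The paper isolates the hardest comparison---product $i$ versus product $i{+}1$, where the theoretical Gini gap is smallest (roughly $1/3$ versus $7/12$)---from the comparisons against $j\ge i{+}2$, which enjoy a strictly larger margin. The former contributes the $13\exp(-T/(164\cdot 2^{i-1}))$ term with the worse exponential rate; the latter, union-bounded over the remaining $N-i-1$ alternatives, gives the second term with the sharper rate $1/(113\cdot 2^{i-1})$. If you lump all alternatives together under your uniform margin $\Delta\approx 1/6$, you would instead arrive at a single term of the form $(N-i)\exp(-T/(164\cdot 2^{i-1}))$, which is strictly weaker than the stated bound. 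So the two-rate structure of the theorem is not an artifact of the bookkeeping but a consequence of this case separation, and your plan should build it in explicitly rather than treat it as a rounding issue.
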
}
	Since the error probability converges to zero exponentially in $T$, the predictive accuracy improves tremendously with data size.
	The proof of the theorem reveals an intrinsic connection between the Gini index and the recovery of the preference ranking.
	We analyze the deterministic output of the \emph{theoretical} random forest when the data size is infinite.
	It allows us to show that the theoretical Gini index leads to a sequence of splits consistent with the ranking, i.e., the ranking is recovered under the theoretical random forest.
	For example, if the first split is on product $i$, then the resulting theoretical Gini index is
	$(2-2^{2-2i})/3$.
	In other words, the first split would occur on product one under the theoretical random forest.
	Then we analyze the difference between empirical and theoretical Gini indices and bound the probability of incorrect splits using concentration inequalities. The recursive splits are analyzed using the union bound.
	
	The proof provides the following insight into why random forests may work well in practice:
	The Gini index criterion tends to find the products that are ranked high in the rankings because they create ``purer'' splits that lower the Gini index.
	As a result, the topological structure of the decision trees trained in the random forest is likely to resemble that of the binary choice trees underlying the DCM generating the data.
	Moreover, we also complement the results in Theorem \ref{thm:single tree} by additional numerical studies in Online Appendix \ref{sec:additional-numerical}. We numerically show the result still holds when the training data is not uniform. We also provide the insights for random forests when the rank-based DCM consists of more than one rankings. %the random forest may output a tree that concatenates and merges multiple rankings.

	% \subsection{Information Gain Ratio Recovers the Ranking} \label{sec:IGR_rank_list}
	One restrictive assumption in Theorem \ref{thm:single tree}, reflecting the weakness of the Gini index,
	is the uniform sampling of the assortments. 
	In practice, it is likely that the most popular products could be offered more or less often than other products. 
	In this case, even though it is the most preferred product, the random forest under the Gini index may fail to recognize it as the first split.
	Next, we show that using the information gain ratio in Definition~\ref{def:info-gain-ratio} as the splitting criterion,
	Algorithm~\ref{alg:rf} can recover the preference ranking with a much milder assumption.
	\begin{theorem} \label{thm:information_gain_ratio}
		Suppose the underlying DCM is a preference ranking. 
		Suppose the training data meet the following conditions: for $i = 1, \ldots, N-1$,
		\begin{enumerate}
			\item there exists $(i_t,S_t)$ such that $i \in S_t$, $i+1 \in S_t$ and $\{1, \ldots, i-1\} \cap S_t = \emptyset$;
			\item there exists $(i_t,S_t)$ such that $i \in S_t$ and $\{1, \ldots, i-1, i+1\} \cap S_t = \emptyset$.
		\end{enumerate}
		Then, the random forest algorithm with information gain ratio as the splitting criterion, sub-sample size $z=T$ (without replacement), $m = N$, terminal leaf size $l = 1$ and $B=1$ recovers the preference ranking almost surely. 
	\end{theorem}
	Theorem~\ref{thm:information_gain_ratio} highlights the benign theoretical property of splitting using the information gain ratio. 
	The conditions in Theorem~\ref{thm:information_gain_ratio} are rather mild:
	we only require at least one sample satisfying each of the conditions.
	As a result, the minimum sample requirement for the training data is $2N-1$. 
	That is, the data includes the offering of individual products $\{i\}_{i=1}^N$ and consecutive pairs $\{(i, i+1)\}_{i=1}^{N-1}$.
	The following numerical example shows that the information gain ratio can recover the DCM, while other criteria cannot. 
	The numerical example provides insights into the benefit of using the information gain ratio as a splitting criterion.
	\begin{example}
		Consider there are $N = 3$ products and the preference ranking of the consumers is $1 \succ 2 \succ 3 \succ 0$. 
		We generate 100 samples, shown in Table~\ref{tab:dataset_example}. We calculate the three criteria to determine the dimension of the first split in Table~\ref{tab:dataset_example}. 
		To recover the preference ranking, the first split should be performed on product one.
		However, both \texttt{GI} and \texttt{IG} choose product two as the first split, while \texttt{IGR} leads to a split on product one correctly.
		This is because the samples with an assortment showing product one is preferable to product two, $\{1,2\}$ or $\{1,2,3\}$, are relatively rare in the dataset (one observation).
		If split on product one, the two child nodes are extremely imbalanced (two observations in the child node with $\bm x_t(1)=1$).
		As a result, it doesn't contribute much to \texttt{GI} and \texttt{IG}.
		On the other hand, \texttt{IGR} normalizes the number of observations (see Definition~\ref{def:info-gain-ratio}) and offsets the imbalanced observations.
		In this example, the information gain ratio is the only criterion that recovers $1 \succ 2 \succ 3 \succ 0$.
		\begin{table}
			\caption{The dataset and the criteria of the first split. Note that for $\texttt{IG}$ and $\texttt{IGR}$, the algorithm selects the maximal to split, while for $\texttt{GI}$, it selects the minimal.}
			\begin{center}
				\def\arraystretch{1}\begin{tabular}{cccccc} 
					\hline
					Assortment & \{1\} & \{1,2\} & \{2\} & \{2,3\} & \{3\} \\
					\hline
					Choice & 1 & 1 & 2 & 2 & 3\\
					\hline
					\# of Samples & 1 & 1 & 25 & 24 & 49 \\
					\hline
				\end{tabular}
				\hspace{0.5in}
				\def\arraystretch{0.9}\begin{tabular}{crrr} 
					\hline
					Split  & $\texttt{IG}$ & $\texttt{GI}$ & $\texttt{IGR}$ \\
					\hline
					Prod. 1 & 0.0980 & 0.4900& \textbf{1.0000}  \\
					Prod. 2 & \textbf{0.6793} & \textbf{0.0392}& 0.9800  \\
					Prod. 3 & 0.2437 & 0.3592& 0.4179  \\
					\hline
				\end{tabular}
				\label{tab:dataset_example}
			\end{center}
			\vspace{-4mm}
		\end{table}
	\end{example}
	
Since the actual ranking is unknown, the condition in Theorem~\ref{thm:information_gain_ratio} that requires the pairs of $i$th and $(i+1)$th preferred products to be offered in an assortment may not be verifiable.
		A sufficient condition for Theorem~\ref{thm:information_gain_ratio} is that all pairs of products are offered in an assortment, as well as individual products. 
		This corresponds to at least ${N(N+1)}/{2}$ assortments in the training data to recover the preference ranking.
		On the other hand, this is almost necessary.

		\begin{proposition} \label{prop:IGR_unknown_rank}
			If there are less than ${N(N-1)}/{2}$ assortments offered in the training data, then there exist two preference rankings that exhibit the same choice behavior consistent with the training data, and thus they cannot be distinguished or recovered by any algorithm.
	\end{proposition}

\label{page:thm34_discussion}

{\revise \Copy{rev:sec3.3}{
In summary, Theorems \ref{thm:single tree} and \ref{thm:information_gain_ratio} in this section highlight the interpretability of the random forest algorithm and the theoretical performance of different splitting criteria for classification problems. They show that random forests can capture the underlying decision-making process of a single preference ranking customer, a feature not studied in the random forests literature. Unlike most literature studies on regression that use MSE as the splitting criterion, we examine the Gini index and information gain ratio for classification problems. Theorem \ref{thm:information_gain_ratio} demonstrates that the information gain ratio, rarely studied before, can recover the single preference ranking model with only $2N - 1$ samples.}}

\section{Flexibility and Practical Benefits of Random Forests}\label{sec:practical-considerations}
In this section, we demonstrate the flexibility of random forests and how the method can be adapted in practice to handle different situations.
\subsection{Behavioral Issues}\label{sec:behavoral}
Theorem~\ref{thm:consistency} shows random forests can estimate any DCMs.
For example, there is empirical evidence showing that behavioral considerations of consumers may distort their choice, e.g., the decoy effect \citep{ariely2008predictably}, comparison-based choices \citep{huber1982adding,russo1983strategies}, search cost \citep{weitzman1979optimal} and context effects \citep{yousefi2020choice}.
It implies that regular (see Section~\ref{sec:literature}) DCMs cannot predict the choice behavior well.
It is already documented in \citet{chen2019decision} that the decision forest can capture the decoy effect.
In this section, we use choice forests to model consumer search.

\citet{weitzman1979optimal} proposes a sequential search model with search costs.
Before the search process, consumers only know the distribution of $V_j$, the net utility of product $j \in [N]$, and the cost $c_j$ to learn the realization of $V_j$.
Let $z_j$ be the root of the equation $E[(V_j - z_j)^+]  = c_j$ and suppose that products are sorted in the descending order of the $z_j$s.
\citet{weitzman1979optimal} shows that it is optimal not to purchase if the realized value of the no-purchase alternative, $ V_0$, exceeds $z_1$. Otherwise, the consumer searches product one at a cost $c_1$ and $W_1 = \max(V_1, V_0)$ is computed.
The search process stops when $W_i$ exceeds $z_{i+1}$ for the first time, with the consumer selecting the best product among those that were searched.

We next show that binary choice trees can represent this search process.
Consider three products ($N=3$).
Suppose the products are sorted so that $z_1>z_2>z_3>0$, so the consumer searches in the order of product one $\to$ product two $\to$ product three. Suppose an arriving customer has realized utilities satisfying  $v_2> z_3 > v_1>v_3$. Then the decision process is illustrated by the tree in Figure~\ref{fig:search}.
For example, suppose products $\{1,3\}$ are offered. The customer first searches product 1 because the reservation price of product one $z_1$ is the highest. However, the realized valuation of product 1 is not satisfactory  ($v_1<z_3 < z_1$). Hence, the customer keeps searching for the product with the second-highest reservation price in the assortment, product 3.
However, the search process results in an even lower valuation of product 3, i.e., $v_3<v_1$.
As a result, the customer stops and chooses product one.
Clearly, a customer with different realized valuations would conduct a different search process, corresponding to another decision tree.
%Our tree structure can also represent the choice behavior when assortments are sequentially offered to customers \citep{liu2020assortment}. 
\begin{figure}[t]
	\centering
	\caption{The sequential search process when $N=3$ and the realized valuations and reservation prices satisfy $v_2>v_1>v_3$, $z_1>z_2>z_3>0$ and $v_2>z_3>v_1$.}
	\scalebox{0.8}{
	\begin{forest}
		for tree={l sep+=.5cm,s sep+=.2cm,shape=rectangle, rounded corners,
			draw, align=center,
			top color=white, bottom color=gray!20}
		[Has product 1
		[Has product 2, edge label={node[midway,left]{Y}}
		[Choose 2,edge label={node[midway,left]{Y}}]
		[Has product 3,edge label={node[midway,right]{N}}
		[Choose 1,edge label={node[midway,left]{Y}}]
		[Choose 1,edge label={node[midway,right]{N}}]
		]
		]
		[Has product 2,edge label={node[midway,right]{N}}
		[Choose 2,edge label={node[midway,left]{Y}}]
		[Has product 3,edge label={node[midway,right]{N}}
		[Choose 3,edge label={node[midway,left]{Y}}]
		[No purchase, edge label={node[midway,right]{N}}]
		]
		]
		]
	\end{forest}
	}
	\label{fig:search}
\end{figure}
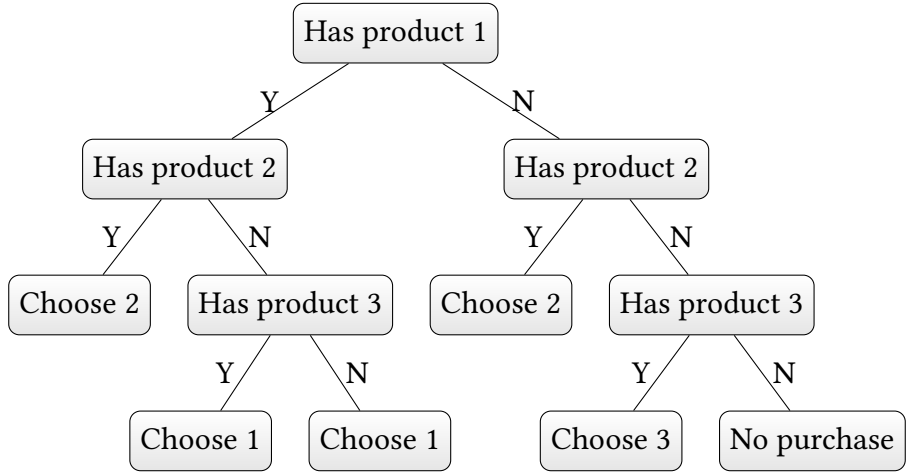

{\revise 
\subsection{Product Importance}\label{sec:importance}
\Copy{rev:MDI}{Random forests can be used to assign scores to each product and rank the importance of products.
A common score, mean decrease impurity (MDI), is based on the total decrease in node impurity from splitting on the product, averaged over all trees \citep{biau2016random}.
The score for product $m$ is defined as
\begin{equation*} \begin{aligned}
	\text{MDI}(m) &= \frac{1}{B} \sum_{b=1}^B \sum_{\substack{\text{all splits } s\\\text{ in the $b$th tree}}} (\text{fraction of data in the parent node of } s)\\
	&\quad \times(\text{reduction in the Gini index caused by }s)\times\I{s\text{ splits on }m}.
\end{aligned} \end{equation*}
In other words, if consumers make decisions frequently based on the presence of product $m$ (a lot of splits occur on product $m$), or
their decisions are more consistent after observing the presence of product $m$ (the Gini index is reduced significantly after splitting on $m$), then the product gains more score in MDI and is regarded as important.
To illustrate this measure, we provide examples in Online Appendix~\ref{sec:prod-imp-exp}.

The identification of important products provides simple yet powerful insights into the behavioral patterns of consumers.
Consider the following use cases:
(1) An retailer wants to promote its ``flagship'' products that significantly increase the conversion rate.
By computing the MDI from the historical data, important products can be identified without extensive A/B testing.
(2) Due to limited capacity, a firm plans to reduce the available types of products to cut costs.
According to the historical data, it could simply remove the products with low sales.
However, some products, while not looking attractive themselves, serve as decoys or references and boost the demand for other products.
Removing these products would distort consumers' choice behavior and may lead to unfavorable consequences.
The importance score provides an ideal solution: if a product is ranked low based on MDI, then it does not strongly influence the decision-making of consumers.
Therefore, it is safe to leave them out.
(3) When designing a new product, a firm attempts to decode the impact of various product features on customer choices.
Which product feature is drawing the most attention?
What do attractive products have in common?
To conduct successful product engineering, first, it needs to use historical data to nail down a set of attractive products.
Moreover, a numerical score of product importance is necessary to quantify and separate the contribution of various features.
The importance score is a more reasonable criterion than sales volume because the latter cannot capture the synergy between the products.
}}

{\revise
\subsection{Incorporating Price Information}\label{sec:price}
\Copy{rev:pricing}{One benefit of a parametric DCM, such as the MNL or nested logit model, is the ability to account for covariates.
For example, in the MNL model, the firm can estimate the price sensitivity of each product, and extrapolate/predict the choice probability when the product is charged at a new price that has never been observed in the historical data.
Many nonparametric DCMs cannot easily be extended to new prices.
In this section, we show that while enjoying the benefit of a nonparametric formulation, random forests can also accommodate the price information.

Consider the data of the following format: $\left\{(i_t,\bm p_t)\right\}_{t=1}^T$, where $\bm p_t\in [0,+\infty]^N$ represent the prices of all products.
For product $j$ not included in the assortment offered to customer $t$, we set $\bm p_t(j)=+\infty$.
This is because when a product is priced at $+\infty$, no customer would be willing to purchase it, and it is equivalent to the scenario that the product is not offered at all.
Therefore, compared to the binary vector $\bm x_t$ that only records whether a product is offered, the price vector $\bm p_t$ encodes more information.

However, the predictor $\bm p$ can not be readily used in random forests.
The predictor space $[0,+\infty]^{N}$ is unbounded, and the value $+\infty$ added to the extended real number line is not implementable in practice.
To apply Algorithm~\ref{alg:rf}, we introduce link functions that map the input into a compact set.
\begin{definition}\label{def:link-func}
	A function $g(\cdot):[0,+\infty)\mapsto (0,1]$ is referred to as a link function, if (1) $g(x)$ is strictly decreasing, (2) $g(0)=1$, and (3) $\lim_{x\to +\infty} g(x)=0$.
\end{definition}
The link function can be used to transform a price $p\ge 0$ into $(0,1]$.
Moreover, because of property (3), we can naturally define $g(+\infty)=0$.
Thus, if product $j$ is not included in assortment $\bm x_t$, then $g(\bm p_t(j))=g(+\infty)=0=\bm x_t(j)$.
If product $j$ is offered at the low price, then $g(\bm p_t(j))\approx g(0)= 1$.
After the transformation of inputs, $\bm p_t\to g(\bm p_t)$\footnote{When $g(\cdot)$ is applied to a vector $\bm p$, it is interpreted as applied to each component of the vector.}, we introduce a continuous scale to the problem in Section~\ref{sec:data_estimation}.
Instead of binary status (included or not), each product now has a spectrum of presence, depending on the price of the product.
Now we can directly apply Algorithm~\ref{alg:rf} to the training data $\left\{(i_t,g(\bm p_t))\right\}_{t=1}^T$ after modifying Step~\ref{step:choose-direction}, because the algorithm needs to find not only the optimal product to split but also the optimal split location.
The slightly modified random forests are demonstrated in Algorithm~\ref{alg:rf-pricing}.}
{\SingleSpacedXI
	\begin{algorithm}
		\caption{\revise Random forests for DCM estimation with price information}\label{alg:rf-pricing}
		\begin{algorithmic}[1]
			\revise
			\State Data: $\left\{(i_t,\bm p_t)\right\}_{t=1}^T$
			\State Tunable parameters: number of trees $B$, sub-sample size $z\in\left\{1,\dots,T\right\}$,
			number of products to split $m\in \left\{1,\dots,N\right\}$, terminal leaf size $l\in\{1,\dots,z\}$, a link function $g(\cdot)$\label{step:parameter-p}
			\State Transform the training data to $\left\{(i_t,g(\bm p_t))\right\}_{t=1}^T$
			\For{$b=1$ to $B$}
			\State Select $z$ observations from the training data with replacement, denoted by $Z$\label{step:select-subsample-p}
			\State Initialize the tree $t_b(g(\bm p))\equiv 0$ with a single root node
			\While{some leaf has greater than or equal to $l$ observations belonging to $Z$ and can be split}\label{step:while-split-p}
			\State Select $m$ products without replacement among $\left\{1,\dots,N\right\}$\label{step:choose-direction-p}
			\State Select the optimal one among the $m$ products and the optimal position to split that minimize the Gini index \label{step:criterion-p}
			\State Split the leaf node into two
			\EndWhile
			\State Denote the partition associated with the leaves of the tree by $\left\{R_1,\dots,R_M\right\}$;
			let $c_i$ be the class label of a randomly chosen observation in $R_i$ from the training data\label{step:random-label-p}
			\State Define $t_b(g(\bm p)) = \sum_{i=1}^{M} c_i\I{g(\bm p)\in R_i}$
			\EndFor
			\State The choice probability of product $i$ given price vector $\bm p$ is $\sum_{b=1}^B \frac{1}{B}\I{t_b(g(\bm p))=i}$
		\end{algorithmic}
\end{algorithm}
\vspace{-2mm}
}

\Copy{rev:pricing2}{
Because of the nature of the decision trees, the impact of prices on the choice behavior is piecewise constant.
For example, Figure~\ref{fig:pricing} illustrates a possible binary choice tree with $N=3$.}
\begin{figure}[t]
	\centering
	\caption{\revise A possible decision tree when the price information is incorporated for $N=3$. $g(\bm p(i))>a$ is equivalent to $\bm p(i)<g^{-1}(a)$, i.e., product $i$ is included in the assortment and its price is less than $g^{-1}(a)$.}
	\scalebox{0.8}{
	\begin{forest}
		for tree={l sep+=.5cm,s sep+=.2cm,shape=rectangle, rounded corners,
			draw, align=center,
			top color=white, bottom color=gray!20}
		[$g(\bm p(1))>0.3$
		[$g(\bm p(1))>0.9$, edge label={node[midway,left]{Y}}
		[1,edge label={node[midway,left]{Y}}]
		[$g(\bm p(2))>0.5$,edge label={node[midway,right]{N}}
		[2,edge label={node[midway,left]{Y}}]
		[0,edge label={node[midway,right]{N}}]
		]
		]
		[$g(\bm p(3))>0.4$,edge label={node[midway,right]{N}}
		[$g(\bm p(3))>0.8$,edge label={node[midway,left]{Y}}
		[3,edge label={node[midway,left]{Y}}]
		[0,edge label={node[midway,right]{N}}]
		]
		[$g(\bm p(2))>0.3$,edge label={node[midway,right]{N}}
		[2,edge label={node[midway,left]{Y}}]
		[0,edge label={node[midway,right]{N}}]
		]
		]
		]
	\end{forest}
	}
	\label{fig:pricing}
\end{figure}
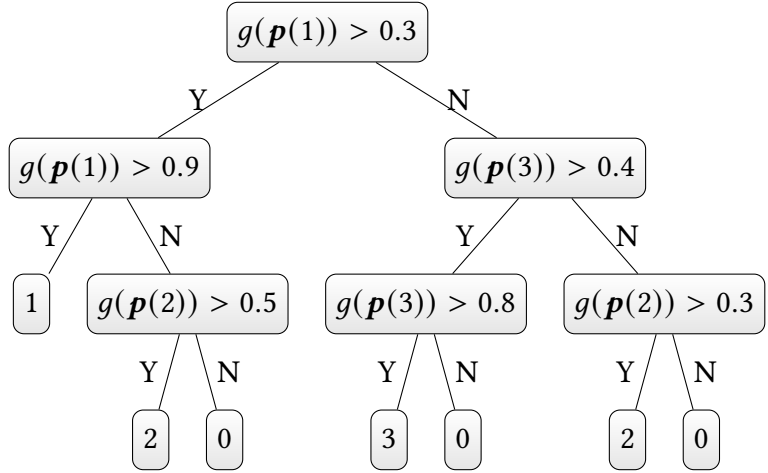

\Copy{rev:pricing3}{
It is not surprising that there are numerous link functions to choose from.
For instance, $g(x) = e^{-x}$ or $g(x) = 1- \frac{2}{\pi}\arctan(x)$.
In fact, the survival function of any non-negative random variables with positive PDF is a candidate for the link function.
This extra degree of freedom may concern some academics and practitioners:
How sensitive is the estimated random forest to the choice of link functions?
What criteria may be used to pick a ``good'' link function?
Our next result guarantees that the choice of link functions does not matter.
For any two link functions $g_1(x)$ and $g_2(x)$, we can run Algorithm~\ref{alg:rf-pricing} for training data $\left\{(i_t,g_1(\bm p_t))\right\}_{t=1}^T$ and $\left\{(i_t,g_2(\bm p_t))\right\}_{t=1}^T$.
We use $t_b^{(j)}(\bm x)$ to denote the returned $b$th tree of the algorithm for link function $g_j(x)$, $j=1,2$.}
\begin{proposition}\label{prop:link-func}
	\Copy{rev:prop_link_func}{If we equalize
	\begin{itemize}
		\item the choice of parameters in Step~\ref{step:parameter-p} except for the link function
		\item the internal randomizers in Step~\ref{step:select-subsample-p}, \ref{step:choose-direction-p}, and \ref{step:random-label-p}
	\end{itemize}
	in Algorithm~\ref{alg:rf-pricing}, then the trees of both link functions return the same class label for an observation in the training data: $t_b^{(1)}(g_1(\bm p_t)) = t_b^{(2)}(g_2(\bm p_t)) $ for all $t=1,\dots,T$ and $b=1,\dots,B$.}
	%    Given the same training data and choice of parameters in Step~\ref{step:parameter}, Algorithm~\ref{alg:rf} outputs the same classifier for any link function if the internal randomizers in Step~\ref{step:select-subsample}, \ref{step:choose-direction}, and \ref{step:random-label} have the same realization.
\end{proposition}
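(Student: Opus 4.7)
The plan is to argue by induction on the depth of the tree that, under coupled randomizers, both trees partition the \emph{training observations} into exactly the same groups at every node, even though the numerical split thresholds differ. Since each training observation ends up in the same leaf under both link functions, and the random label of each leaf (Step~\ref{step:random-label-p}) is drawn with the same randomizer from the same set of observations, the two trees return identical class labels on training inputs.

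The key observation is that each $g_j$ is strictly decreasing and continuous on $[0,+\infty)$ with $g_j(0)=1$ and $g_j(+\infty)=0$, so $g_j$ is a bijection onto $(0,1]$. Consequently, for any split ``$g_1(\bm p(i))\le s$'' at some threshold $s\in(0,1]$, the induced partition of observations is $\{t:\bm p_t(i)\ge g_1^{-1}(s)\}$ vs.\ its complement; the analogous split under $g_2$ at threshold $s' = g_2(g_1^{-1}(s))$ produces the \emph{identical} partition of training indices. Hence the collection of partitions of any subset $Z'$ of training observations that is achievable by splitting on product $i$ is the same under $g_1$ and $g_2$; only the numerical threshold differs. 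Moreover, the Gini index of a candidate split is a function only of the resulting partition and the class labels $i_t$ of the observations in each piece, none of which depend on the link function.

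Now induct on the depth of the recursion in the \textbf{while} loop of Algorithm~\ref{alg:rf-pricing}. By assumption the bootstrap sub-sample $Z$ in Step~\ref{step:select-subsample-p} is identical in both runs (same randomizer), and at the root the set of candidate products in Step~\ref{step:choose-direction-p} is identical. By the observation above, the Gini-optimal partition (over this common list of candidate products) is the same set of training indices in each branch, so the same product is selected and the two trees descend to children containing identical training observations; the chosen numerical thresholds differ, but that is immaterial for the induction. Applying the same argument inductively at every node, both trees produce the same sequence of splits in the sense of inducing the same partition of training observations, and therefore terminate with identical leaf contents (same stopping condition $l$, same observations). At a terminal leaf, the label $c_i$ is drawn by a shared randomizer from the same set of observations, so both trees assign the same label.

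Finally, because every training observation $\bm p_t$ lies in corresponding leaves of $t_b^{(1)}$ and $t_b^{(2)}$ that were generated from the same underlying set of training indices, we conclude $t_b^{(1)}(g_1(\bm p_t)) = t_b^{(2)}(g_2(\bm p_t))$ for every $t$ and $b$. The main subtlety to handle carefully is tie-breaking when several candidate splits achieve the same Gini index: since the splitting rule is deterministic given the data and the list of candidate products (or, if broken at random, uses a coupled randomizer), ties are resolved identically in both runs, which is exactly what the equalization of internal randomizers guarantees.
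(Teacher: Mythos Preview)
Your proof is correct and follows essentially the same inductive argument as the paper: both runs produce the same partition of training observations at every node because the Gini index depends only on the class composition of the resulting subsets, and that composition is invariant under any strictly monotone reparametrization of the split coordinate. One minor remark: you invoke continuity of $g_j$ and call it a bijection onto $(0,1]$, but Definition~\ref{def:link-func} only assumes strict monotonicity---this already suffices, since strict monotonicity preserves the ordering of the values $\{\bm p_t(i)\}_t$ and hence the family of achievable partitions, so you need not (and should not) add the extra hypothesis.
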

\Copy{rev:pricing4}{
It is worth pointing out that although the random forests using two link functions output identical class labels for $\bm p_t$ in the training data,
they may differ when predicting a new price vector $\bm p$.
This is because the splitting operation that minimizes the Gini index in Step~\ref{step:choose-direction-p} is not unique.
Any split between two consecutive observations\footnote{If the algorithm splits on product $m$, then $\bm p_{t_1}$ and $\bm p_{t_2}$ are consecutive if there does not exist $\bm p_{t_3}$ in the same leaf node such that $(\bm p_{t_1}(m)-\bm p_{t_3}(m))(\bm p_{t_2}(m)-\bm p_{t_3}(m))<0$.} results in an identical class composition in the new leaves and thus the same Gini index.
Usually, the algorithm picks the middle between two consecutive observations to split, which may differ for different link functions. %Moreover, as we show in Online Appendix~\ref{sec:customer_features}, random forests algorithm can also incorporate customer features.
}}

{\revise 
\subsection{Incorporating Customer Features} \label{sec:customer_features}
\Copy{rev:feature1}{A growing trend in online retailing and e-commerce is personalization.
Due to the increasing access to personal information and computational power, retailers are able to implement personalized policies, including pricing and recommendation, for different customers based on his/her observed features.
Leveraging personal information can greatly increase the garnered revenue of the firm.

The first step to offering a personalized assortment is incorporating the feature information into the choice model.
It has been considered in many classic DCMs by including a linear term in the features. See \citet{train2009discrete} for a general treatment.
In this section, we demonstrate that it is natural for random forests to capture customer features and return a binary choice forest that is aware of such information.
Suppose the collected data of the firm have the form $(i_t, \bm x_t, \bm f_t)$ for customer $t$, where in addition to $(i_t, \bm x_t)$, the choice made and the offered set, the customer feature $\bm f_t\in [0,1]^M$ is also recorded (possibly normalized).
The procedure in Section~\ref{sec:data_estimation} can be extended naturally.
In particular, we may append $\bm f_t$ to $\bm x_t$, so that the predictor $(\bm x,\bm f)\in [0,1]^{M+N}$.
Algorithm~\ref{alg:rf} can be modified accordingly.

The resulting binary choice forest consists of $B$ binary choice trees.
The splits of the binary choice tree now encode not only whether a product is offered, but also predictive feature information of the customer.
For example, a possible binary choice tree illustrated in Figure~\ref{fig:tree-model} may result from the algorithm.}

\begin{figure}[h]
	\centering
	\caption{\revise A possible binary choice tree after incorporating customer features.}
	\begin{forest}
		for tree={l sep+=.5cm,s sep+=.2cm,shape=rectangle, rounded corners,
			draw, align=center,
			top color=white, bottom color=gray!20}
		[Has product 1
		[Has product 3, edge label={node[midway,left]{Y}}
		[Choose 3,edge label={node[midway,left]{Y}}]
		[Choose 1,edge label={node[midway,right]{N}}]
		]
		[Age $\ge 30$,edge={dashed}, edge label={node[midway,right]{N}}
		[Married,edge={dashed},edge label={node[midway,left]{Y}}
		[Choose 4,edge={dashed},edge label={node[midway,left]{Y}}]
		[Choose 2,edge label={node[midway,right]{N}}]
		]
		[No purchase, edge label={node[midway,right]{N}}]
		]
		]
	\end{forest}
	\vspace{-4mm}
	\label{fig:tree-model}
\end{figure}
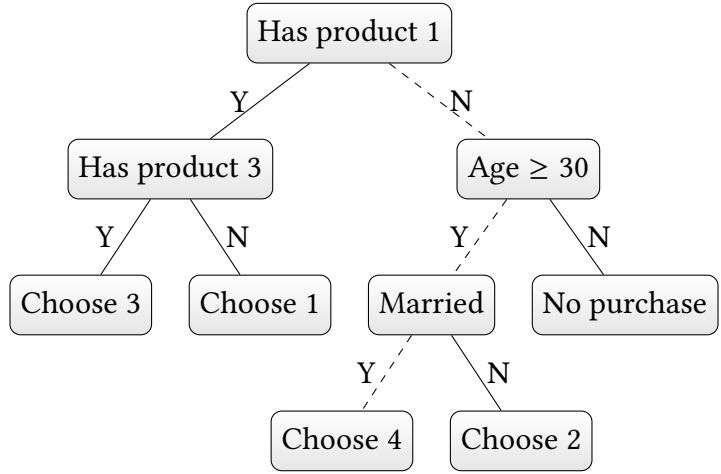
\Copy{rev:feature2}{
Compared with other DCMs with linear features,
the framework introduced in this paper has the following benefits:
(1) The estimation is straightforward (same as the algorithm without customer features) and can be implemented efficiently.
(2) The nonparametric nature of the model allows for capturing complex interaction between products and customer features, and among customer features. For example, ``offering a high-end handbag'' may become a strong predictor when the combination of features ``female'' and ``age$\ge 30$'' are activated.
In a binary choice tree, the effect is captured by three splits (one for the product and two for the customer features) along a branch.
It is almost impossible to capture in a parametric (linear) model.
(3) The framework can be combined with the aforementioned adjustments, such as pricing and product importance. For example, the measure MDI introduced in Section~\ref{sec:importance} can be used to identify predictive customer features.	}}

\section{Numerical Experiments}\label{sec:numerical}
{\revise
We conduct a comprehensive numerical study based on both synthetic and real datasets.
We find that (1) random forests are quite robust and the performance is stable for different underlying DCMs varying in complexity. 
In particular, random forests only underperform the correctly specified parametric models by a small margin and do not overfit;
(2) the standard error of random forests is small compared to other estimation procedures;
(3) random forests scale well to hundreds of products without requiring excessively large training samples; 
(4) random forests benefit tremendously from increasing sample size compared to other DCMs;
(5) the computation time of random forests almost does not scale with the size of the training data;
(6) random forests have a robust performance even if the training set only includes less than $1/30$ of all available assortments;
(7) random forests handle training data with nonstandard format reasonably well, such as price information and aggregated data (see Section~\ref{sec:price} and Appendix~\ref{sec:aggregate-choice} and for more details), which cannot be handled easily by other frameworks.}

For the experiments, we compare the estimation results of random forests with the MNL model, the Markov chain model \citep{blanchet2016markov}, and the decision forest \citep{chen2019decision}\footnote{The MNL model is estimated using the maximum likelihood estimator. The Markov chain model is estimated using the EM algorithm, the same as the implementation in \citet{csimcsek2018expectation}. The random forest is estimated using the Python package ``scikit-learn''.}. 
We mainly choose the MNL and the Markov Chain models as benchmarks because the MNL model is one of the most widely used DCM and
the Markov chain model has been shown \citep{berbeglia2022comparative} to have outstanding empirical performance compared to MNL, the nested logit, the mixed logit, and rank-based DCMs.
For the decision forest, our implementation is based on the code repository of \cite{chen2019decision} on GitHub. %However, we update the package versions and make a number of modifications to run the code successfully.

When conducting numerical experiments, we set the hyper-parameters of the random forest as follows: $B=1000$, $z=T$, $m=\sqrt{N}$, $l=50$.
The investigation of the sensitivity to hyper-parameters is shown in Online Appendix~\ref{sec:hyper-parameter}.
Choosing the hyper-parameters optimally using cross-validation would further improve the performance of random forests.

\subsection{Real Data: IRI Academic Dataset} \label{sec:IRI_numerical}
In this section, we compare different models on the IRI Academic Dataset \citep{bronnenberg2008database}.
The IRI Academic Dataset collects weekly transaction data from 47 U.S. markets from 2001 to 2012,
covering more than 30 product categories.
Each transaction includes the week and the store of purchase, the universal product code (UPC) of the purchased item, the number of units purchased, and the total paid dollars.

The preprocessing follows the same steps as in \citet{jagabathula2018limit} and \citet{chen2019decision}.
In particular, we regard the products sharing the same vendor code as the same product.
Each assortment is defined as a unique combination of stores and weeks. Such an assortment includes all the products available in the store during that week.
We conduct the analysis for 31 categories separately using the data for all weeks in 2007.
We only focus on the top nine purchased products from all stores in each category throughout the year and treat all other products as the no-purchase alternative.

All sales data after preprocessing has the form $\{(i_t, S_t)\}_{t=1}^T$, where $S_t$ is the offered product set and $i_t \in S_t \cup \{0\}$ is the purchased product in period $t$. Let $\mathcal{S}$ denote the set of unique assortments.
The dataset can be aggregated by the same assortment $S$ that $\{n_t, \bm \tilde\PR_t, S_t\}_{t=1}^{|\mathcal{S}|}$, where $n_t$ is the total sales under assortment $S_t$, and $\bm \tilde\PR_t$ is the empirical choice probability for each product in $S_t \cup \{0\}$.
We only focus on assortments with more than 30 transactions, i.e., $n_t \geq 30$.
However, the sales data for most categories are too large for the EM algorithm to estimate the Markov chain model.
For computational efficiency, we sample 30 samples with replacement for each unique assortment.
This re-sampling procedure does not significantly increase the sampling variability, as most transactions in the original data are repeated entries.

To compare different estimation procedures, we use five-fold cross-validation for each unique assortment to examine the out-of-sample performance.
We follow \cite{chen2019decision} and divide the unique assortments into five (approximately) equal-sized subsets $\mathcal{S}_1, \ldots, \mathcal{S}_5$.
For each $i = 1, \ldots, 5$, the testing set is $\mathcal{S}_i$ and the remaining four subsets serve as the training data. 
Following \cite{berbeglia2022comparative}, the empirical performance is evaluated by the empirical root mean squared error (RMSE) in the testing set.
That is, for estimated choice probabilities $\hat\PR$ and testing set $\mathcal{S}_i$, we define
\begin{equation}
	\label{RMSE_realdata}
	R M S E\left(\hat\PR, \tilde\PR, \mathcal{S}_i\right)=\sqrt{\frac{\sum_{S \in \mathcal{S}_i} \sum_{j \in S \cup\{0\}}\left(\tilde\PR(j, S)-\hat\PR(j , S)\right)^{2}}{\sum_{S \in \mathcal{S}_i}(|S|+1)}}.
\end{equation}

The result is shown in Table~\ref{tab:IRI KFold result} comparing random forests, MNL, the Markov chain model and the decision forest (based on column generation). Random forests outperform the others in 20 of 31 categories.
According to \citet{berbeglia2022comparative}, the Markov chain choice model has already shown strong performance in synthetic and real-world studies.
Table~\ref{tab:IRI KFold result} fully demonstrates the potential of random forests as a framework to model and estimate consumer behavior in practice.
\label{page:IRI}{\revise \Copy{rev:IRI}{For robustness, we also test the result when considering the top seven/fifteen products instead of nine.
The results are shown in Tables~\ref{tab:IRI KFold top7} and \ref{tab:IRI KFold top15} in Online Appendix \ref{sec:iri-additional}.
Random forests perform the best among the three models in 18 and 27 out of 31 categories, respectively. 
(Due to the computational issues, we only compare our model with MNL and the Markov chain model in the additional experiments.)
In general, random forests show stronger performance when applied to more products.}}

\begin{table}[t]
	\begin{center}
		\caption{The summary statistics (the data size, the number of unique assortments in the data, and the average number of products in an assortment) of the IRI dataset after preprocessing and the average and standard deviation of the out-of-sample RMSE \eqref{RMSE_realdata} for each category when considering the top 9 products.}
		\scalebox{0.9}{
			\begin{tabular}{lrrcccccc} % {p{4cm}>{\raggedleft}p{1.5cm}>{\raggedleft}p{1.5cm}>{\raggedleft}p{1.5cm}>{\centering}p{3cm}>{\centering}p{3cm}>{\centering}p{3cm}}
				\hline
				Product category & \#Data  &\#Unique &\#Avg & RF & MNL & MC & DF \\
				& & assort & prod \\
				\hline
				Beer & 2,614 & 87 & 5.05 & \textbf{0.075} (0.011) & 0.082 (0.008) & 0.083 (0.012) & 0.079 (0.017)\\
				Blades & 2,996 & 100 & 4.68 & 0.058 (0.014) & 0.058 (0.009) & \textbf{0.051} (0.012) & 0.079 (0.026)\\
				Carbonated Beverages & 869 & 29 & 5.14 & 0.089 (0.039) & 0.104 (0.019) & 0.089 (0.021) & \textbf{0.086} (0.034)\\
				Cigarettes & 3,546 & 119 & 4.88 & 0.056 (0.016) & 0.070 (0.019) & 0.059 (0.013) & \textbf{0.051} (0.017)\\
				Coffee & 2,539 & 85 & 5.71 & \textbf{0.094} (0.016) & 0.109 (0.018) & 0.104 (0.015) & 0.098 (0.013)\\
				Cold Cereal & 894 & 30 & 6.47 & \textbf{0.055} (0.025) & 0.078 (0.025) & 0.068 (0.030) & 0.068 (0.027)\\
				Deodorant & 3,432 & 114 & 5.45 & 0.051 (0.009) & 0.051 (0.013) & \textbf{0.048} (0.008) & 0.059 (0.013)\\
				Diapers & 989 & 33 & 3.79 & \textbf{0.074} (0.031) & 0.080 (0.029) & 0.077 (0.016) & 0.154 (0.101)\\
				Facial Tissue & 1,647 & 55 & 4.16 & \textbf{0.099} (0.015) & 0.112 (0.020) & \textbf{0.099} (0.018) & 0.121 (0.013)\\
				Frozen Dinners/Entrees & 1,765 & 59 & 5.90 & \textbf{0.084} (0.008) & 0.110 (0.008) & 0.104 (0.009) & 0.089 (0.008)\\
				Frozen Pizza & 3,175 & 106 & 4.92 & \textbf{0.109} (0.029) & 0.124 (0.022) & 0.117 (0.019) & 0.111 (0.024)\\
				Household Cleaners & 1,947 & 65 & 6.18 & 0.076 (0.015) & \textbf{0.069} (0.004) & 0.073 (0.004) & 0.078 (0.014)\\
				Hotdogs & 3,546 & 118 & 4.53 & \textbf{0.113} (0.019) & 0.130 (0.018) & 0.120 (0.010) & 0.121 (0.017)\\
				Laundry Detergent & 4,012 & 134 & 5.42 & \textbf{0.092} (0.008) & 0.123 (0.015) & 0.114 (0.011) & 0.102 (0.008)\\
				Margarine/Butter & 1,019 & 34 & 5.76 & \textbf{0.097} (0.020) & 0.119 (0.015) & 0.107 (0.017) & 0.101 (0.020)\\
				Mayonnaise & 1,665 & 56 & 5.05 & \textbf{0.077} (0.027) & 0.109 (0.024) & 0.101 (0.016) & 0.118 (0.028)\\
				Milk & 1,763 & 59 & 4.36 & \textbf{0.116} (0.027) & 0.125 (0.010) & 0.120 (0.016) & 0.130 (0.022)\\
				Mustard & 2,395 & 80 & 5.70 & \textbf{0.063} (0.026) & 0.078 (0.020) & 0.073 (0.022) & 0.065 (0.018)\\
				Paper Towels & 2,092 & 70 & 4.97 & \textbf{0.094} (0.023) & 0.114 (0.013) & 0.100 (0.013) & 0.096 (0.015)\\
				Peanut Butter & 1,700 & 57 & 4.67 & \textbf{0.083} (0.026) & 0.095 (0.024) & 0.093 (0.021) & 0.091 (0.029)\\
				Photography supplies & 4,200 & 140 & 3.54 & \textbf{0.076} (0.009) & 0.091 (0.013) & \textbf{0.076} (0.018) & 0.152 (0.043)\\
				Razors & 2,482 & 83 & 3.27 & 0.088 (0.029) & 0.066 (0.014) & \textbf{0.055} (0.008) & 0.203 (0.062)\\
				Salt Snacks & 1,402 & 47 & 5.23 & \textbf{0.071} (0.019) & 0.084 (0.017) & 0.080 (0.019) & 0.074 (0.019)\\
				Shampoo & 4,547 & 151 & 5.25 & 0.082 (0.014) & \textbf{0.072} (0.011) & 0.073 (0.011) & 0.083 (0.016)\\
				Soup & 1,763 & 59 & 6.37 & \textbf{0.079} (0.017) & 0.118 (0.009) & 0.113 (0.007) & 0.089 (0.013)\\
				Spaghetti/Italian Sauce & 2,332 & 78 & 5.85 & 0.083 (0.013) & 0.096 (0.011) & 0.088 (0.008) & \textbf{0.079} (0.012)\\
				Sugar Substitutes & 2,880 & 96 & 4.92 & 0.050 (0.009) & 0.056 (0.011) & 0.051 (0.006) & \textbf{0.049} (0.004)\\
				Toilet Tissue & 1,620 & 54 & 5.24 & \textbf{0.116} (0.017) & 0.127 (0.024) & 0.121 (0.017) & 0.141 (0.024)\\
				Toothbrushes & 6,642 & 221 & 5.11 & 0.072 (0.004) & 0.072 (0.007) & \textbf{0.070} (0.004) & 0.074 (0.005)\\
				Toothpaste & 2,211 & 74 & 5.51 & 0.098 (0.028) & \textbf{0.078} (0.017) & 0.079 (0.021) & 0.085 (0.028)\\
				Yogurt & 2,856 & 95 & 4.74 & \textbf{0.113} (0.020) & 0.122 (0.014) & 0.117 (0.010) & 0.121 (0.013)\\
				\hline
		\end{tabular}}
		\label{tab:IRI KFold result}
	\end{center}
	\vspace{-4mm}
\end{table}

\subsection{Real Data: Hotel} \label{sec:Hotel_numerical}
In this section, we apply the random forest algorithm to a public dataset \citep{bodea2009data}.
%\new{The description of the data needs to be more accurate, so the next few sentences need to be reworked: the source of the data, the data format (each record is a transaction, a customer click? what information is available, such as price?), also cite the papers using bibtex. Also need to explain why we process the data this way (or why they have to use alternative methods), otherwise people may think that we are cherry picking.}
The dataset includes transient customers (mostly business travelers) who stayed in one of five continental U.S. hotels between March 12, 2007, and April 15, 2007.
The minimum booking horizon for each check-in date is four weeks.
Rate and room type availability and reservation information are collected via the hotel and/or customer relationship officers (CROs), the hotel's websites, and offline travel agencies.
Since there is no direct competition among these five hotels, we process the data separately.
A product is uniquely defined by the room type (e.g., suite).
For each transaction, the purchased room type and the assortment offered are recorded.

When processing the dataset, we remove products with less than 10 transactions.
We also remove the transactions whose offered assortments are unavailable for technical reasons.
For the transactions that none of the products in the available sets are purchased by the customer, we assume customers choose the no-purchase alternative.
%We do not add dummy transactions with no-purchases to uncensor the data like \cite{van2014market}, \cite{csimcsek2018expectation} and \cite{berbeglia2022comparative}.

We use five-fold cross-validation and RMSE defined in \eqref{RMSE_realdata_hotel} to examine the out-of-sample performance.
That is, for estimated choice probabilities $\hat\PR$ and testing set $\Tscr \triangleq \{(i_t, S_t)\}_{t=1}^T$, we define
\begin{equation}
	\label{RMSE_realdata_hotel}
	R M S E\left(\hat\PR, \Tscr\right)=\sqrt{\frac{\sum_{(i,S) \in \Tscr} \sum_{j \in S \cup\{0\}}\left(\I{j=i}-\hat\PR(j , S)\right)^{2}}{\sum_{(i,S) \in \Tscr}(|S|+1)}}.
\end{equation}
In Table \ref{tab:basic-information}, we show the summary statistics of the five datasets after preprocessing.
We also show the out-of-sample RMSE for each hotel (average and standard deviation).
In addition, we show the performance of the independent demand model (ID), which does not incorporate the substitution effect and is expected to perform poorly, in order to provide a lower bound for the performance.

The random forest algorithm outperforms the parametric methods for large datasets (Hotel 1, 2 and 3).
For smaller data sizes (Hotel 4 and 5), the random forest is on par with the best parametric estimation procedure (Markov chain) according to \citet{berbeglia2022comparative}.
\begin{table}[t]
	\begin{center}
		\caption{The summary statistics (the total number of products, the data size for in-sample and out-of-sample tests, the number of unique assortments in the data, and the average number of products in an assortment) of the five hotel datasets and the average and standard deviation of the out-of-sample RMSE across five folds.}
		\def\arraystretch{0.9}\begin{tabular}{crrrrr}
			\hline
			& \#Prod & \#In-sample & \#Out-sample & \#Unique assort & \#Avg prod\\
			\hline
			Hotel 1 & 10 & 1271 & 318 & 50 & 5.94 \\
			Hotel 2 & 6 & 347 & 87 & 26 & 3.27 \\
			Hotel 3 & 7 & 1073 & 268 & 25 & 4.32 \\
			Hotel 4 & 4 & 240 & 60 & 12 & 2.33 \\
			Hotel 5 & 6 & 215 & 54 & 21 & 3.52 \\
			\hline
		\end{tabular}
		
		\def\arraystretch{0.9}\begin{tabular}{rcccc}
			\hline
			& RF & MNL & MC & ID \\
			\hline
			Hotel 1 & \textbf{0.3040} (0.0046) & 0.3098 (0.0031) & 0.3047 (0.0039) & 0.3224 (0.0043)\\
			Hotel 2 & \textbf{0.3034} (0.0120) & 0.3120 (0.0148) & 0.3101 (0.0124) & 0.3135 (0.0178)\\
			Hotel 3 & \textbf{0.2842} (0.0051) & 0.2854 (0.0065) & \textbf{0.2842} (0.0064) & 0.2971 (0.0035)\\
			Hotel 4 & 0.3484 (0.0129) & \textbf{0.3458} (0.0134) & 0.3471 (0.0125) & 0.3584 (0.0047)\\
			Hotel 5 & 0.3219 (0.0041) & 0.3222 (0.0069) & \textbf{0.3203} (0.0046) & 0.3259 (0.0058)\\
			\hline
		\end{tabular}
		\label{tab:basic-information}
	\end{center}
	\vspace{-4mm}
\end{table}

{\revise
\subsection{Generalizability to Unseen Assortments} \label{sec:unseen_numerical}
\Copy{rev:variation1}{
One of the major challenges in the estimation of the DCM, compared to other statistical estimation problems, is the limited coverage of the training data.
In particular, the seller tends to offer a few assortments that they believe are profitable.
As a result, in the training data $\{\bm x_t\}_{t=1}^T$ only makes up a small fraction of the total $2^N$ available assortments.
Any estimation procedure needs to address the following issue: can the DCM estimated from a few assortments generalize to the assortments that have never been offered in the training data?

While the theoretical foundation has been studied in Section~\ref{sec:nearest-neighbor},
we show the numerical performance in this section.
%: theoretically, random forests adaptively choose nearest neighbors, and the choice probability of an assortment can be generalized to ``neighboring'' assortments (those with one more or one less product), as long as the underlying DCM possesses a certain degree of continuity in terms of the offered set $\bm x$.
Consider $N=10$ products.
We randomly choose $T_1$ assortments to offer in the training set and thus there are $T/T_1$ transactions for each assortment on average.
%``Large'' assortments refer to those with many products ($7 \leq |S| \leq 10$).
We use the rank-based DCM to generate the data with $k=4$ and 10 customer types.
The rank-based DCM is shown to be equivalent to RUM \citep{block1959random}.
Consumers are divided into $4$ or $10$ different types, each with a random preference permutation of all the products and the no-purchase alternative (see, e.g., \citealt{farias2013nonparametric}).
%For a given assortment of products, each type of consumer will purchase the product ranked the highest in her preference rank.
%If the no-purchase option is ranked higher than all the products in the assortment, then the customer does not purchase anything.
We randomly generate the fractions of customer types as follows:
draw uniform random variables $u_i$ between zero and one for $i=1,...,k$, and then set ${u_i}/{\sum_{j=1}^k u_j}$ to be the proportion of type $i$, $i=1,...,k$.

The performance is evaluated by the root mean squared error (RMSE), which is also used in \citet{berbeglia2022comparative}:
\begin{equation}
	\label{RMSE_soft}
	R M S E\left(\PR, \hat\PR\right)=\sqrt{\frac{\sum_{S \subseteq [N]} \sum_{j \in S \cup\{0\}}\left(\PR(j , S)-\hat\PR(j , S)\right)^{2}}{\sum_{S \subseteq [N]}(|S|+1)}},
\end{equation}
where $\PR$ denotes the actual choice probability and $\hat\PR$ denotes the estimated choice probability.
Because the actual choice probability is known, we can compute the RMSE theoretically by enumerating all the assortments.
For each setting, we generate 100 independent training datasets and compute the average and standard deviation of the RMSE.

The results are shown in Tables~\ref{tab:rank_list_4} and \ref{tab:rank_list_10}.
%\begin{table}[]
%	\centering
%	\begin{tabular}{rccc}
%		\hline
%		$\tilde{T}$ & Rank-based $k=4$ & Rank-based $k=10$ & MNL \\
%		\hline
%		$5$ & 0.193 (0.064) & 0.156 (0.034) & 0.133 (0.041) \\
%		$10$ & 0.158 (0.034) & 0.128 (0.026) & 0.111 (0.035)\\
%		$5$ (large) & 0.181 (0.056) & 0.124 (0.028) & 0.038 (0.017)\\
%		$10$ (large) & 0.150 (0.047) & 0.109 (0.027) & 0.034 (0.014)\\
%		$50$ & 0.087 (0.025) & 0.073 (0.014) & 0.054 (0.008)\\
%		$100$ & 0.068 (0.014) & 0.060 (0.007) & 0.042 (0.004)\\
%		$600$ & 0.045 (0.006) & 0.046 (0.004) & 0.037 (0.002)\\
%		\hline
%	\end{tabular}
%	\caption{The average and standard deviation of RMSE using random forests when there are a few assortments in the training data. The column represents different ground-truth models.}
%	\label{tab:sparse assortment}
%\end{table}
Notice that there are $2^N-1=1023$ possible available assortments.
Therefore, for example, $T_1=30$ implies that less than $1/30$ of the total assortments have been offered in the training data.
In general, the random forest outperforms the MNL model and the decision forest, and is on par with the Markov chain DCM when $T$ is large.
Except for the MNL, which does not improve significantly with higher assortment variation (larger $T_1$), all other models benefit from it.
For small sample sizes, the Markov chain model performs better.
It is likely due to the similarity between the rank-based model and the Markov chain model, e.g., both are regular choice models (definition in Section~\ref{sec:literature}).
As we shall see, when the underlying model is irregular, the random forest tends to have the best performance (see Table~\ref{tab:comparison result}).
%and the RMSE still converges as $T$ increases, while the MNL and Markov chain both have misspecification errors .
Moreover, in the real datasets, when the underlying model is unknown and likely to be irregular, the random forest performs better than the Markov chain model (see Table~\ref{tab:IRI KFold result}).}

%When the actual DCM is the MNL model, training random forests with 10 large assortments performs better than training with $600$ randomly chosen assortments.

%We also remark that the generalizability of random forests does not only depend on the estimator, but also the actual DCM.
%Some DCMs are more accessible to generalization to unseen assortments.
%It remains an exciting future research to formalize the statement and theoretically quantify the generalizability of a DCM to unseen data in the framework of random forests.

\begin{table}[t]
	\begin{center}
		\caption{\revise The average RMSE and the standard deviation for random forests, MNL, the Markov chain model, and the decision forest when the training data is generated by the rank-based model with 4 rankings.}
		\scalebox{0.85}{ \revise
			\begin{tabular}{rlcccclcccclcccc}
				\hline
				$T$     &  & \multicolumn{4}{c}{$T_1 = 30$} &  & \multicolumn{4}{c}{$T_1 = 100$} &  & \multicolumn{4}{c}{$T_1 = 300$} \\ \cline{3-6} \cline{8-11} \cline{13-16}
				&  & RF      & MNL     & MC   & DF   &  & RF       & MNL     & MC  & DF    &  & RF       & MNL     & MC  & DF    \\ \hline
				$ 300$   &  & 0.103   & 0.117   & 0.077 & 0.127  &  & 0.094    & 0.113   & 0.065  & 0.108 &  & 0.092    & 0.111   & 0.062 & 0.104  \\
				&  & (0.014) & (0.016) & (0.021) & (0.028) &  & (0.011)  & (0.016) & (0.017) & (0.021) &  & (0.011)  & (0.014) & (0.015) & (0.021) \\ \hline
				%            $ 600$   &  & 0.094   & 0.113   & 0.071   &  & 0.080    & 0.109   & 0.054   &  & 0.076    & 0.108   & 0.052   \\
				%                          &  & (0.014) & (0.018) & (0.020) &  & (0.010)  & (0.017) & (0.016) &  & (0.008)  & (0.017) & (0.015) \\ \hline
				%            $ 1500$  &  & 0.091   & 0.116   & 0.068   &  & 0.067    & 0.111   & 0.052   &  & 0.060    & 0.110   & 0.049   \\
				%                          &  & (0.017) & (0.015) & (0.022) &  & (0.010)  & (0.014) & (0.016) &  & (0.007)  & (0.014) & (0.015) \\ \hline
				$ 3000$  &  & 0.090   & 0.114   & 0.063 & 0.111  &  & 0.060    & 0.109   & 0.048 & 0.092  &  & 0.050    & 0.108   & 0.044 & 0.083  \\
				&  & (0.016) & (0.017) & (0.024) & (0.029) & & (0.009)  & (0.015) & (0.017) & (0.026) &  & (0.007)  & (0.015) & (0.017) & (0.030)\\ \hline
				%            $ 6000$  &  & 0.083   & 0.108   & 0.058   &  & 0.056    & 0.105   & 0.045   &  & 0.043    & 0.105   & 0.041   \\
				%                          &  & (0.018) & (0.021) & (0.024) &  & (0.012)  & (0.020) & (0.018) &  & (0.007)  & (0.020) & (0.017) \\ \hline
				$ 20000$ &  & 0.084   & 0.110   & 0.063 & 0.109  &  & 0.053    & 0.107   & 0.048 & 0.081  &  & 0.038    & 0.108   & 0.043 & 0.075   \\
				&  & (0.019) & (0.022) & (0.025) & (0.034) & & (0.011)  & (0.019) & (0.017) & (0.031) &  & (0.005)  & (0.016) & (0.017) & (0.030) \\ \hline
		\end{tabular}}
		\label{tab:rank_list_4}
	\end{center}
	\vspace{-4mm}
\end{table}

\begin{table}[t]
	\begin{center}
		\caption{\revise The average RMSE and the standard deviation for random forests, MNL, the Markov chain model and the decision forest when the training data is generated by the rank-based model with 10 rankings.}
		\scalebox{0.85}{\revise
			\begin{tabular}{rlcccclcccclcccc}
				\hline
				$T$     &  & \multicolumn{4}{c}{$T_1 = 30$} &  & \multicolumn{4}{c}{$T_1 = 100$} &  & \multicolumn{4}{c}{$T_1 = 300$} \\ \cline{3-6} \cline{8-11} \cline{13-16}
				&  & RF      & MNL     & MC   & DF   &  & RF       & MNL     & MC  & DF    &  & RF       & MNL     & MC  & DF    \\ \hline
				$ 300$   &  & 0.084   & 0.080   & 0.073  & 0.111  &  & 0.079    & 0.079   & 0.067 & 0.102  &  & 0.077    & 0.079   & 0.064 & 0.101  \\
				&  & (0.009) & (0.009) & (0.012) & (0.013) & & (0.009)  & (0.008) & (0.010) & (0.010) & & (0.009)  & (0.008) & (0.009) & (0.010)\\ \hline
				$ 3000$  &  & 0.071   & 0.077   & 0.050 & 0.086  &  & 0.054    & 0.074   & 0.043 & 0.068  &  & 0.048    & 0.074   & 0.041 & 0.064  \\
				&  & (0.009) & (0.009) & (0.009) & (0.018) & & (0.006)  & (0.008) & (0.007) & (0.017) & & (0.004)  & (0.008) & (0.006) & (0.013)\\ \hline
				$ 20000$ &  & 0.067   & 0.075   & 0.047 & 0.081  &  & 0.046    & 0.074   & 0.039 & 0.057  &  & 0.039    & 0.072   & 0.038 & 0.056  \\
				&  & (0.009) & (0.010) & (0.009) & (0.019) & & (0.005)  & (0.007) & (0.005) & (0.018) & & (0.003)  & (0.008) & (0.005) & (0.017)\\ \hline
		\end{tabular}}
		\label{tab:rank_list_10}
	\end{center}
\vspace{-4mm}
\end{table}

\Copy{rev:variation2}{
When the DCM is outside the scope of RUM and the regularity is violated, the Markov chain and MNL model may fail to specify the choice behavior correctly.
Next, we generate choice data using the comparison-based DCM \citep{huber1982adding}, described below.
Consumers implicitly score various attributes of the products in the assortment.
Then they undergo an internal round-robin tournament of all the products.
When comparing two products from the assortment, the customer checks their attributes and counts the number of preferable attributes of both products.
Eventually, the customer counts the total number of ``wins'' in the pairwise comparisons.
Here we assume that customers choose with equal probability if there is a tie.

In the experiment, we consider $N = 10$ products.
Consumers are divided into $k=2$ different types, whose proportions are randomly generated between 0 and 1.
Each type assigns uniform random variables between 0 and 1 to the five attributes of all the products (including the no-purchase option).
Again we use the RMSE in \eqref{RMSE_soft} to compare the predictive accuracy.
Like in the previous experiment, each setting is simulated 100 times. The result is shown in Table~\ref{tab:comparison result}.}

\begin{table}[t]
	\begin{center}
		\caption{\revise The average RMSE and the standard deviation using random forests, MNL and Markov chain Model under comparison-based DCM with different numbers of observed assortments in the training data ($T_1$).}
		\revise
		\def\arraystretch{0.9}\begin{tabular}{rlccclccclccc}
			\hline
			$T$     &  & \multicolumn{3}{c}{$T_1 = 30$} &  & \multicolumn{3}{c}{$T_1 = 100$} &  & \multicolumn{3}{c}{$T_1 = 300$} \\ \cline{3-5} \cline{7-9} \cline{11-13}
			&  & RF      & MNL     & MC      &  & RF       & MNL     & MC      &  & RF       & MNL     & MC      \\ \hline
			$ 300$   &  & 0.153   & 0.164   & 0.149   &  & 0.140    & 0.155   & 0.128   &  & 0.139    & 0.153   & 0.125   \\
			&  & (0.029) & (0.030) & (0.035) &  & (0.024)  & (0.029) & (0.033) &  & (0.023)  & (0.028) & (0.032) \\ \hline
			$ 3000$  &  & 0.141   & 0.158   & 0.142   &  & 0.099    & 0.147   & 0.121   &  & 0.084    & 0.145   & 0.116   \\
			&  & (0.034) & (0.041) & (0.045) &  & (0.023)  & (0.037) & (0.038) &  & (0.020)  & (0.037) & (0.038) \\ \hline
			$ 20000$ &  & 0.135   & 0.154   & 0.135   &  & 0.098    & 0.145   & 0.124   &  & 0.063    & 0.138   & 0.109   \\
			&  & (0.032) & (0.032) & (0.034) &  & (0.023)  & (0.030) & (0.033) &  & (0.017)  & (0.037) & (0.037) \\ \hline
		\end{tabular}
		\label{tab:comparison result}
	\end{center}
	\vspace{-4mm}
\end{table}

\Copy{rev:variation3}{
Because of the irregularity, both the MNL and the Markov chain DCM are outperformed by the random forest, especially when the data size increases.
Notice that as $T\to\infty$, the random forest is able to achieve diminishing RMSE, while the other two models do not improve because of the misspecification error.
Like the previous experiment, the random forest achieves stable performances with small standard deviations.

We run our algorithm on a server with 2.50GHz dual-core Inter Xeon CPU E5-2680 and 256GB memory.
The running time is shown in Table~\ref{tab:Running time}.
In terms of computation time, the random forest is the most efficient, while the EM algorithm used to estimate the Markov chain model takes much longer.
When $T = 20000$, the random forest spends 1/160 of the computation time of the Markov chain model.
Notice that the running time of random forests only increases slightly for large training sets.}}
\begin{table}[t]
	\begin{center}
		\caption{\revise The average running time of random forests, MNL, the Markov chain Model and the decision forest.}
		\revise
		\def\arraystretch{1}\begin{tabular}{rcccc}
			\hline
			$T$ & RF & MNL & MC & DF\\
			\hline
			$300$ & 1.4s & 0.3s & 14.5s & 19.7s\\
			%		$750$ & 1.5s & 0.6s & 24.1s \\
			%		$1500$ & 1.7s & 1.6s & 60.2s \\
			$3000$ & 1.9s & 3.2s & 120.9s & 58.1s\\
			%		$6000$ & 2.4s & 5.9s & 226.0s \\
			$20000$ & 5.1s & 22.6s & 819.9s & 67.3s \\
			\hline
		\end{tabular}
		\label{tab:Running time}
	\end{center}
\vspace{-4mm}
\end{table}

{\revise
\subsection{Scalability to a Large Number of Products} \label{sec:scalability}
\Copy{rev:scalability1}{
A major challenge in discrete choice models (DCMs) is scalability, as many models become increasingly complex and difficult to estimate when the number of products $N$ is large. In this section, we demonstrate that random forests remain practical and effective even when applied to settings with a large number of products.

We test the random forest when $N = 30$ and $50$. Here we assume the number of observed assortments $T_1 = 300$. All assortments cannot be enumerated to compute the RMSE. We thereby randomly sample 10,000 assortments as a testing set $\mathcal{T}_{test}$ and approximate the RMSE as follows.
\begin{equation}
	\label{eq:RMSE_MC}
	R M S E\left(\PR, \hat\PR, \mathcal{T}_{test}\right)=\sqrt{\frac{\sum_{S \subseteq \mathcal{T}_{test}} \sum_{j \in S \cup\{0\}}\left(\PR(j , S)-\hat\PR(j , S)\right)^{2}}{\sum_{S \subseteq \mathcal{T}_{test}}(|S|+1)}}.
\end{equation}

Tables~\ref{tab:rank_list_4_N} and \ref{tab:rank_list_10_N} show the result from 100 simulated datasets under the rank-based model with 4 and 10 rankings, respectively.
The random forest significantly outperforms MNL and shows an improvement in performance for larger $N$.
Notice that the RMSE is typically smaller when $N$ is large because the denominator of \eqref{eq:RMSE_MC} increases in $N$.
We do not test the Markov chain model and the decision forest due to %its
high computational cost for the problem scale. The EM algorithms converge slowly for more than 900 and 2,500 parameters under the Markov chain model.}
}

\begin{table}[t]
	\begin{center}
		\caption{\revise The average RMSE and the standard deviation for random forests and MNL for different $N$ when the training data is generated by the rank-based model with 4 rankings.}
		\revise
		\def\arraystretch{0.9}\begin{tabular}{rlcclcclcc}
			\hline
			$T$     &  & \multicolumn{2}{c}{$N= 10$} &  & \multicolumn{2}{c}{$N= 30$} &  & \multicolumn{2}{c}{$N = 50$} \\ \cline{3-4} \cline{6-7} \cline{9-10}
			&  & RF      & MNL         &  & RF       & MNL         &  & RF       & MNL         \\ \hline
			$ 300$    &  &  0.103    &  0.117   & &  0.069    &  0.078   & &  0.056   & 0.062   \\
			&  & (0.014)  & (0.016)  & & (0.004)  & (0.006)  & & (0.003) & (0.005) \\ \hline
			$ 3000$  &  & 0.090     & 0.114     & & 0.050     & 0.075    & &  0.043   & 0.060 \\
			&  & (0.016)  & (0.017)  & & (0.003)   & (0.006) & & (0.002)  & (0.004) \\ \hline
			$ 20000$ &  & 0.084    & 0.110     & & 0.045     & 0.075   & & 0.040     & 0.060   \\
			&  & (0.019)  & (0.022)  & & (0.004)  & (0.007) & & (0.003)  & (0.005) \\ \hline
		\end{tabular}
		\label{tab:rank_list_4_N}
	\end{center}
	\vspace{-4mm}
\end{table}

\begin{table}[t]
	\begin{center}
		\caption{\revise The average RMSE and the standard deviation for random forests and MNL for different $N$ when the training data is generated by the rank-based model with 10 rankings.}
		\revise
		\def\arraystretch{0.9}\begin{tabular}{rlcclcclcc}
			\hline
			$T$     &  & \multicolumn{2}{c}{$N= 10$} &  & \multicolumn{2}{c}{$N= 30$} &  & \multicolumn{2}{c}{$N = 50$} \\ \cline{3-4} \cline{6-7} \cline{9-10}
			&  & RF      & MNL         &  & RF       & MNL         &  & RF       & MNL         \\ \hline
			$ 300$    &  &  0.077    &  0.079   & &  0.052    &  0.054    & &  0.041   & 0.043   \\
			&  & (0.009)  & (0.008)  & & (0.003)  & (0.003)  & & (0.002) & (0.003) \\ \hline
			$ 3000$  &  & 0.048     & 0.074     & & 0.040     & 0.051    & &  0.033   & 0.040 \\
			&  & (0.004)  & (0.008)  & & (0.002)   & (0.004) & & (0.002)  & (0.003) \\ \hline
			$ 20000$ &  & 0.039    & 0.072     & & 0.036     & 0.051   & & 0.030     & 0.041   \\
			&  & (0.003)  & (0.008)  & & (0.002)  & (0.004) & & (0.001)  & (0.003) \\ \hline
		\end{tabular}
		\label{tab:rank_list_10_N}
	\end{center}
	\vspace{-4mm}
\end{table}

{\revise
\Copy{rev:scalability2}{
We also consider $N \in \{13, 25, 50, 100, 200, 400\}$ products and generate data using a rank-based DCM with $k = 10$ customer types. For the training set, we test $T_1 \in \{130, 250, 500, 1000, 2000, 4000\}$, generating 10 transactions per assortment, so that the total number of transactions is $T = 10T_1$. Performance is evaluated using RMSE in \eqref{RMSE_soft}, with a test set of 10,000 randomly generated assortments. For each configuration, we generate 100 independent training and test datasets and report the mean and standard deviation of the RMSE in Table~\ref{tab:RMSE_large_NT}. The results show that RMSE decreases with larger $T$ for all $N$, clearly illustrating convergence. We also find that random forests scale well to hundreds of products without requiring excessively large training samples.  
Table~\ref{tab:run_time} reports the running times. Random forests remain highly efficient even for large $N$ and $T$. In summary, for large-scale problems with large $N$ and $T$, random forests outperform MNL in predictive accuracy while maintaining comparable computational efficiency.
}}

\begin{table}[t]
	\begin{center}
		\caption{\revise The average RMSE and the standard deviation for random forests for large scale of $N$ when the training data is generated by the rank-based model with 10 rankings.} \label{tab:RMSE_large_NT}
		\revise
		\def\arraystretch{1}\begin{tabular}{rcccccc}
			\hline
			$T$ & $N = 13$ & $N = 25$ & $N = 50$ & $N = 100$ & $N = 200$ & $N = 400$\\
			\hline
			1300 & 0.059 (0.005) & 0.050 (0.004) & 0.039 (0.003) & 0.030 (0.002) & 0.022 (0.002) & 0.016 (0.001) \\
			2500 & 0.051 (0.004) & 0.044 (0.003) & 0.036 (0.002) & 0.028 (0.002) & 0.021 (0.002) & 0.016 (0.001) \\
			5000 & 0.044 (0.003) & 0.040 (0.002) & 0.033 (0.002) & 0.026 (0.002) & 0.020 (0.002) & 0.015 (0.001) \\
			10000 & 0.038 (0.003) & 0.035 (0.002) & 0.030 (0.001) & 0.024 (0.001) & 0.019 (0.001) & 0.014 (0.001) \\
			20000 & 0.033 (0.002) & 0.032 (0.002) & 0.028 (0.001) & 0.023 (0.001) & 0.018 (0.001) & 0.014 (0.001) \\
			40000 & 0.031 (0.002) & 0.029 (0.002) & 0.026 (0.001) & 0.022 (0.001) & 0.017 (0.001) & 0.013 (0.001) \\
			\hline
		\end{tabular}
	\end{center}
\end{table}

\begin{table}[t]
	\begin{center}
		\caption{\revise The average running time of random forests for large scale of $N$ when the training data is generated by the rank-based model with 10 rankings.} \label{tab:run_time}
		\revise
		\def\arraystretch{1}\begin{tabular}{rcccccc}
			\hline
			$T$ & $N = 13$ & $N = 25$ & $N = 50$ & $N = 100$ & $N = 200$ & $N = 400$\\
			\hline
			1300 & 1.6s & 1.9s & 2.1s & 2.2s & 3.3s & 5.4s \\
			2500 & 1.8s & 2.1s & 2.4s & 3.0s & 3.6s & 6.6s \\
			5000 & 2.5s & 3.4s & 3.6s & 4.3s & 5.4s & 9.6s \\
			10000 & 3.7s & 4.8s & 6.1s & 7.8s & 10.2s & 16.1s \\
			20000 & 6.3s & 8.6s & 11.5s & 14.7s & 20.8s & 30.8s \\
			40000 & 11.4s & 16.9s & 21.9s & 31.1s & 43.7s & 60.2s \\
			\hline
		\end{tabular}
	\end{center}
\end{table}

{\revise
\subsection{Numerical Experiments for Incorporating Pricing Information} \label{sec:pricing_numerical}
\Copy{rev:pricing_numerical1}{
	In this section, we test the performance of random forests when the price information is incorporated.
	We use the MNL model to generate the choice data. Let $\bm u$ denote the expected utility of the products and $\bm p$ their prices. For given assortment $S$, the choice probabilities of product $i \in S$ and the outside option are:
	\begin{equation}
		\label{eq:MNL choice probability}
		p(i, S) = \frac{\exp(u_i-p_i)}{1+\sum_{j \in S} \exp(u_j-p_j)},\quad p(0, S) = \frac{1}{1+\sum_{j \in S} \exp(u_j-p_j)}.
	\end{equation}
	
	Consider $N=10$ products.
	We generate $u_i$ as uniform random variables between 0 and 5 for each product.
	Then we generate a price uniformly on $[0,5]$ for each product.
	Note that in this experiment, all products are available in an assortment.
	As explained in Section~\ref{sec:price}, we use the link function $g(x) = \exp(-x)$.
	The customer's choice then follows the choice probability \eqref{eq:MNL choice probability}.
	The RMSE in \eqref{RMSE_soft} is no longer applicable because the assortments and prices cannot be exhausted.
	To evaluate the performance, we randomly generate $1000$ assortments and prices according to the same distribution as the training data.
	Then we evaluate the empirical RMSE in the test data.
	
	In addition to the MNL model, we also use linear demand as a benchmark.
	Under the linear demand, a customer purchases product $i$ with probability
	$p(i,\bm p)=(a_i + \sum_{j\in N} b_{ij} p_j)^+$ for some coefficients $a_i$ and $b_{ij}$.
	The no-purchase probability is thus $1-\sum_{i=1}^N p(i, \bm p)$.
	We estimate the coefficients using linear regression.
	Note that linear demand is very popular in modeling demand for multiple products with price information.
	We investigate the performance of random forests and the two benchmarks for different sizes of training data $T \in \{300,1500,5000\}$.}

\begin{table}[t]
	\begin{center}
		\caption{\revise The RMSE of random forests, MNL and linear demand with price information when the underlying model is MNL.}
		\revise
		\begin{tabular}{rccc}
			\hline
			$T$   &  RF  & MNL & Linear \\
			\hline
			300  &  0.075 (0.004)  &  0.020 (0.005)  &  0.064 (0.004) \\
			1500 & 0.050 (0.002)  &  0.009 (0.002)  &  0.053 (0.003) \\
			5000 & 0.039 (0.002)  &  0.005 (0.001)  &  0.050 (0.003) \\
			\hline
		\end{tabular}
		\label{tab:pricing result}
	\end{center}
\end{table}

\Copy{rev:pricing_numerical2}{
	From Table~\ref{tab:pricing result}, it is not surprising that MNL has the best performance, because the data is generated by the MNL model.
	However, when $T=1500$ and 5000, random forests are able to outperform the linear model, which is believed to be fairly robust and used widely.
	We believe it is due to the model misspecification of the linear model. It further demonstrates the benefit of random forests.

	We also test the performance when the ground truth model is linear demand, as shown in Table~\ref{tab:pricing result_linear_ground}.
	The coefficient $a_i$ is uniformly generated on $[0, 0.3]$ and price sensitivity $b_{ii}$ is uniform on $[-0.1, 0]$.
	The cross-sensitivity $b_{ij}, j \neq i$ is uniform on $[-0.03, 0.03]$ to capture the possibility of substitution or complementary.
	The linear model has the best performance, followed by random forests. Moreover, random forests significantly outperform the MNL as $T$ increases.
	From the cross-validation results of Tables~\ref{tab:pricing result} and \ref{tab:pricing result_linear_ground}, we can conclude that random forests have robust performance and benefit from large data sizes for different underlying models.} 
	Moreover, we also numerically show random forests can handle aggregated data in Online Appendix~\ref{sec:aggregated_numerical}.

\begin{table}[t]
	\begin{center}
		\caption{\revise The RMSE of random forests, MNL and linear demand with price information when the underlying model is linear demand.}
		\revise
		\begin{tabular}{rccc}
			\hline
			$T$   &  RF  & MNL & Linear \\
			\hline
			300  &  0.057 (0.008)  &  0.106 (0.014)  &  0.053 (0.005) \\
			1500 & 0.045 (0.005)  &  0.105 (0.017)  &  0.032 (0.004) \\
			5000 & 0.037 (0.005)  &  0.104 (0.015)  &  0.026 (0.005) \\
			\hline
		\end{tabular}
		\label{tab:pricing result_linear_ground}
	\end{center}
	\vspace{-4mm}
\end{table}}

\section{Concluding Remarks}\label{sec:conclusion}
This paper demonstrates some theoretical and practical benefits of using random forests to estimate discrete choice models, especially when the data is relatively abundant.
We also provide comprehensive numerical experiments in Section \ref{sec:numerical}.
It opens up a series of exciting new research questions:
\begin{itemize}
	\item What type of DCMs can be estimated well by random forests and have higher generalizability to unseen assortments?
	\item As we use the choice forest to approximate DCMs, how can we translate the properties of a DCM to the topological structure of decision trees?
	\item Can we provide finite-sample error bounds for the performance of random forests, with or without the price information?
	\item What properties does the product importance index MDI have?
	%\item Given a binary choice forest, possibly estimated by random forests, can we compute the optimal assortment and prices efficiently?
\end{itemize}
We hope to address some of these questions in future research.

\vspace{-2mm}
\theendnotes

\vspace{-2mm}
\bibliographystyle{informs2014} % outcomment this and next line in Case 1
\bibliography{ref.bib}

%\bibliographystyle{ormsv080} % outcomment this and next line in Case 1
%\bibliography{ref.bib}

\newpage

\begin{appendices}
{\SingleSpacedXI
\noindent {\large \textbf{Appendix}}
\section{Proofs} \label{sec:proof}

\proof{Proof of Theorem~\ref{thm:consistency}.}
We first prove that for a single decision tree, there is a high probability that the number of observations chosen in Step~\ref{step:select-subsample} in which $\bm x$ is offered is large.
More precisely, let $X_t = \I{\bm x_t= \bm x}$. It is easy to see that $\sum_{t=1}^T X_t = k_T$.
Step~\ref{step:select-subsample} randomly selects $z_T$ observations out of the $T$ with replacement.
Denote the bootstrap sample of $\left\{X_1,\dots,X_T\right\}$ by $\left\{Y_1,\dots,Y_{z_T}\right\}$.
By Hoeffding's inequality, we have the following concentration inequality
\begin{equation}\label{eq:sum-Yj-large}
	\PR\left( \left|\frac{\sum_{j=1}^{z_T}Y_j}{z_T} - \frac{k_T}{T}\right|> \epsilon\right)\le 2\exp\left( - 2z_T\epsilon^2\right)
\end{equation}
for any $\epsilon>0$.
In other words, the bootstrap sample in Step~\ref{step:select-subsample} does not deviate too far from the population as long as $z_T$ is large.
As we choose $\epsilon<\lim_{T\to\infty}k_T/T$, it implies that $\sum_{j=1}^{z_T}Y_j\to\infty$ and in particular
\begin{equation}\label{eq:greater-than-lT}
	\lim_{T\to\infty}\PR(\sum_{j=1}^{z_T}Y_j>l_T) = 1.
\end{equation}

Next we show that given $\sum_{j=1}^{z_T}Y_j>l_T$ for a decision tree, the leaf node that contains $\bm x$ only contains observations with $Y_j=1$.
That is, the terminal leaf containing $\bm x$ is a single corner of the unit hypercube.
If the terminal leaf node containing an observation with predictor $\bm x$, then it has no less than $\sum_{j=1}^{z_T}Y_j$ observations, because all the $\sum_{j=1}^{z_T}Y_j$ samples used to train the tree fall on the same corner in the predictor space.
If another observation with a different predictor is in the same leaf node, then it contradicts
Step~\ref{step:while-split} in the algorithm, because it would imply that another split could be performed.
Suppose $\left\{R_1,\dots,R_M\right\}$ is the final partition corresponding to the decision tree.
As a result, in the region $R_j$ such that $\bm x\in R_j$, we must have that $t_b(\bm x)$ is a random sample from the $\sum_{j=1}^{z_T}Y_j$ customer choices, according to Step~\ref{step:random-label}.

Now consider the estimated choice probability of a given assortment $\bm x$ from the random forest: $\sum_{b=1}^{B_T} \frac{1}{B_T}\I{t_b(\bm x)=i}$.
Notice that $t_b(\bm x)$, $b=1,\dots,B_T$, are IID given the training set.
By Hoeffding's inequality, conditional on $\left\{(i_t,\bm x_t)\right\}_{t=1}^T$,
\begin{equation}\label{eq:hoeffding1}
	\PR\left(\left|\sum_{b=1}^{B_T} \frac{1}{B_T}\I{t_b(\bm x)=i}-\PR(t_b(\bm x)=i|\left\{(i_t,\bm x_t)\right\}_{t=1}^T)\right|>\epsilon_1\bigg|\left\{(i_t,\bm x_t)\right\}_{t=1}^T\right)\le 2e^{-2B_T\epsilon_1^2},
\end{equation}
for all $\epsilon_1>0$.
Next we analyze the probability $\PR(t_b(\bm x)=i|\left\{(i_t,\bm x_t)\right\}_{t=1}^T)$ for a single decision tree.
By the previous paragraph, conditional on $\sum_{j=1}^{z_T}Y_j>l_T$, the output of a single tree $t_b(\bm x)$ is randomly chosen from the class labels of $\sum_{j=1}^{z_T}Y_j$ observations whose predictor is $\bm x$.
Let $Z_j$ be the class label of the $j$th bootstrap sample $Y_j$
in Step~\ref{step:select-subsample}.
Therefore, conditional on the event $\sum_{j=1}^{z_T}Y_j>l_T$ and the training data, we have
\begin{equation}\label{eq:iid-tree}
	\PR(t_b(\bm x)=i|\left\{(i_t,\bm x_t)\right\}_{t=1}^T,\sum_{j=1}^{z_T}Y_j>l_T) = \sum_{j=1}^{z_T} \frac{Y_j\I{Z_j=i}}{\sum_{j=1}^{z_T} Y_j}.
\end{equation}
Because $\{Y_j\I{Z_j=i}\}_{j=1}^{z_T}$ is a bootstrap sample, having IID distribution
\begin{equation*}
	\PR(Y_j\I{Z_j=i}=1) = \frac{\sum_{t=1}^T\I{i_t=i,\bm x_t=\bm x}}{T}
\end{equation*}
given the training data, we apply Hoeffding's inequality again
\begin{equation}\label{eq:hoeffding2}
	\PR\left( \left| \frac{\sum_{j=1}^{z_T}Y_j\I{Z_j=i}}{z_T} - \frac{\sum_{t=1}^T\I{i_t=i,\bm x_t=\bm x}}{T}\right|>\epsilon_2\bigg|\left\{(i_t,\bm x_t)\right\}_{t=1}^T \right)\le 2\exp(-2z_T \epsilon_2^2)
\end{equation}
for all $\epsilon_2>0$.
Now applying Hoeffding's inequality to $\sum_{t=1}^T\I{i_t=i,\bm x_t=\bm x}$ again, and because of Assumption~\ref{asp:independent-choice}, we have that
\begin{equation}\label{eq:hoeffding4}
	\PR\left(\left| \frac{\sum_{t=1}^T\I{i_t=i,\bm x_t=\bm x}}{k_T}-p(i,\bm x)\right|> \epsilon_3\right)\le 2\exp(-2k_T\epsilon_3^2)
\end{equation}
for all $\epsilon_3>0$.

With the above results, we can bound the target quantity
\begin{equation*} \begin{aligned}
	&\PR\left(\left|\sum_{b=1}^{B_T} \frac{1}{B_T}\I{t_b(\bm x)=i}-p(i,\bm x)\right|>\epsilon\right)\\
	&= \E\left[\PR\left(\left|\sum_{b=1}^{B_T} \frac{1}{B_T}\I{t_b(\bm x)=i}-p(i,\bm x)\right|>\epsilon\bigg| \left\{(i_t,\bm x_t)\right\}_{t=1}^T\right)\right]\\
	&\le \E\left[\PR\left(\left|\sum_{b=1}^{B_T} \frac{1}{B_T}\I{t_b(\bm x)=i}-\PR(t_b(\bm x)=i|\{(i_t,\bm x_t)\}_{t=1}^T)\right|>\epsilon/2\bigg| \{(i_t,\bm x_t)\}_{t=1}^T\right)\right]\\
	&\quad +\E\left[\PR\left(\left|p(i,\bm x)-\PR(t_b(\bm x)=i|\{(i_t,\bm x_t)\}_{t=1}^T)\right|>\epsilon/2\bigg| \{(i_t,\bm x_t)\}_{t=1}^T\right)\right]
\end{aligned} \end{equation*}
By \eqref{eq:hoeffding1}, the first term is bounded by $2\exp(-B_T\epsilon^2/2)$ which converges to zero as $B_T\to\infty$.
To bound the second term, note that
\begin{align}\label{eq:second-term-split}
	&\PR\left(\left|p(i,\bm x)-\PR(t_b(\bm x)=i|\{(i_t,\bm x_t)\}_{t=1}^T)\right|>\epsilon/2\bigg| \{(i_t,\bm x_t)\}_{t=1}^T\right)\notag\\
	&\le \PR\left(\left|p(i,\bm x)- \frac{\sum_{t=1}^T\I{i_t=i,\bm x_t=\bm x}}{k_T}\right|>\epsilon/6\bigg| \{(i_t,\bm x_t)\}_{t=1}^T\right)\notag\\
	&\quad+ \PR\left(\left|\frac{\sum_{t=1}^T\I{i_t=i,\bm x_t=\bm x}}{k_T}-\sum_{j=1}^{z_T} \frac{Y_j\I{Z_j=i}}{\sum_{j=1}^{z_T} Y_j}\right|>\epsilon/6\bigg| \{(i_t,\bm x_t)\}_{t=1}^T\right)\notag\\
	&\quad + \PR\left(\left|\sum_{j=1}^{z_T} \frac{Y_j\I{Z_j=i}}{\sum_{j=1}^{z_T} Y_j}-\PR(t_b(\bm x)=i|\{(i_t,\bm x_t)\}_{t=1}^T)\right|>\epsilon/6\bigg| \{(i_t,\bm x_t)\}_{t=1}^T\right)
\end{align}
The expected value of the first term in \eqref{eq:second-term-split} is bounded by $2\exp(-k_T\epsilon^2/18)$ by \eqref{eq:hoeffding4}, which converges to zero as $k_T\to\infty$.
For the second term of \eqref{eq:second-term-split}, we have that
\begin{align}
	&\PR\left(\left|\frac{\sum_{t=1}^T\I{i_t=i,\bm x_t=\bm x}}{k_T}-\sum_{j=1}^{z_T} \frac{Y_j\I{Z_j=i}}{\sum_{j=1}^{z_T} Y_j}\right|>\epsilon/6\bigg| \{(i_t,\bm x_t)\}_{t=1}^T\right)\notag\\
	%        &= \PR\left( \sum_{j=1}^{z_T}Y_j\le l_T\bigg|\{(i_t,\bm x_t)\}_{t=1}^T\right)\PR\left(\left|\frac{\sum_{t=1}^T\I{i_t=i,\bm x_t=\bm x}}{k_T}-\sum_{j=1}^{z_T} \frac{Y_j\I{Z_j=i}}{\sum_{j=1}^{z_T} Y_j}\right|>\epsilon/6\bigg| \{(i_t,\bm x_t)\}_{t=1}^T, \sum_{j=1}^{z_T}Y_j\le l_T\right) \notag\\
	%        &\quad + \PR\left( \sum_{j=1}^{z_T}Y_j> l_T\bigg|\{(i_t,\bm x_t)\}_{t=1}^T\right)\PR\left(\left|\frac{\sum_{t=1}^T\I{i_t=i,\bm x_t=\bm x}}{k_T}-\sum_{j=1}^{z_T} \frac{Y_j\I{Z_j=i}}{\sum_{j=1}^{z_T} Y_j}\right|>\epsilon/6\bigg| \{(i_t,\bm x_t)\}_{t=1}^T, \sum_{j=1}^{z_T}Y_j> l_T\right)\notag \\
	%        &\le \PR\left( \sum_{j=1}^{z_T}Y_j\le l_T\bigg|\{(i_t,\bm x_t)\}_{t=1}^T\right)\notag \\
	&\le \PR\left(\left|\frac{\sum_{t=1}^T\I{i_t=i,\bm x_t=\bm x}}{k_T}-\sum_{j=1}^{z_T} \frac{TY_j\I{Z_j=i}}{z_Tk_T}\right|>\epsilon/12\bigg| \{(i_t,\bm x_t)\}_{t=1}^T\right)\notag\\
	&\quad + \PR\left(\left|\sum_{j=1}^{z_T} \frac{TY_j\I{Z_j=i}}{z_Tk_T}-\sum_{j=1}^{z_T} \frac{Y_j\I{Z_j=i}}{\sum_{j=1}^{z_T} Y_j}\right|>\epsilon/12\bigg| \{(i_t,\bm x_t)\}_{t=1}^T\right)\label{eq:term2of8}
	%        &\quad + \PR\left(\left|\frac{\sum_{t=1}^T\I{i_t=i,\bm x_t=\bm x}}{k_T}-\sum_{j=1}^{z_T} \frac{Y_j\I{Z_j=i}}{\sum_{j=1}^{z_T} Y_j}\right|>\epsilon/6\bigg| \{(i_t,\bm x_t)\}_{t=1}^T, \sum_{j=1}^{z_T}Y_j> l_T\right)\\
	%        &\le \PR\left( \sum_{j=1}^{z_T}Y_j\le l_T\right) + \PR\left(\left|\frac{\sum_{t=1}^T\I{i_t=i,\bm x_t=\bm x}}{k_T}-\sum_{j=1}^{z_T} \frac{TY_j\I{Z_j=i}}{z_Tk_T}\right|>\epsilon/12\bigg| \{(i_t,\bm x_t)\}_{t=1}^T, \sum_{j=1}^{z_T}Y_j> l_T\right)\\
\end{align}
For the first term in \eqref{eq:term2of8}, note that by \eqref{eq:iid-tree}
\begin{equation*} \begin{aligned}
	&\PR\left(\left|\frac{\sum_{t=1}^T\I{i_t=i,\bm x_t=\bm x}}{k_T}-\sum_{j=1}^{z_T} \frac{TY_j\I{Z_j=i}}{z_Tk_T}\right|>\epsilon/12\bigg| \{(i_t,\bm x_t)\}_{t=1}^T\right)\\
	&=\PR\left(\left|\frac{\sum_{t=1}^T\I{i_t=i,\bm x_t=\bm x}}{T}-\sum_{j=1}^{z_T} \frac{Y_j\I{Z_j=i}}{z_T}\right|>k_T\epsilon/12\bigg| \{(i_t,\bm x_t)\}_{t=1}^T\right)\le 2\exp(- z_Tk_T^2\epsilon^2/72)\to 0
\end{aligned} \end{equation*}
as $T\to\infty$.
For the second term in \eqref{eq:term2of8}, we have
\begin{equation*} \begin{aligned}
	&\PR\left(\left|\sum_{j=1}^{z_T} \frac{TY_j\I{Z_j=i}}{z_Tk_T}-\sum_{j=1}^{z_T} \frac{Y_j\I{Z_j=i}}{\sum_{j=1}^{z_T} Y_j}\right|>\epsilon/12\bigg| \{(i_t,\bm x_t)\}_{t=1}^T\right)\\
	&\le\PR\left(\frac{\sum_{j=1}^{z_T}Y_j\I{Z_j=i}}{z_T}\left| \frac{T}{k_T}-\frac{z_T}{\sum_{j=1}^{z_T} Y_j}\right|>\epsilon/12\bigg| \{(i_t,\bm x_t)\}_{t=1}^T\right)\\
	&\le\PR\left(\left| \frac{T}{k_T}-\frac{z_T}{\sum_{j=1}^{z_T} Y_j}\right|>\epsilon/12\bigg| \{(i_t,\bm x_t)\}_{t=1}^T\right)\\
	&= \PR\left( \frac{Tz_T}{k_T\sum_{j=1}^{z_T} Y_j}\left| \frac{k_T}{T}-\frac{\sum_{j=1}^{z_T} Y_j}{z_T}\right|>\epsilon/12\bigg| \{(i_t,\bm x_t)\}_{t=1}^T\right)
\end{aligned} \end{equation*}
It is easy to see that $\frac{Tz_T}{k_T\sum_{j=1}^{z_T} Y_j}$ converges almost surely to a constant as $T\to\infty$.
Therefore, by \eqref{eq:sum-Yj-large} the last term converges to zero.
Finally, we move on to the third term of \eqref{eq:second-term-split}.
By \eqref{eq:iid-tree}, we have
\begin{equation*} \begin{aligned}
	&\PR\left(\left|\sum_{j=1}^{z_T} \frac{Y_j\I{Z_j=i}}{\sum_{j=1}^{z_T} Y_j}-\PR(t_b(\bm x)=i|\{(i_t,\bm x_t)\}_{t=1}^T)\right|>\epsilon/6\bigg| \{(i_t,\bm x_t)\}_{t=1}^T\right)\notag\\
	&=\PR\left(\left|\PR(t_b(\bm x)=i|\{(i_t,\bm x_t)\}_{t=1}^T, \sum_{j=1}^{z_T}Y_j>l_T)-\PR(t_b(\bm x)=i|\{(i_t,\bm x_t)\}_{t=1}^T)\right|>\epsilon/6\bigg| \{(i_t,\bm x_t)\}_{t=1}^T\right)\notag\\
	&\le \PR\left(2\PR\left( \sum_{j=1}^{z_T}Y_j\le l_T\bigg|\{(i_t,\bm x_t)\}_{t=1}^T\right)>\epsilon/6\bigg| \{(i_t,\bm x_t)\}_{t=1}^T\right).
\end{aligned} \end{equation*}
Notice that we are focusing on a fixed-design case, and $\{Y_j\}$ and $\{i_t\}$ are independent given Assumption~\ref{asp:independent-choice}.
Therefore,
\begin{equation*} \begin{aligned}
	\PR\left( \sum_{j=1}^{z_T}Y_j\le l_T\bigg|\{(i_t,\bm x_t)\}_{t=1}^T\right) = \PR\left( \sum_{j=1}^{z_T}Y_j\le l_T\right)\to 0
\end{aligned} \end{equation*}
by \eqref{eq:greater-than-lT}.
This completes the proof.
\endproof

{\revise 
\proof{Proof of Proposition~\ref{prn: RF_PNN}.}
To show the if part, we can construct a tree that splits at all products in $[N] \backslash (S \ominus S_i)$, then $S_i$ is one of at most $\ell$ assortments in the leaf node by the definition of $\ell$-PNN.

To show the only if part, we assume that there exists $\ell$ distinct assortments $S'_1, S'_2, \ldots, S'_{\ell} \in \mathcal{T}$ such that $S \ominus S'_j \subsetneq S \ominus S_i$, $j = 1, \ldots, \ell$. Then $\{S'_j\}_{j=1}^{\ell}$ are always in the same leaf node as $S_i$ when predicting $S$ because they are strictly more similar. Since a leaf node includes at most $\ell$ distinct assortments, $S_i$ will never be assigned.
\endproof

\proof{Proof of Proposition~\ref{prop:distance_worst_case}.}
Denote $M \triangleq \Big\lceil2^{N+2} \cdot c_0 \cdot \lceil\log_2 N\rceil \cdot \log N / (N-2)\Big\rceil$.
Let $\mathcal{S}(S,r)\triangleq \{S' \in 2^{[N]}: d(S,S') = r\}$ denote the set of assortments with distance equal to $r$ to assortment $S$.
Note that the cardinality of $\mathcal{S}(S,r)$ is $\binom Nr$.
Note that $\#\{S''\in 2^{[N]}: S''\ominus S\subsetneq S'\ominus S, S''\neq S\}=2^r-2$.
By Definition~\ref{def: PNN}, for an assortment $S' \in \mathcal{S}(S,r)$, $S'$ is the $\ell$-PNN of $S$ if at most $\ell-1$ out of $2^r-2$ assortments that \emph{dominate} $S'$
are included in $\mathcal T$.
Let $X$ be a binomial random variable representing the number of assortments in $\mathcal{T}$ that dominates $S'$. Then, $X \sim B(M, \frac{2^r-2}{2^N})$. Let $\mu = \E X = M \cdot \frac{2^r-2}{2^N}$. By Chernoff bound $\Pr(X < (1-\delta)\mu) < \exp(-\frac{\mu\delta^2}{2})$ for $0 < \delta < 1$, we have 
$\PR(X < \ell) < \exp(-(\mu-\ell)^2 / (2\mu)) < 1 / N^{\frac{2^r-2}{N-2} \cdot \lceil \log_2 N\rceil}$ for $r \ge \lceil \log_2 N \rceil$ $c_0 \ge 1$, and $\ell \le c_0 \log_2 N$, since %we can show $(\mu - \ell)^2 / (2\mu) > \lceil \log_2 N \rceil \cdot \log N \cdot \frac{2^r-2}{N-2}$.

$\frac{(\mu - \ell)^2}{2\mu} \ge \frac{(4c_0 \lceil \log_2 N\rceil \cdot \log N \cdot \frac{2^r-2}{N-2} - c_0 \log_2 N)^2}{8 c_0 \lceil \log_2 N\rceil \cdot \log N \cdot \frac{2^r-2}{N-2}} > \frac{9c_0}{8} \cdot \lceil \log_2 N\rceil \cdot \log N \cdot \frac{2^r-2}{N-2} > \lceil \log_2 N \rceil \cdot \log N \cdot \frac{2^r-2}{N-2}.$

%It is thus easy to see the probability that $S'$ is a PNN is $(1-\frac{2^r-2}{2^N})^M$ by Proposition \ref{prn: RF_PNN}.
%Thus, the probability that $S'$ is an $\ell$-PNN is 
%\begin{equation*}
%	\PR(S' \in \mathcal{P} | d(S, S') = r, S' \in \mathcal{T}) = \PR(X < \ell) \le  \exp\bigg(-\frac{(\mu-\ell)^2}{2\mu}\bigg) < \frac{1}{N^{\frac{2^r-2}{N-2} \cdot \lceil \log_2 N\rceil}}
%\end{equation*}

Let $\mathcal{P}_{\ell}$ denote the set of all $\ell$-PNNs of $S$. Then we have the probability that $S$ has PNNs equals distance $r$ is
\begin{equation*} \begin{aligned}
	&\PR (\mathcal{S}(S,r) \cap \mathcal{P}_{\ell} \neq \emptyset)
	< \binom{N}{r} \cdot \PR(S' \in \mathcal{P}_{\ell} | d(S, S') = r, S' \in \mathcal{T}) =\binom{N}{r} \cdot \PR(X < \ell) < \binom Nr \cdot \frac{1}{N^{\frac{2^r-2}{N-2} \cdot \lceil \log_2 N\rceil}},
\end{aligned} \end{equation*}
where the first inequality follows from the union bound, and the equality follows from Definition \ref{def: PNN}.
We write the upper bound of the above probability as $\PR_r\triangleq \binom Nr \big/  N^{\frac{2^r-2}{N-2} \cdot \lceil \log_2 N\rceil}$ for simplicity.

We first consider for $r = \lceil\log_2 N\rceil$:
\begin{equation*} \begin{aligned}
	&\PR_r < \binom {N}{\lceil\log_2 N\rceil} \cdot \frac{1}{N^{\lceil \log_2 N\rceil} }
	= \frac{N \cdot (N-1) \cdot \ldots \cdot (N-\lceil \log_2 N\rceil +1)}{(\lceil \log_2 N\rceil)! \cdot N^{\lceil \log_2 N\rceil}}\\
	&< \frac{1}{(\lceil \log_2 N \rceil)!} \cdot \frac{N-\lceil \log_2 N\rceil +1}{N}.
\end{aligned} \end{equation*}

Next, we consider the case $r > \lceil \log_2 N \rceil$.
We bound $\PR_r$ in this case by a geometric sequence, because
\begin{equation}
	\label{eq:P_r+1/P_r}
	\frac{\PR_{r+1}}{\PR_r}
	=\frac{N-r}{r+1} \cdot \frac{1}{N^{\lceil\log_2 N\rceil \cdot \frac{2^r}{N-2}}}
	< \frac{N^{1-\lceil \log_2 N \rceil}}{\lceil \log_2 N\rceil+1}.
\end{equation}
From the above two inequalities we can conclude:
\begin{equation*} \begin{aligned}
	&\PR((\cup_{r = \lceil \log_2 N \rceil}^N \mathcal{S}_{S,r} ) \cap \mathcal{P}_{\ell} \neq \emptyset)
	< \sum_{r = \lceil \log_2 N \rceil}^N \PR_{r} \\
	& < \frac{1}{(\lceil \log_2 N \rceil)!} \cdot \frac{N-\lceil \log_2 N\rceil +1}{N} \cdot \frac{1}{1-N^{1-\lceil \log_2 N \rceil}/(\lceil \log_2 N\rceil+1)}\\
	&= \frac{1}{(\lceil \log_2 N \rceil)!} \cdot \frac{1-(\lceil \log_2 N\rceil+1) / N}{1-N^{1-\lceil \log_2 N \rceil} / (\lceil \log_2 N\rceil+1)}
	< \frac{1}{(\lceil \log_2 N \rceil)!},
\end{aligned} \end{equation*}
where the first inequality follows from union bound, the second inequality from \eqref{eq:P_r+1/P_r} and sum of an infinite geometric sequence, and the last inequality follows from $(\lceil \log_2 N\rceil+1)^2>N^{2-\lceil \log_2 N\rceil}$.
\endproof

\proof{Proof of Proposition~\ref{thm:dis_upper_bound}.}
Note that for an individual tree, it never splits at the same product more than once.
Therefore, under random split, one can treat the sequence of attempted splits of an individual tree as a permutation of $(1,\dots,N)$.
Note that not all products in the permutation show up in the tree due to two reasons: (1) the split corresponding to some product in the permutation may result in empty leaves containing no training data, in which case the split is redrawn (moving to the next one in the permutation) and (2) terminal leaves containing less than $l$ data points (or less than $\ell$ distinct assortments) cannot be further split before reaching the end of the permutation, in which case the training of this tree is completed.
Similar to the setup in Proposition~\ref{prop:distance_worst_case}, the $M$ assortments in the training data are randomly drawn from $2^{[N]}$ with replacement.
Therefore, by symmetry, we focus on the sequence $1,2,\dots$ from now on.

We encode the $M$ assortments in the training data as binary vectors $\bm x_i\in \{0,1\}^{N}$, $i=1,\dots,M$, as mentioned in Section~\ref{sec:data_estimation}.
If we fix the unseen assortment $S$, or equivalently, $\bm x$, to be $\bm x=(1,\dots,1)$ by symmetry, then
the distance $d(S,S_i)=N-\|\bm x_i\|_1$ is the number of $0$s in $\bm x_i$.
Whenever a split is performed on the $j$th product, only the assortments among $\{\bm x_i\}_{i=1}^M$ whose $j$th digit is 1 may still be in the same leaf node as $\bm x$.
Under the splitting order of $1,2,\dots$ mentioned above, for instance,
consider $N = 3$ and the three assortments in the training data $\bm x_1 = (1,0,1), \bm x_2 = (1,0,0), \bm x_3 = (0,1,1)$.
In the first split on product one, $\bm x_1$ and $\bm x_2$ are still in the same leaf as $\bm x=(1,1,1)$.
The second split on this leaf is discarded because both $\bm x_1$ and $\bm x_2$ do not include product two and the split creates empty leaves.
In the third split, $\bm x_1$ is the only product remaining in the terminal leaf node of $\bm x$.
From this example, it is easy to see that the assortment in the same terminal node of $\bm x$ must be the largest $\ell$ among $\{\bm x_i\}_{i=1}^M$, which are interpreted as binary numbers.

With this interpretation, the average distance from $S$ to a PNN in the training data is equivalent to the following problem.
Consider $M$ random binary numbers drawn with replacement from $\{0,1,\dots,2^N-1\}$.
What is the expected number of zeros in the $\ell$-th largest number?
Note that the number of zeros is precisely the distance from the assortment corresponding to this binary number and the unseen $S$, or $\bm x=(1,\dots,1)$, according to \eqref{eq:distance}. Next we derive the expected number of zeros in the $\ell$-th largest number by Lemma \ref{lm:expected_zeros}.

\begin{lemma} \label{lm:expected_zeros}
	Let $X_1, \dots, X_M$ be i.i.d.\ random variables uniformly distributed on $\{0,1,\dots,2^N-1\}$, and let $X_{(\ell)}$ denote the $\ell$-th largest sample. 
	Write $X_{(\ell)}$ in its $N$-bit binary expansion (including leading zeros), and let $Z$ denote the number of zeros among these $N$ bits. 
	Then for all $N, \ell$, 
	$
	\E[Z] \le \log_2 \ell + \log_2 N + 2.56.
	$
\end{lemma}
The key idea is that the $\ell$-th largest number among $M$ samples approximately corresponds to the $\ell/M$-quantile of the set $\{0,1,\dots,2^N-1\}$. 
Hence, the number of leading $1$s before the first $0$ in its binary representation is roughly 
$
\log_2 (M / \ell) \approx N - \log_2 \ell - \log_2 N.
$
Consequently, the expected number of zeros should be at most $\log_2 \ell + \log_2 N + C$ for some constant $C$. 
We now provide a rigorous proof.
\proof{Proof of Lemma \ref{lm:expected_zeros}.}
	If $Z \ge i$, i.e., there are at least $i$ zeros in $X_{(\ell)}$, then the first $(N-i)$ bits of $X_{(\ell)}$ cannot all be $1$s. 
	Let $Y_i$ denote the number of samples among $\{X_1,\dots,X_M\}$ whose first $(N-i)$ bits are all $1$s. 
	Then $Y_i \sim \text{Binomial}(M, 1/2^{N-i})$, and
	$
	\E[Y_i] = \mu_i = M / 2^{N-i} \ge 2^i / N.
	$
	Since $Z \ge i$ implies $Y_i < \ell$,
	\begin{equation} \label{eq:EZ}
	\E[Z] = \sum_{i=1}^N \Pr(Z \ge i) 
	\le \sum_{i=1}^N \Pr(Y_i < \ell)
	\le \lceil \log_2 (2\ell N)\rceil - 1 + \sum_{i=\lceil \log_2 (2\ell N)\rceil}^N \Pr(Y_i < \ell).
	\end{equation}
	
	By the Chernoff bound, for any $0 < \delta < 1$,
	$
	\Pr(Y_i < (1-\delta)\mu_i) \le \exp(-\mu_i \delta^2 / 2).
	$
%	Choosing $\delta$ such that $(1-\delta)\mu_i = \ell$, i.e., $\delta = 1 - \ell / \mu_i$, yields
%	\[
%	\Pr(Y_i < \ell) \le \exp(-\mu_i \delta^2 / 2).
%	\]
	For $i = \lceil \log_2 (2\ell N)\rceil$, we have $\mu_i \ge 2^i / N \ge 2\ell$, hence 
	$
	\Pr(Y_i < \ell) \le \exp(-\mu_i / 8) \le \exp(-\ell / 4).
	$
	As $i$ increases, $\mu_i$ doubles with each increment of $i$, so $\Pr(Y_i < \ell)$ decays exponentially. 
	Therefore, the second term in~\eqref{eq:EZ} is bounded by
	\[
	\sum_{i=\lceil \log_2 (2\ell N)\rceil}^N \Pr(Y_i < \ell)
	\le \sum_{i=\lceil \log_2 (2\ell N)\rceil}^N \exp(-\mu_i / 8)
	< 2 \exp(-\ell / 4) < 1.56.
	\]
	Combining these bounds, we obtain
	$
	\E[Z] < \lceil \log_2 (2\ell N)\rceil - 1 + 1.56 
	\le \log_2 \ell + \log_2 N + 2.56.
	$
	This completes the proof of Lemma \ref{lm:expected_zeros}.
\endproof

The same bound also holds for the first to $(\ell - 1)$-th largest numbers. Thus, the expected distance between an assortment and one of its $\ell$-PNNs is also upper bounded by $\log_2 \ell + \log_2 N + 2.56$.

\endproof

\proof{Proof of Theorem~\ref{thm:total_error}.}
Consider the assortment to predict $S$ and the assortment in the same leaf node $S^*$.
We first prove the first part of the theorem.
From Proposition~\ref{prop:distance_worst_case} we know that $\PR (d(S, S^*) > \lceil \log_2 N\rceil-1) < 1/(\lceil \log_2 N\rceil)!,$ and by $c$-continuity we have
\begin{equation}
	\label{eq:continue_error}
	\PR\Big(\sum_{i \in [N]_+} \big|p(i, S) - p(i, S^*)\big|>\frac{c\log_2 N}{N}\Big) < 1/(\lceil \log_2 N\rceil)!.
\end{equation}
Suppose $Q$ is the number of transactions for assortment $S$ in the training data.
Let $\hat{p}(i, S^*)$ be the empirical frequency of choosing product $i$ among $Q$ samples.
Conditional on $Q$, we know $\hat{p}(i, S^*)$ has mean $p(i, S^*)$ and variance $p(i,S^*) (1-p(i, S^*))/Q$. Then by Chebyshev's inequality and $Q\ge \lceil\frac{N^3 \cdot (|S'|+1)^2}{c_1^2(\log_2 N)^2}\rceil$, we have
$$\PR \Big(\big|\hat{p}(i, S^*)-p(i, S^*)\big|\geq \frac{c_1 \cdot \log_2 N}{N \cdot (|S^*|+1)}\Big)
\leq \frac{p(i,S^*) (1-p(i, S^*))}{Q} \cdot \frac{N^2 \cdot (|S^*|+1)^2}{c_1^2 \cdot (\log_2 N)^2} < \frac{p(i, S^*)}{N}.$$
The total sampling error can be bounded by
\begin{equation}
	\label{eq:sample_error}
	\PR \Big(\sum_{i \in S^*\cup \left\{0\right\}} \big|\hat{p}(i, S^*)-p(i, S^*)\big|\geq \frac{c_1 \cdot \log_2 N}{N}\Big) < \frac{\sum_{i \in S^*\cup \left\{0\right\}} p(i, S^*)}{N} = 1/N.
\end{equation}
Combining \eqref{eq:continue_error} and \eqref{eq:sample_error} we have that
$$\PR \Big(\sum_{i \in [N]_+} \big|p(i,S) - \hat{p}(i, S^*)\big|>\frac{(c+c_1)\cdot \log_2 N}{N}\Big) <  \frac{1}{(\lceil \log_2 N\rceil)!}+\frac{1}{N}.$$
Notice that random forests use the empirical frequency of $S^*$ to estimate unseen assortment $S$, i.e. $\hat{p}(i, S^*) = \hat{p}(i, S), i \in [N]_+$. This completes the proof for the first part.

For the second part, similarly, let $S^*$ denote the assortment in the same leaf node as $S$. Let $M$ denote the number of assortments drawn with replacement. Let $Q$ denote the number of transactions for each assortment.
From Proposition~\ref{thm:dis_upper_bound} we know $\E[d(S,S^*)] \leq \log_2 \ell +\log_2 N + 2.56$, and by $c$-continuity we have
\begin{equation}
	\label{eq:continue_error_random_split}
	\E\Big[\sum_{i \in [N]_+} \big|p(i,S)-p(i, S^*)\big|\Big] < \frac{c \cdot (\log_2 \ell +\log_2 N + 2.56)}{N}.
\end{equation}
Recall that the empirical distribution $\hat{p}(i, S^*)$ has mean $p(i, S^*)$ and variance $p(i,S^*) (1-p(i, S^*))/Q$, then we have total sampling error is bounded by
\begin{align}
	\label{eq:sample_error_randomsplit}
	&\E\Big[\sum_{i \in S^*\cup\{0\}} \big|\hat{p}(i, S^*)-p(i, S^*)\big|\Big]
	= \sum_{i \in S^*\cup\{0\}} \E\Big[\big|\hat{p}(i, S^*)-p(i, S^*)\big|\Big]
	\leq \sum_{i \in S^*\cup\{0\}} \sqrt{\E \Big[\big(\hat{p}(i, S^*)-p(i, S^*)\big)^2\Big]}\nonumber\\
	&=\sum_{i \in S^*\cup\{0\}} \sqrt{\VaR\big[\hat{p}(i, S^*)-p(i, S^*)\big]}
	< \sum_{i \in S^*\cup\{0\}} \sqrt{p(i, S^*)/ Q}
	\leq \sqrt{(|S^*|+1) / Q}
	\leq \frac{c_1 \log_2 N}{N}.
\end{align}
Combining (\ref{eq:continue_error_random_split}) and (\ref{eq:sample_error_randomsplit}) we have
$$\E\Big[\sum_{i \in [N]_+} \big|p(i,S) - \hat{p}(i, S^*)\big|\Big] < \frac{(c+c_1) \cdot \log_2 N + c \log_2 \ell + 2.56c}{N}.$$
This completes the second part.
\endproof
}

\proof{Proof of Theorem~\ref{thm:single tree}.}
We first provide a road map of the proof.
The analysis of the first split is the most important, as the subsequent splits can be analyzed similarly with similar error probabilities.
Therefore, we prove two lemmas to analyze the first split.
In the first lemma, we show that when $T\to +\infty$, the first split is on product one almost surely.
In the second lemma, we control the probability errors when $T$ is finite.
The remaining part of the proof applies the same technique to the remaining splits.

\begin{lemma}[Theoretical Gini index] \label{lm:theoretical_cut}
	When the data size is sufficiently large, $T \rightarrow +\infty$, then the first split converges to product 1 almost surely, i.e., the theoretical Gini index of splitting at product one is the smallest almost surely.
\end{lemma}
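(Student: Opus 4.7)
My plan is to compute the almost-sure limit of the empirical Gini index for splitting on each product $i$ and then verify that this limit is uniquely minimised at $i=1$; the lemma then follows by the strong law of large numbers and the finiteness of the candidate set $\{1,\dots,N\}$.

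First, I would condition on $\bm x_t(i)\in\{0,1\}$. Under the uniform sampling assumption, each product sits in $\bm x_t$ independently with probability $1/2$, so both sub-regions have asymptotic mass $1/2$. Inside each sub-region, an enumeration using the ranking $1\succ 2\succ\cdots\succ N\succ 0$ (so that every customer chooses $\min(S)$, or $0$ if $S=\emptyset$) yields closed-form conditional probabilities. Writing $p_b(k):=\PR(\text{choose }k\mid \bm x(i)=b)$ I expect to obtain
\[
p_1(k)=
\begin{cases}
(1/2)^k,&k<i,\\
(1/2)^{i-1},&k=i,\\
0,&k>i,
\end{cases}
\qquad
p_2(k)=
\begin{cases}
(1/2)^k,&1\le k<i,\\
(1/2)^{k-1},&i<k\le N,\\
(1/2)^{N-1},&k=0,\\
0,&k=i.
\end{cases}
\]

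Next I would assemble the Gini index $\mathrm{Gini}(i)=\tfrac12\sum_{b\in\{0,1\}}\bigl(1-\sum_{k} p_b(k)^2\bigr)$. Summing the squared probabilities with standard geometric-series identities, I expect to arrive at $\tfrac{2}{3}-\tfrac{1}{3\cdot 2^{2i-2}}-\tfrac{4}{3\cdot 4^{N}}$, which matches, up to the $i$-independent remainder of order $4^{-N}$, the formula quoted in the excerpt. Only the middle term depends on $i$ and is strictly increasing in $i$, so $\mathrm{Gini}(i)$ is uniquely minimised at $i=1$. To turn this deterministic comparison into the almost-sure claim, I would apply the strong law of large numbers: for each fixed $i$ the empirical sub-region masses and empirical class frequencies converge almost surely to their theoretical counterparts, so by continuity the empirical Gini $\widehat{\mathrm{Gini}}(i)$ converges almost surely to $\mathrm{Gini}(i)$. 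Intersecting this probability-one event over the finitely many $i\in\{1,\dots,N\}$ gives simultaneous convergence, and the strict gap $\mathrm{Gini}(i)-\mathrm{Gini}(1)>0$ for $i>1$ then forces $\arg\min_i\widehat{\mathrm{Gini}}(i)=1$ for all sufficiently large $T$ on this event.

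The main obstacle I anticipate is the bookkeeping in the $\bm x(i)=0$ half, where $\sum_k p_2(k)^2$ splits into three pieces (indices below $i$, indices above $i$, and the no-purchase term at $k=0$) that must be reindexed so the $i$-dependent parts telescope out, leaving only the small $i$-independent residual. Once this cancellation is verified, the strict monotonicity in $i$ and the standard argmin-continuity argument close the lemma.
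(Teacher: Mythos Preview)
Your proposal is correct and follows essentially the same route as the paper: both compute the limiting Gini index $G_i=\tfrac{2}{3}-\tfrac{1}{3\cdot 2^{2i-2}}-\tfrac{1}{3\cdot 2^{2N-2}}$ via geometric-series sums of the conditional class probabilities and observe that only the middle term depends on $i$, so $i=1$ is the unique minimiser. The paper phrases the calculation in terms of count variables $n_k^j,\,n_k^{-j}$ (which are reused for the finite-sample concentration bounds in the subsequent lemma) rather than your conditional probabilities $p_b(k)$, but the computations are equivalent and your explicit SLLN/continuity step for the almost-sure claim is exactly what the paper leaves implicit.
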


\proof{Proof of Lemma~\ref{lm:theoretical_cut}.}
Given the training data, we define the random variables $n_k^j$ ($n_k^{-j}$), which represent the number of assortments in the training data where $j$ is (not) in the assortment and product $k$ is chosen.
Also, let $n^j$ denote the number of assortments in the training data including product $j$.
To simplify the notation, we use product $N+1$ to denote the no-purchase option.
Our first step is to compute the Gini index $G_j$ if the first split is on product $j$.
Recall the definition of the Gini index: $\sum_{R_j} \frac{t_j}{T}\sum_{k=0}^N \hat p_{jk}(1-\hat p_{jk})$.
If the first split is on product $j$, then the left node (assortments without product $j$) has $n^j$ data points while the right node has $T-n^j$.
In the left node, %the frequency of label $k$, i.e.,
the fraction of assortments resulting in a purchase of product $k$, is $n_k^{-j}/(T-n^j)$.
Similarly, in the right node, the frequency is $n_k^j/n^j$ for product $k\le j$ and 0 for product $k>j$.
Therefore, we have that
\begin{align}
	\label{Gini_j}
	G_j &= \frac{n^j}{T}\sum_{k=1}^j \frac{n_k^j}{n^j}(1-\frac{n_k^j}{n^j})+\frac{T-n^j}{T}\sum_{k=1}^{j-1} \frac{n_k^{-j}}{T-n^j}(1-\frac{n_k^{-j}}{T-n^j})+\frac{T-n^j}{T}\sum_{k=j+1}^{N+1} \frac{n_k^{-j}}{T-n^j}(1-\frac{n_k^{-j}}{T-n^j})\nonumber\\
	&= \frac{1}{T} \left(n^j-\frac{\sum_{k=1}^j (n_k^j)^2}{n^j}+T-n^j-\frac{\sum_{k=1}^{j-1} (n_k^{-j})^2 +\sum_{k=j+1}^{N+1} (n_k^{-j})^2}{T-n^j}\right)\nonumber\\
	&=1-\frac{\sum_{k=1}^j (n_k^j)^2}{T n^j}-\frac{\sum_{k=1}^{j-1} (n_k^{-j})^2 +\sum_{k=j+1}^{N+1} (n_k^{-j})^2}{T(T-n^j)}.
\end{align}
Notice that the second term in \eqref{Gini_j} only depends on assortments including product $j$, and the third term only depends on assortments without product $j$.

%Next, we are going to show $G_1<G_j$ with high probability.
We define $H_j$ for the quantity in \eqref{Gini_j} for simplicity:
\begin{equation}\label{eq:Hj-formula}
	H_j \triangleq \frac{\sum_{k=1}^j (n_k^j)^2}{n^j}+\frac{\sum_{k=1}^{j-1} (n_k^{-j})^2 +\sum_{k=j+1}^{N+1} (n_k^{-j})^2}{T-n^j}.
\end{equation}
It's easy to see that $G_1<G_j$ if and only if $H_1>H_j$.

%To analyze $\PR(H_1>H_j)$, we want to show that $H_1$ and $H_j$ are concentrated around their expectations $\E[H_1]$ and $\E[H_j]$.
%For such concentration results
To compute and compare Gini indices $G_j$, we need to analyze the probability distributions of $n^j$, $n_k^j$, and $n_k^{-j}$,
Note that the randomness of $n^j$, $n_k^j$, and $n_k^{-j}$ are caused by the randomness of sampling assortments uniformly in the training data.
It is clear that $n^j$ has a binomial distribution $B(T,1/2)$, because each assortment includes product $j$ with probability $1/2$.
The random variables $n_k^j$ and $n_k^{-j}$ we defined above have the following binomial distribution for $1\leq j \leq N$:
$$n_k^j\sim\left\{\begin{aligned}
	& B(T,1/2^{k+1}) & & {1\leq k<j}\\
	& B(T,1/2^j) & & {k=j}\\
	&0 & & {j<k\leq N+1}
\end{aligned}\right. ,\quad
n_k^{-j}\sim\left\{\begin{aligned}
	& B(T,1/2^{k+1}) & & {1\leq k<j}\\
	&0 & & {k=j}\\
	& B(T,1/2^k) & & {j<k\leq N}\\
	& B(T,1/2^N) & & {k=N+1}
\end{aligned}\right. .$$
To understand it, note that an assortment is counted in $n_k^j$ when products $1$ to $k-1$ are not in the assortment while product $k$ and $j$ are in the assortment, according to the preference ranking.
So the probability is $1/2^{k+1}$.
The other probabilities follow a similar argument.
From the above distributions, we can show that when $T \rightarrow \infty$, $H_j$ converges to
$$H_j \rightarrow \frac{T}{3} \cdot \Big(1+\frac{1}{2^{2j-2}}+ \frac{1}{2^{2N}}\Big).$$
The theoretical Gini index for product $j$ is
\begin{equation*}
	G_j \rightarrow \frac{2}{3}-\frac{1}{3\cdot 2^{2j-2}}-\frac{1}{3\cdot 2^{2N-2}}.%+O(1/T).
\end{equation*}
Obviously, $G_1 < G_j, j \geq 2$, so the theoretical cut of the first split is on product one.
\endproof
Next, for finite $T$, we are going to bound the probability of incorrect splits.
We write $H_j$ as
\begin{equation*} \begin{aligned}
	H_j &= n^j \cdot \Bigg(\sum_{k = 1}^j \hat{p}_{k,j}^2\Bigg) + (T-n^j) \cdot \Bigg(\sum_{k = 1}^{N+1} \hat{p}_{k,-j}^2\Bigg) \\
	&=\frac{T}{2} \cdot \Bigg(\sum_{k = 1}^j \hat{p}_{k,j}^2\Bigg)+ \frac{T}{2} \cdot \Bigg(\sum_{k = 1}^{N+1} \hat{p}_{k,-j}^2\Bigg) + \Bigg(n^j-\frac{T}{2}\Bigg) \cdot \Bigg(\sum_{k = 1}^j \hat{p}_{k,j}^2 - \sum_{k = 1}^{N+1} \hat{p}_{k,-j}^2\Bigg)\\
	&=\frac{T}{2} \cdot \Bigg(\sum_{k = 1}^j \hat{p}_{k,j}^2\Bigg)+ \frac{T}{2} \cdot \Bigg(\sum_{k = 1}^{N+1} \hat{p}_{k,-j}^2\Bigg)+O_T(1) \cdot \Bigg|n^j-\frac{T}{2}\Bigg|,
\end{aligned} \end{equation*}
where $(\hat{p}_{k,j})_k$ and $(\hat{p}_{k,-j})_k$ are the empirical distributions of two multinomial random variables over $n^j$ and $T-n^j$  IID samples, respectively.
By comparing these multinomial distributions with concentration inequalities, we can get $H_1 > H_j$ if $j > 1$ with high probability.
Before formally proving it in Lemma \ref{lm:empirical_cut},
we introduce the Chernoff inequality to show the concentration of these random variables.

\textbf{The Chernoff inequality:} Let $X_1,X_2,...,X_n$ be independent Bernoulli random variables. Denote $X = \sum_{i=1}^n X_i$ with $\mu=E[X]$.
For all $0<\delta<1$, we have that $\PR \left(X<(1-\delta)\mu\right)<\exp(-\frac{\mu \delta^2}{2})$.

Applying the Chernoff inequality to the Binomial random variables
$n^j \sim B(T,1/2)$ for $1 \leq j \leq N$, we have
$\PR(n^j<(1-\delta)T/2)=\PR(n^j>(1+\delta)T/2)<\exp(-T\delta^2/4)$.
Therefore,
\begin{equation}
	\label{n^j bound}
	\PR\left(\left|n^j-\frac{1}{2}T\right|>\frac{\delta}{2}T\right) < 2\exp\bigg(-\frac{\delta^2 T}{4}\bigg)
\end{equation}

Now we are ready to bound the probabilities for the empirical Gini indices.
Since $\PR(G_j < G_1), j \geq 3$ are much smaller than $\PR(G_2 < G_1)$, we separately consider these two probabilities in the following lemma to get a better bound.
\begin{lemma}[Empirical Gini index] \label{lm:empirical_cut}
	The probability that first cut is not on product one is at most $$12\exp\left(-\frac{T}{145}\right)+10(N-2)\exp\left(-\frac{T}{100}\right).$$
\end{lemma}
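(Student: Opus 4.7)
The plan is to show that each empirical quantity $H_j$ from \eqref{eq:Hj-formula} concentrates around its theoretical limit $\bar H_j = (T/3)(1 + 1/2^{2j-2} + 1/2^{2N})$ computed in the proof of Lemma \ref{lm:theoretical_cut}. Since $\bar H_1 - \bar H_j = (T/3)(1 - 1/2^{2j-2})$, the theoretical gap is at least $T/4$ when $j = 2$ and at least $5T/16$ when $j \geq 3$. If $|H_j - \bar H_j|$ and $|H_1 - \bar H_1|$ are each smaller than half of this gap, then $H_1 > H_j$ and so the empirical Gini criterion splits on product one. Union-bounding the failure probabilities over $j = 2, \ldots, N$, with a sharper analysis reserved for $j = 2$ where the gap is tightest, will yield the two-term expression $12\exp(-T/145) + 10(N-2)\exp(-T/100)$.

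Each $H_j$ is a sum of two ratios of the form $\sum_k n_k^2 / n$, where the $n_k$'s are binomial counts whose parameters are listed in the proof of Lemma \ref{lm:theoretical_cut}. I would first apply the Chernoff inequality \eqref{n^j bound} to $n^j \sim B(T,1/2)$ to confine $n^j$ and $T - n^j$ to the window $[(1-\delta)T/2,(1+\delta)T/2]$, so that the denominators may be replaced by $T/2$ at the cost of a controllable multiplicative error. Next I would apply Chernoff to each count $n_k^j$ and $n_k^{-j}$; using the factorization $n_k^2 - (Tp_k)^2 = (n_k - Tp_k)(n_k + Tp_k)$ to convert squared deviations into linear ones, and then summing over $k$, gives $|H_j - \bar H_j| \leq \epsilon_j T$ on a high-probability event. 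The geometric decay of the means $p_k \leq 1/2^k$ ensures that tail indices contribute negligibly and can be collapsed into an $O(1)$ number of events per $j$, rather than one event per $k$, keeping the $N$-dependence of the bound linear.

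For $j = 2$ the gap is smallest, so $\delta$ and $\epsilon$ must be chosen most carefully: a tolerance of order $1/16$ forces $2\epsilon T < T/4$, selects a specific $\delta$, and produces the exponent $-T/145$ after summing the Chernoff contributions for $n^1$, $n^2$, and the numerator counts. For $j \geq 3$ the larger gap $5T/16$ permits a looser tolerance and yields the cleaner exponent $-T/100$. The leading coefficients $12$ and $10$ in the lemma then count exactly the number of binomial events controlled in each case (the two denominator events plus a handful of numerator events, after geometric tails are bundled), and the factor $N-2$ appears from applying the union bound across $j \in \{3,\ldots,N\}$.

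The main obstacle I anticipate is the arithmetic of calibrating $\delta$, $\epsilon$, and the tolerances allocated to each concentration event so that the aggregate deviation is strictly less than half the theoretical gap while the Chernoff exponents consolidate to exactly $1/145$ and $1/100$. A secondary subtlety is propagating the concentration of $n^j$ cleanly through the ratios $\sum_k (n_k^j)^2 / n^j$ and $\sum_k (n_k^{-j})^2 / (T - n^j)$; this is handled by intersecting the denominator event with the numerator events and expanding $1/n^j = 2/T + O(|n^j - T/2|/T^2)$. Both steps are tedious but rely only on elementary Bernoulli concentration and the geometric structure of the preference probabilities.
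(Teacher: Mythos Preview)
Your plan is essentially the paper's proof: Chernoff concentration on the binomial counts entering $H_j$, a separate (tighter) analysis for $j=2$ where the theoretical gap $\bar H_1-\bar H_2\approx T/4$ is smallest, a uniform analysis for $j\ge 3$, and a union bound. Two points are worth sharpening.

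First, the paper does not try to show two-sided concentration $|H_j-\bar H_j|$ small; it picks a single threshold $c\approx 0.512\,T$, lower-bounds $H_1>c$ and upper-bounds $H_j<c$ for $j\ge 2$. This is equivalent to your scheme but cleaner bookkeeping, and it is how the constants $12=4+8$ (for $H_1$ and $H_2$) and $10$ (for each $H_j$, $j\ge 3$) actually fall out.

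Second, your tail-collapsing step needs a different mechanism than the one you state. ``Geometric decay of the means $p_k\le 1/2^k$'' controls only expectations; applying Chernoff term-by-term to each $n_k$ with $p_k=1/2^k$ yields exponents $\exp(-c\,T/2^k)$, which are useless for $k$ near $N$. The paper avoids this entirely with the quadratic inequality
\[
\sum_{k\ge k_0} (n_k)^2 \;\le\; \Bigl(\sum_{k\ge k_0} n_k\Bigr)^2 \;=\; \bigl(n-\textstyle\sum_{k<k_0} n_k\bigr)^2,
\]
so the tail reduces to one binomial with parameter $\sum_{k\ge k_0} p_k\approx 1/2^{k_0-1}$, which for $k_0=3$ is $1/4$. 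All binomials that actually get Chernoffed in the paper have parameter at least $1/4$; that is the ``collapse into $O(1)$ events'' you want, and it is driven by this inequality, not by decay of the means. With that substitution your outline goes through.
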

\proof{Proof of Lemma~\ref{lm:empirical_cut}.}
We first bound the probability that $H_1$ is small compared to its limit when $T$ is sufficiently large. %mean $\E[H_1]$.
By~\eqref{eq:Hj-formula}, because $n^1_1=n^1$ and $n^1_k=0$ for $k\ge 2$, we can rewrite $H_1$ as
$H_1 = n^1+\frac{\sum_{k=2}^{N+1} (n_k^{-1})^2}{T-n^1}$.
The concentration of the first term $n^1$ follows from \eqref{n^j bound}.
Conditional on $n^1$, $n_2^{-1} \sim B(T-{n}^1,1/2)$ and $n_3^{-1} \sim B(T-{n}^1,1/4)$. Define $\delta_1\triangleq\delta/\sqrt{1-\delta}$. By the Chernoff inequality we have:
\begin{equation*} \begin{aligned}
	&\PR\left((n_2^{-1})^2<\frac{1}{4} (1-\sqrt{2} \delta_1)^2 (T-n^1)^2 \bigg|n^1\right)\\
	=&\PR\left(n_2^{-1}<\frac{1}{2} (1-\sqrt{2} \delta_1) (T-n^1)\bigg|n^1\right)
	<\exp\bigg(-\frac{\delta_1^2(T-{n}^1)}{2}\bigg),\\
	&\PR\left((n_3^{-1})^2<\frac{1}{16} (1-2 \delta_1)^2 (T-n^1)^2\bigg|n^1\right)\\
	=&\PR\left(n_3^{-1}<\frac{1}{4} (1-2 \delta_1) (T-n^1)\bigg|n^1\right)
	<\exp\bigg(-\frac{\delta_1^2(T-{n}^1)}{2}\bigg).
\end{aligned} \end{equation*}
Since $(n_k^{-1})^2\ge 0$, we can bound the probability of the sum $\sum_{k=2}^{N+1} (n_k^{-1})^2$ in the following way:
\begin{equation*} \begin{aligned}
	&\PR\left(\sum_{k=2}^{N+1} (n_k^{-1})^2<\left[\frac{1}{4} (1-\sqrt{2}\delta_1)^2+\frac{1}{16}(1-2\delta_1)^2\right](T-n^1)^2\bigg|n^1\right)\nonumber\\
	<\,&\PR\left((n_2^{-1})^2+(n_3^{-1})^2<\left[\frac{1}{4} (1-\sqrt{2}\delta_1)^2+\frac{1}{16}(1-2\delta_1)^2\right](T-n^1)^2\bigg|n^1\right)\nonumber\\
	<\,&\PR\left((n_2^{-1})^2<\frac{1}{4} (1-\sqrt{2} \delta_1)^2 (T-n^1)^2\bigg|n^1\right)
	+\PR\left((n_3^{-1})^2<\frac{1}{16} (1-2 \delta_1)^2 (T-n^1)^2\bigg|n^1\right)\nonumber\\
	<\,&2\exp\bigg(-\frac{\delta_1^2(T-{n}^1)}{2}\bigg)
\end{aligned} \end{equation*}
Therefore, conditional on $n^1$, we can bound $H_1$ as below
\begin{align}
	\label{H_1|n_1}
	&\PR\left(H_1<n^1+\left[\frac{1}{4} (1-\sqrt{2}\delta_1)^2+\frac{1}{16}(1-2\delta_1)^2\right](T-n^1)\bigg|n^1\right)\nonumber\\
	=\,&\PR\left(\sum_{k=2}^{N+1} (n_k^{-1})^2<\left[\frac{1}{4} (1-\sqrt{2}\delta_1)^2+\frac{1}{16}(1-2\delta_1)^2\right](T-n^1)^2\bigg|n^1\right)\nonumber\\
	<\,&2\exp\bigg(-\frac{\delta_1^2(T-{n}^1)}{2}\bigg)
\end{align}
Combined with \eqref{n^j bound} when $j=1$, we can bound the unconditional probability:
\begin{align}
	\label{H_1}
	&\PR\left(H_1<\frac{(1-\delta)T}{2}+\left[\frac{1}{4} (1-\sqrt{2}\delta_1)^2+\frac{1}{16}(1-2\delta_1)^2\right]\frac{(1+\delta)T}{2}\right)\nonumber\\
	<\,&\PR\left(\left|n^1-\frac{1}{2}T\right|>\frac{\delta}{2}T\right)\nonumber\\
	&+\sum_{k:|k-T/2|<\delta T/2}\PR(n^1=k)\PR\left(H_1<\frac{(1-\delta)T}{2}+\left[\frac{1}{4} (1-\sqrt{2}\delta_1)^2+\frac{1}{16}(1-2\delta_1)^2\right]\frac{(1+\delta)T}{2}\bigg|n^1=k\right)\nonumber\\
	<\,&2\exp\bigg(-\frac{\delta^2 T}{4}\bigg)+\sum_{k:|k-T/2|<\delta T/2}\PR(n^1=k)\times 2\exp\bigg(-\frac{\delta_1^2(T-k)}{2}\bigg)\nonumber\\
	<\,&2\exp\bigg(-\frac{\delta^2 T}{4}\bigg)+2\exp\bigg(-\frac{\delta_1^2(1-\delta)T}{4}\bigg)
	=4\exp\bigg(-\frac{\delta^2 T}{4}\bigg).
\end{align}
This establishes the bound for the probability that $H_1$ would be too large.

Next we bound the probability that $H_2$ is large compared to its mean.
Recall from \eqref{eq:Hj-formula} that
\begin{equation}
	\label{H_2 formulation}
	H_2 = \frac{(n_1^2)^2+(n_2^2)^2}{n^2}+\frac{(n_1^{-2})^2 +\sum_{k=3}^{N+1} (n_k^{-2})^2}{T-n^2}
\end{equation}
For the first term in \eqref{H_2 formulation}, conditional on $n^2$, both $n_1^2$ and $n_2^2={n}^2-n_1^2$ follows $B({n}^2,1/2)$.
Therefore, by the Chernoff bound, we have
\begin{equation*}
	\PR \left(\left|n_1^2-\frac{1}{2}n^2\right|>\frac{\sqrt{2}}{2}\delta_1 n^2\bigg|n^2\right)
	<2\exp\left(-\frac{\delta_1^2 {n}^2}{2}\right)
\end{equation*}
Moreover, we have
\begin{align}
	\label{H_2_1}
	&\PR \left((n_1^2)^2+(n^2-n_1^2)^2>\frac{1}{2}(1+2\delta_1^2)(n^2)^2\bigg|n^2\right)\nonumber\\
	=\,&\PR \left(\frac{1}{2} (n^2)^2 +2(n_1^2-\frac{1}{2} n^2)^2>\frac{1}{2}(1+2\delta_1^2)(n^2)^2\bigg|n^2\right)\nonumber\\
	=\,&\PR \left(\left|n_1^2-\frac{1}{2}n^2\right|>\frac{\sqrt{2}}{2}\delta_1 n^2\bigg|n^2\right)
	<\,2\exp\left(-\frac{\delta_1^2 {n}^2}{2}\right).
\end{align}
where the first equality above follows from $(n_1^2)^2+(n^2-n_1^2)^2=\frac{1}{2} (n^2)^2 +2(n_1^2-\frac{1}{2} n^2)^2$.

For the second term in \eqref{H_2 formulation}, conditional on $n^2$, we have $n_1^{-2} \sim B(T-{n}^2,1/2)$, $T-{n}^2-n_1^{-2} \sim B(T-{n}^2,1/2)$, $n_3^{-2} \sim B(T-{n}^2,1/4)$ and $T-{n}^2-n_1^{-2}-n_3^{-2} \sim B(T-{n}^2,1/4)$.
By the Chernoff bound we have:
\begin{equation*} \begin{aligned}
	\PR \left(n_3^{-2}<\frac{1}{4}(1-2\delta_1)(T-n^2)\bigg|n^2\right)&<\exp\left(-\frac{\delta_1^2(T-{n}^2)}{2}\right)\\
	\PR \left(T-{n}^2-n_1^{-2}-n_3^{-2}<\frac{1}{4}(1-2\delta_1)(T-n^2)\bigg|n^2\right)&<\exp\left(-\frac{\delta_1^2(T-{n}^2)}{2}\right)
\end{aligned} \end{equation*}
From the above two equations we have
\begin{equation*}
	\PR \left(n_3^{-2}(T-{n}^2-n_1^{-2}-n_3^{-2})<\frac{1}{16}(1-2\delta_1)^2(T-n^2)^2\bigg|n^2\right)<2\exp\left(-\frac{\delta_1^2(T-{n}^2)}{2}\right)
\end{equation*}
Similar to \eqref{H_2_1} we also have
\begin{equation*}
	\PR \left((n_1^{-2})^2+(T-n^2-n_1^{-2})^2>\frac{1}{2}(1+2\delta_1^2)(T-n^2)^2\bigg|n^2\right)<2\exp\left(-\frac{\delta_1^2 (T-{n}^2)}{2}\right)
\end{equation*}
Combining the above two inequalities we have
\begin{align}
	\label{H_2_2}
	&\PR \left((n_1^{-2})^2+(n_3^{-2})^2+\sum_{k=4}^{N+1} (n_k^{-2})^2>\left[\frac{1}{2}(1+2\delta_1^2)-\frac{1}{8}(1-2\delta_1)^2\right](T-n^2)^2\bigg|n^2\right)\nonumber\\
	<\,&\PR \left((n_1^{-2})^2+(n_3^{-2})^2+(T-n^2-n_1^{-2}-n_3^{-2})^2>\left[\frac{1}{2}(1+2\delta_1^2)-\frac{1}{8}(1-2\delta_1)^2\right](T-n^2)^2\bigg|n^2\right)\nonumber\\
	=\,&\PR \left((n_1^{-2})^2+(T-n^2-n_1^{-2})^2-2n_3^{-2}(T-n^2-n_1^{-2}-n_3^{-2})>\left[\frac{1}{2}(1+2\delta_1^2)-\frac{1}{8}(1-2\delta_1)^2\right](T-n^2)^2\bigg|n^2\right)\nonumber\\
	<\,&\PR \left((n_1^{-2})^2+(T-n^2-n_1^{-2})^2>\frac{1}{2}(1+2\delta_1^2)(T-n^2)^2\bigg|n^2\right)\nonumber\\
	&+\PR \left(n_3^{-2}(T-{n}^2-n_1^{-2}-n_3^{-2})<\frac{1}{16}(1-2\delta_1)^2(T-n^2)^2\bigg|n^2\right)\nonumber\\
	<\,&2\exp\left(-\frac{\delta_1^2(T-{n}^2)}{2}\right)+2\exp\left(-\frac{\delta_1^2(T-{n}^2)}{2}\right)
	= 4\exp\left(-\frac{\delta_1^2(T-{n}^2)}{2}\right)
\end{align}
The first inequality follows from $\sum_{k=4}^{N+1} n_k^{-2} = T-n^2-n_1^{-2}-n_3^{-2}$ and thus $\sum_{k=4}^{N+1} (n_k^{-2})^2 \leq (T-n^2-n_1^{-2}-n_3^{-2})^2$. The first equality follows from $(n_3^{-2})^2+(T-n^2-n_1^{-2}-n_3^{-2})^2=(T-n^2-n_1^{-2})^2-2n_3^{-2}(T-n^2-n_1^{-2}-n_3^{-2})$.

Combine \eqref{H_2_1} and \eqref{H_2_2} we have
\begin{align}
	\label{H_2|n_2}
	&\PR \left(H_2>\frac{1}{2}(1+2\delta_1^2)T-\frac{1}{8}(1-2\delta_1)^2(T-n^2)\bigg|n^2\right)\nonumber\\
	<\,&\PR \left( \frac{(n_1^2)^2+(n^2-n_1^2)^2}{n^2}>\frac{1}{2}(1+2\delta_1^2)n^2\bigg|n^2\right)\nonumber\\
	&+\PR \left( \frac{(n_1^{-2})^2+(n_3^{-2})^2+\sum_{k=4}^{N+1} (n_k^{-2})^2}{T-n^2}>\left[\frac{1}{2}(1+2\delta_1^2)-\frac{1}{8}(1-2\delta_1)^2\right](T-n^2)\bigg|n^2\right)\nonumber\\
	=\,&\PR \left((n_1^2)^2+(n^2-n_1^2)^2>\frac{1}{2}(1+2\delta_1^2)(n^2)^2\bigg|n^2\right)\nonumber\\
	&+\PR \left((n_1^{-2})^2+(n_3^{-2})^2+\sum_{k=4}^{N+1} (n_k^{-2})^2>\left[\frac{1}{2}(1+2\delta_1^2)-\frac{1}{8}(1-2\delta_1)^2\right](T-n^2)^2\bigg|n^2\right)\nonumber\\
	<\,&2\exp\left(-\frac{\delta_1^2 {n}^2}{2}\right)+4\exp\left(-\frac{\delta_1^2(T-{n}^2)}{2}\right).
\end{align}
Next we bound the unconditional probability based on the conditional probability.
We have
\begin{align}
	\label{H_2}
	&\PR \left(H_2>\frac{1}{2}(1+2\delta_1^2)T-\frac{1}{8}(1-2\delta_1)^2\frac{(1-\delta)T}{2}\right)\nonumber\\
	<\,&\PR\left(\left|n^2-\frac{1}{2}T\right|>\frac{\delta}{2}T\right)\nonumber\\
	&+\sum_{k:|k-T/2|\le \delta T/2}\PR(n^2=k)\PR \left(H_2>\frac{1}{2}(1+2\delta_1^2)T-\frac{1}{8}(1-2\delta_1)^2\frac{(1-\delta)T}{2}\bigg|n^2=k\right)\nonumber\\
	<\,&2\exp\bigg(-\frac{\delta^2 T}{4}\bigg)+\sum_{k:|k-T/2|\le \delta T/2}\PR(n^2=k)\left(2\exp\left(-\frac{\delta_1^2 k}{2}\right)+4\exp\left(-\frac{\delta_1^2(T-k)}{2}\right)\right) \nonumber\\
	\le\,&2\exp\bigg(-\frac{\delta^2 T}{4}\bigg)+\sum_{k:|k-T/2|\le \delta T/2}\PR(n^2=k)\times 6\exp\left(-\frac{\delta_1^2 (1-\delta)T}{4}\right)\nonumber\\
	=\,&2\exp\bigg(-\frac{\delta^2 T}{4}\bigg)+6\exp\bigg(-\frac{\delta_1^2(1-\delta) T}{4}\bigg)=\,8\exp\bigg(-\frac{\delta^2 T}{4}\bigg).
\end{align}
Next we choose a proper value for $\delta$.
By inequality \eqref{H_1} and \eqref{H_2}, we want to find $\delta$ such that with high probability, we have
\begin{equation*} \begin{aligned}
	H_1 \ge \frac{(1-\delta)T}{2}+\left[\frac{1}{4} (1-\sqrt{2}\delta_1)^2+\frac{1}{16}(1-2\delta_1)^2\right]\frac{(1+\delta)T}{2}>\frac{1}{2}(1+2\delta_1^2)T-\frac{1}{8}(1-2\delta_1)^2\frac{(1-\delta)T}{2}\ge  H_2
\end{aligned} \end{equation*}
where $\delta_1=\delta/\sqrt{1-\delta}$.
We also have the constraint that $0<2\delta_1<1$, which is equivalent to $0<\delta<\frac{\sqrt{17}-1}{2} \approx 0.39$. Solving the above inequality for $0<\delta<0.39$ we have $0<\delta\le 0.166185$.
Let $\delta=0.166185$. Then $4/\delta^2 \approx 145$.
Plugging into \eqref{H_1} and \eqref{H_2}, we have
\begin{align}
	\PR\left(H_1<0.512041T\right)&<4\exp\left(-\frac{T}{145}\right)
	\label{Prob_Small_H_1}
	\\
	\label{Prob_Large_H_2}
	\PR\left(H_2>0.512041T\right)&<8\exp\left(-\frac{T}{145}\right)
\end{align}
Therefore
$
	\PR \left(H_1<H_2\right)<12\exp(- \frac{T}{145}).
$
%	\begin{equation}
%	\PR \left(H_1<H_2\right)<\PR\left(H_1<0.512041T\right)+\PR\left(H_2>0.512041T\right)<12\exp\left(-\frac{T}{145}\right)
%	\end{equation}
This implies that $G_1<G_2$ with high probability.
Notice that the probability bound in the above equation doesn't depend on $N$.
Next we consider $j \geq 3$. Recall that
\begin{equation}
	\label{H_j formulation}
	H_j = \frac{\sum_{k=1}^j (n_k^j)^2}{n^j}+\frac{\sum_{k=1}^{j-1} (n_k^{-j})^2 +\sum_{k=j+1}^{N+1} (n_k^{-j})^2}{T-n^j}
\end{equation}
Consider some $\delta_2>0$. From \eqref{n^j bound} we have
\begin{equation}
	\PR\left(\left|n^j-\frac{1}{2}T\right|>\frac{\delta_2}{2}T\right) < 2\exp\bigg(-\frac{\delta_2^2 T}{4}\bigg)
\end{equation}
We investigate the second term of \eqref{H_j formulation}.
Conditional $n^j$, we have $n_1^{-j} \sim B(T-{n}^j,1/2)$, $T-{n}^j-n_1^{-j} \sim B(T-{n}^j,1/2)$, $n_2^{-j} \sim B(T-{n}^j,1/4)$ and $T-{n}^j-n_1^{-j}-n_2^{-j} \sim B(T-{n}^j,1/4)$. Define $\delta_3 \triangleq \delta_2/\sqrt{1-\delta_2}$. Then similar to (\ref{H_2_2}), we have
\begin{align}
	\label{H_j_2}
	&\PR \left((n_1^{-j})^2+(n_2^{-j})^2+\sum_{k=3}^{j-1} (n_k^{-j})^2 +\sum_{k=j+1}^{N+1} (n_k^{-j})^2>\left[\frac{1}{2}(1+2\delta_3^2)-\frac{1}{8}(1-2\delta_3)^2\right](T-n^j)^2\bigg|n^j\right)\nonumber\\
	<\,&\PR \left((n_1^{-j})^2+(n_2^{-j})^2+(T-n^j-n_1^{-j}-n_2^{-j})^2>\left[\frac{1}{2}(1+2\delta_3^2)-\frac{1}{8}(1-2\delta_3)^2\right](T-n^j)^2\bigg|n^j\right)\nonumber\\
	<\,&4\exp\left(-\frac{\delta_3^2(T-{n}^j)}{2}\right).
\end{align}
Then similarly we can bound the first term of \eqref{H_j formulation} since $n_1^{j} \sim B({n}^j,1/2)$, ${n}^j-n_1^{j} \sim B({n}^j,1/2)$, $n_2^{j} \sim B({n}^j,1/4)$ and ${n}^j-n_1^{j}-n_2^{j} \sim B(T-{n}^j,1/4)$.
\begin{align}
	\label{H_j_1}
	&\PR \left(\sum_{k=1}^j (n_k^j)^2>\left[\frac{1}{2}(1+2\delta_3^2)-\frac{1}{8}(1-2\delta_3)^2\right](n^j)^2\bigg|n^j\right)\nonumber\\
	<\,&\PR \left((n_1^j)^2+(n_2^j)^2+(n^j-n_1^j-n_2^j)^2>\left[\frac{1}{2}(1+2\delta_3^2)-\frac{1}{8}(1-2\delta_3)^2\right](n^j)^2\bigg|n^j\right)\nonumber\\
	<\,&4\exp\left(-\frac{\delta_3^2{n}^j}{2}\right).
\end{align}
Combining \eqref{H_j_2} and \eqref{H_j_1}, we have
\begin{align}
	\label{H_j|n_j}
	&\PR \left(H_j>[\frac{1}{2}(1+2\delta_3^2)-\frac{1}{8}(1-2\delta_3)^2]T\bigg|n^j\right)\nonumber\\
	<\,&\PR \left(\sum_{k=1}^j (n_k^j)^2>[\frac{1}{2}(1+2\delta_3^2)-\frac{1}{8}(1-2\delta_3)^2](n^j)^2\bigg|n^j\right)\nonumber\\
	&+\PR \left(\sum_{k=1}^{j-1} (n_k^{-j})^2 +\sum_{k=j+1}^{N+1} (n_k^{-j})^2>[\frac{1}{2}(1+2\delta_3^2)-\frac{1}{8}(1-2\delta_3)^2](T-n^j)^2\bigg|n^j\right)\nonumber\\
	<\,&4\exp\left(-\frac{\delta_3^2 {n}^j}{2}\right)+4\exp\left(-\frac{\delta_3^2(T-{n}^j)}{2}\right)
\end{align}
Using a similar argument, we can bound the unconditional probability:
\begin{align}
	\label{H_j}
	\PR \left(H_j>[\frac{1}{2}(1+2\delta_3^2)-\frac{1}{8}(1-2\delta_3)^2]T\right)
	%	<\,&\PR\left(\left|n^j-\frac{1}{2}T\right|>\frac{\delta_2}{2}T\right)+\PR \left(H_j>[\frac{1}{2}(1+2\delta_3^2)-\frac{1}{8}(1-2\delta_3)^2]T\bigg|n^j=\tilde{n}^j,\left|\tilde{n}^j-\frac{1}{2}T\right|<\frac{\delta_2}{2}T\right)\nonumber\\
	%	<\,&2\exp\bigg(-\frac{\delta_2^2 T}{4}\bigg)+8\exp\bigg(-\frac{\delta_2^2 T}{4}\bigg)\nonumber\\
	<\,10\exp\bigg(-\frac{\delta_2^2 T}{4}\bigg).
\end{align}
Similarly we can calculate $\delta_2$.
By \eqref{H_j} and \eqref{Prob_Small_H_1}, we have the following condition for $\delta_2$
$
	\frac{1}{2}(1+2\delta_3^2)-\frac{1}{8}(1-2\delta_3)^2<0.512041,
$
where $\delta_3=\delta_2/\sqrt{1-\delta_2}$. Again when we consider $0<\delta_2<0.39$, the above inequality is equivalent to $0<\delta_2<0.200261$.
Let $\delta_2=0.200261$, then $4/\delta_2^2 \approx 99.74$, so we have for all $j \geq 3$
\begin{equation}
	\label{Prob_Large_H_j}
	\PR \left(H_j > 0.512041T \right)<10\exp\left(-\frac{T}{100}\right)
\end{equation}
Now note that
\begin{align}
	\label{eq:first_split}
	&\PR \left(\text{first split not on product one}\right)=\,\PR \left(G_1>\min\{G_j|2 \leq j\leq N\}\right)\nonumber\\
	=\,&\PR \left(H_1<\max\{H_j|2 \leq j\leq N\}\right)
	<\,\PR \left(H_1<0.512041T\right)+\sum_{j=2}^N \PR \left(H_j >0.512041T\right)\nonumber\\
	<\,&4\exp\bigg(-\frac{\delta^2T}{4}\bigg)+8\exp\bigg(-\frac{\delta^2T}{4}\bigg)+(N-2)\times 10\exp\bigg(-\frac{\delta_2^2 T}{4}\bigg)\nonumber\\
	=\,&12\exp\left(-\frac{T}{145}\right)+10(N-2)\exp\left(-\frac{T}{100}\right).
\end{align}
This completes the proof of Lemma~\ref{lm:empirical_cut}.
\endproof

{\revise
Next we are going to show that the probability of the second split on product two, third split on product 3 and so on, can be bounded similarly.
Let $\Tscr$ denote the training set.
Define a sequence of subsets of $\Tscr$ as follows: $\Tscr_i \triangleq \{S \in \Tscr| 1,2,...,i-1 \notin S\}$.
That is, $\Tscr_i$ only contains assortments that include a subset of $\{i,i+1,\dots,N\}$.
Let $T_i \triangleq |\Tscr_i|$ denote the cardinality of set $\Tscr_i$.
Notice that $\Tscr_1=\Tscr$ and $T_1=T$. for $2 \leq i \leq N$, since $T_i \sim B(T,1/2^{i-1})$, by the Chernoff inequality we have
\begin{equation}
	\label{eq:T_i}
	\PR \left(T_i<(1-\delta_0)\frac{1}{2^{i-1}}T\right)<\exp(-\frac{\delta_0^2 T}{2^i}),
\end{equation}
where $0<\delta_0<1$.   

Define the event $A_i \triangleq \{\text{product } i \text{ is the best split for training set } \Tscr_i\}$, and let $\bar{A}_i$ denote the complement of event $A_i$.
Then conditional on $T_i$, we can bound the probability of event $\bar{A}_i$ using \eqref{eq:first_split} (with only $N-i+1$ products):
\begin{equation}
	\label{eq:A_i|T_i}
	\PR \left(\bar{A}_i\right|T_i) < 12\exp\left(-\frac{T_i}{145}\right)+10(N-i-1)\exp\left(-\frac{T_i}{100}\right)
\end{equation}
The unconditional probability can be bounded by combining \eqref{eq:T_i} and \eqref{eq:A_i|T_i}:
\begin{align}
	\label{eq:A_i}
	\PR \left(\bar{A}_i\right)<\,&\PR \left(T_i<(1-\delta_0)\frac{1}{2^{i-1}}T\right)+\sum_{k:k\ge (1-\delta_0)T/2^{i-1}}\PR(T_i=k)\PR\left(\bar{A}_i|T_i=k\right)\nonumber\\
	<\,&\exp(-\frac{\delta_0^2 T}{2^i})+\sum_{k:k\ge(1-\delta_0)T/2^{i-1}}\PR(T_i=k)\left(12\exp\left(-\frac{k}{145}\right)+10(N-i-1)\exp\left(-\frac{k}{100}\right)\right)\nonumber\\
	\leq\,&\exp(-\frac{\delta_0^2 T}{2^i})+12\exp\left(-\frac{(1-\delta_0)T}{145\cdot 2^{i-1}}\right)+10(N-i-1)\exp\left(-\frac{(1-\delta_0)T}{100 \cdot 2^{i-1}}\right)
\end{align}
Solving the equation $\frac{\delta_0^2 T}{2^i}=\frac{(1-\delta_0)T}{145\cdot 2^{i-1}}$, we get $\delta_0 \approx 0.1107$. Note that the leaf node size $l = (1-\delta_0) T / 2^{i-1} \approx 1.77 T / 2^i$, which ensures the split occurs whenever $T_i > (1-\delta_0)T/2^{i-1}$. Then we have
\begin{equation*}
	\PR \left(\bar{A}_i\right)\le13\exp\left(-\frac{T}{164\cdot 2^{i-1}}\right)+10(N-i-1)\exp\left(-\frac{T}{113 \cdot 2^{i-1}}\right).
\end{equation*}
If all the events $A_1, A_2, ..., A_m$ happen, we can get the right split for the first $m$ step.
That is, the first split is on product one, the second split is on product two, \dots, the $m$th split is on product $m$. We can bound the probability by the union bound:
\begin{align}
	\PR \left(\cap_{i=1}^m A_i\right)=\,&1-\PR \left(\cup_{i=1}^m \bar{A}_i\right)\nonumber\\
	\ge\,&1-\sum_{i=1}^m \PR(\bar{A}_i)\nonumber\\
	\ge \,& 1-\sum_{i=1}^m \left[13\exp\left(-\frac{T}{164\cdot 2^{i-1}}\right)+10(N-i-1)\exp\left(-\frac{T}{113 \cdot 2^{i-1}}\right)\right].
\end{align}
If the first $m$ splits match the products, then the assortments including at least one product among $\left\{1,\dots,m\right\}$ can be correctly classified.
Therefore, with a probability at least $\PR \left(\cap_{i=1}^m A_i\right)$, we can correctly predicts the choices of more than $(1-1/2^m)2^N$ assortments
Given $\epsilon>0$, letting $m=\lceil \log_2 \frac{1}{\epsilon} \rceil$ completes the proof.}
\endproof

\proof{Proof of Theorem~\ref{thm:information_gain_ratio}.}
We will start from proving the first split by employing information gain ratio is at product 1. The remaining split can be shown by induction. 

For a given dataset $\mathcal{D}$ and $i \in [N]$, let $\hat{q}_i$ and $\hat{p}_i$ denote the empirical frequency that product $i$ is offered and purchased, respectively. 
To describe the dataset in child node after a split, for $i, j \in [N]$, we define $\hat{p}_i^j$ and $\hat{p}_i^{-j}$ as the empirical frequency of purchasing product $i$ given that product $j$ is offered and not offered, respectively. 
Notice that $\hat{p}_j^i = 0$ and $\hat{p}_j^{-i} = \hat{p}_j / (1-\hat{q}_i)$ if $1 \leq i < j \leq N$;  $\hat{p}_i^i = \hat{p}_i / \hat{q}_i$ and $\hat{p}_i^{-i} = 0$ for all $i \in [N]$.

The entropy of the dataset $\mathcal{D}$ is 
$
	H(\mathcal{D}) = -\sum_{j=1}^N \hat{p}_j \ln(\hat{p}_j).
$
Let $\texttt{IV}(i) = -\hat{q}_i \ln(\hat{q}_i) - (1-\hat{q}_i) \cdot \ln(1-\hat{q}_i)$ for notational brevity, where $\texttt{IV}$ represents intrinsic value \citep{zhou2021machine}.
Notice that $\texttt{IGR}(\mathcal{D}, i) = \texttt{IG}(\mathcal{D},i) / \texttt{IV}(i)$.

The parent node splits into two child nodes after product $i$ is selected. Let $\mathcal{D}_i$ and $\mathcal{D}_{-i}$ denote the left node ($i \in S$) and right node ($i \notin S$), respectively. 
Suppose the first feature is split at product 1, then the information gain is
\begin{align}
	\texttt{IG}(\mathcal{D}, 1) &= H(\mathcal{D}) - \hat{q}_1 H(\mathcal{D}_1) - (1-\hat{q}_1) H(\mathcal{D}_{-1}) \\
	& = -\sum_{j=1}^N \hat{p}_j \ln(\hat{p}_j) + (1-\hat{q}_1) \sum_{j=2}^N \hat{p}_j^{-1} \ln(\hat{p}_j^{-1}) \nonumber \\
	& = -\sum_{j=1}^N \hat{p}_j \ln(\hat{p}_j) + (1-\hat{q}_1) \sum_{j=2}^N \frac{\hat{p}_j}{1-\hat{q}_1} \ln\Big(\frac{\hat{p}_j}{1-\hat{q}_1}\Big) \nonumber \\
	& = -\hat{p}_1 \ln(\hat{p}_1) - \sum_{j=2} \hat{p}_j \cdot \ln(1-\hat{q}_1) \nonumber \\
	& = -\hat{q}_1 \ln(\hat{q}_1) - (1-\hat{q}_1) \cdot \ln(1-\hat{q}_1) = \texttt{IV}(1) \nonumber 
\end{align}
where the second equality follows from $H(\mathcal{D}_1) = 0$, the third equality follows from $\hat{p}_j^{-1} = \hat{p}_j / (1-\hat{q}_1)$ for $j \geq 2$, and the second last equality follows from $\hat{p}_1= \hat{q}_1$.
Since $\texttt{IG}(\mathcal{D}, 1) = \texttt{IV}(1)$, we have the information gain ratio $\texttt{IGR}(\mathcal{D}, 1) = 1$ by definition. 

Next we consider if the first feature is split at product $i \neq 1$. 
For the left and right nodes, the entropy functions are:
\begin{align}
	H(\mathcal{D}_i) = -\sum_{j=1}^i \hat{p}_j^i \ln(\hat{p}_j^i), \quad 
	H(\mathcal{D}_{-i}) = -\sum_{j=1}^{i-1} \hat{p}_j^{-i} \ln(\hat{p}_j^{-i}) - \sum_{j=i+1}^N \hat{p}_j^{-i} \ln(\hat{p}_j^{-i})
\end{align}
The information gain is 
\begin{align}
	\texttt{IG}(\mathcal{D}, i) &= H(\mathcal{D}) - \hat{q}_i H(\mathcal{D}_i) - (1-\hat{q}_i) H(\mathcal{D}_{-i}) \\
	& = -\sum_{j=1}^N \hat{p}_j \ln(\hat{p}_j)+\hat{q}_i \sum_{j=1}^i \hat{p}_j^i \ln(\hat{p}_j^i)+(1-\hat{q}_i) \sum_{j=1}^{i-1} \hat{p}_j^{-i} \ln(\hat{p}_j^{-i})+(1-\hat{q}_i)\sum_{j=i+1}^N \frac{\hat{p}_j}{1-\hat{q}_i} \ln\Big(\frac{\hat{p}_j}{1-\hat{q}_i}\Big) \nonumber \\
	& = -\sum_{j=1}^{i} \hat{p}_j \ln(\hat{p}_j)+\hat{q}_i \sum_{j=1}^i \hat{p}_j^i \ln(\hat{p}_j^i)+(1-\hat{q}_i) \sum_{j=1}^{i-1} \hat{p}_j^{-i} \ln(\hat{p}_j^{-i})-\sum_{j=i+1}^N \hat{p}_j \ln(1-\hat{q}_i) \nonumber \\
	& = -\hat{p}_i \ln(\hat{q}_i)-\sum_{j=1}^{i-1} \hat{p}_j \ln(\hat{p}_j)+\hat{q}_i \sum_{j=1}^{i-1} \hat{p}_j^i \ln(\hat{p}_j^i)+(1-\hat{q}_i) \sum_{j=1}^{i-1} \hat{p}_j^{-i} \ln(\hat{p}_j^{-i})-\sum_{j=i+1}^N \hat{p}_j \ln(1-\hat{q}_i), \nonumber
\end{align}
where the second equality follows from $\hat{p}_j^{-i} = \hat{p}_j / (1-\hat{q}_i)$ for $j > i$, and the last equality follows from $-\hat{p}_i \ln(\hat{p}_i)+\hat{q}_i \hat{p}_i^i \ln(\hat{p}_i^i)=-\hat{p}_i \ln(\hat{q}_i)$.

To show the first split is at product 1 by employing information gain ratio, we only need to show $\texttt{IGR}(\mathcal{D}, i) < \texttt{IGR}(\mathcal{D}, 1) = 1$ for $i \geq 2$. Equivalently, we want to show $\texttt{IG}(\mathcal{D}, i) < \texttt{IV}(i)$ for $i \geq 2$. 
We have
\begin{align}
	&\texttt{IG}(\mathcal{D}, i) - \texttt{IV}(i) \\
	=& (\hat{q}_i-\hat{p}_i)\ln(\hat{q}_i)+\Big(1-\hat{q}_i-\sum_{j=i+1}^N \hat{p}_j\Big)\ln(1-\hat{q}_i) +\sum_{j=1}^{i-1} \Bigg[-\hat{p}_j \ln(\hat{p}_j)+\hat{q}_i \hat{p}_j^i \ln(\hat{p}_j^i)+(1-\hat{q}_i)\hat{p}_j^{-i}\ln(\hat{p}_j^{-i})\Bigg] \nonumber \\
	=&\hat{q}_i \ln(\hat{q}_i) \sum_{j=1}^{i-1} \hat{p}_j^i + (1-\hat{q}_i) \ln(1-\hat{q}_i) \sum_{j=1}^{i-1} \hat{p}_j^{-i} +\sum_{j=1}^{i-1} \Bigg[-\hat{p}_j \ln(\hat{p}_j)+\hat{q}_i \hat{p}_j^i \ln(\hat{p}_j^i)+(1-\hat{q}_i)\hat{p}_j^{-i}\ln(\hat{p}_j^{-i})\Bigg] \nonumber \\
	=& \sum_{j=1}^{i-1} \Bigg[-\hat{p}_j \ln(\hat{p}_j)+\hat{q}_i \hat{p}_j^i \ln(\hat{q}_i\hat{p}_j^i)+(1-\hat{q}_i)\hat{p}_j^{-i}\ln\big((1-\hat{q}_i)\hat{p}_j^{-i}\big)\Bigg], \nonumber
\end{align}
where the second equality follows from $\hat{q}_i -\hat{p}_i = \hat{q}_i \sum_{j=1}^{i-1} \hat{p}_j^i$ and $1-\hat{q}_i-\sum_{j=i+1}^N \hat{p}_j=(1-\hat{q}_i) (1-\sum_{j=i+1}^N \hat{p}_j^{-i})=(1-\hat{q}_i)\sum_{j=1}^{i-1} \hat{p}_j^{-i}$.
Since $\hat{p}_j = \hat{q}_i \hat{p}_j^i + (1-\hat{q}_i) \hat{p}_j^{-i}$, we can conclude that $\texttt{IG}(\mathcal{D}, i) \leq \texttt{IV}(i)$, i.e., $\texttt{IGR}(\mathcal{D}, i) \leq 1$ for all $i \geq 2$. Moreover, $\texttt{IGR}(\mathcal{D}, i) = 1$ only when for all $j = 1, \ldots, i-1$, $\hat{q}_i \hat{p}_j^i = 0$ or $(1 - \hat{q}_i) \hat{p}_j^{-i} = 0$. 

We have $0 < \hat{q}_i < 1$ for $i \in [N]$, otherwise splitting at product $i$ is invalid. By our assumptions, for $i \geq 2$, there exists $S \in \mathcal{D}$ s.t., $i-1 \in S, i \in S$ and $\{1, \ldots, i-2\} \cap S = \emptyset$, so $\hat{p}_{i-1}^i \neq 0$; similarly, there exists $S \in \mathcal{D}$ s.t., $i-1 \in S$ and $\{1, \ldots, i-2, i\} \cap S = \emptyset$, so $\hat{p}_{i-1}^{-i} \neq 0$. We can conclude that $\texttt{IGR}(\mathcal{D}, i) < 1 = \texttt{IGR}(\mathcal{D}, 1)$ for all $i=2, \ldots, N$, and the first split is at product 1. 

After the first split, the left node stops splitting because all samples have the same choice product 1. The right node with dataset $\mathcal{D}_{-1}$ will split on product 2 since $\texttt{IGR}(\mathcal{D}_{-1}, i) < \texttt{IGR}(\mathcal{D}_{-1}, 2) = 1$ for $i = 3, \ldots, N$. By induction, we can show that at $i$th iteration, every left node stops splitting, and the right node will split at product $i$. Therefore, the random forest algorithm recovers the preference ranking DCM. 
$\hfill \Box$
\endproof

\proof{Proof of Proposition~\ref{prop:IGR_unknown_rank}.}
% We assume that the preference ranking is $1 \succ 2 \succ \ldots \succ N \succ 0$ without loss of generality. We can check the conditions in Theorem \ref{thm:information_gain_ratio} hold as follows. 
We will show that the training data must include all assortments with 2 products, i.e., the assortment $\{i,j\}$ such that $i,j \in [N]$ and $i \neq j$. The total number of these assortments are $N(N-1) / 2$.
Suppose that the assortment $\{i, j\}$ is not observed in the training data. Without loss of generality,  we assume $i = N-1, j = N$. Then, the preference ranking $1 \succ 2 \succ \ldots \succ N-2 \succ N-1 \succ N \succ 0$ and $1 \succ 2 \succ \ldots \succ N-2 \succ N \succ N-1 \succ 0$ exhibit the same choice behavior for all assortments except $\{N-1, N\}$. Therefore, they cannot be distinguished or recovered by any algorithm. 
\endproof

\proof{Proof of Proposition~\ref{prop:link-func}.}
We need to show that the $b$th tree constructed by the algorithm of both link functions returns the same partition (in the sense that each region contains the same set of observations in the training data) of the predictor space $[0,1]^N$ and the same class labels in each region/leaf.
The class labels are guaranteed to be the same because we control the internal randomizer in Step~\ref{step:random-label-p}.
To show the partitions are the same, it suffices to show that each split creates regions that are identical for the two link functions in the sense that the resulting regions contain the same set of observations.
We prove this claim by induction.

Before the construction of the $b$th tree, because the internal randomizers in Step~\ref{step:select-subsample-p} are equalized, the root node $[0,1]^N$ for both link functions contains the same set of observations.
Now focusing on a leaf node in the middle of constructing the $b$th tree for both link functions.
We use $[l^{(j)}_1,u^{(j)}_1]\times\dots\times [l^{(j)}_N,u^{(j)}_N]\subset [0,1]^N$ to denote the region of the leaf node for link functions $j=1,2$.
By the inductive hypothesis, both regions contain the same set of observations.
Without loss of generality, we assume that the regions contain $\left\{g_1(\bm p_t)\right\}_{t=1}^{T_1}$ and $\left\{g_2(\bm p_t)\right\}_{t=1}^{T_1}$, respectively.
After Step~\ref{step:choose-direction-p}, the same set of candidate splitting products are selected.
To show that Step~\ref{step:criterion-p} results in the same split in the two regions, consider a given split product $m$ and split point $x^{j}$ for $j=1,2$.
If $[l^{(1)}_1,u^{(1)}_1]\times\dots[l^{(1)}_m, x^{(1)}]\times\dots\times [l^{(1)}_N,u^{(1)}_N]$ and $[l^{(2)}_1,u^{(2)}_1]\times\dots[l^{(2)}_m, x^{(2)}]\times\dots\times [l^{(2)}_N,u^{(2)}_N]$ contain the same set of observations, i.e., for $t=1,\dots,T_1$
\begin{equation*} \begin{aligned}
	&g_1(\bm p_t)\in [l^{(1)}_1,u^{(1)}_1]\times\dots[l^{(1)}_m, x^{(1)}]\times\dots\times [l^{(1)}_N,u^{(1)}_N]\\
	\iff &g_2(\bm p_t)\in [l^{(2)}_1,u^{(2)}_1]\times\dots[l^{(2)}_m, x^{(2)}]\times\dots\times [l^{(2)}_N,u^{(2)}_N],
\end{aligned} \end{equation*}
then the Gini indices resulting from the splits are equal for the two link functions.
This is because the Gini index only depends on the class composition in a region instead of the locations of the input, and the splits above lead to the same class composition in the sub-regions.
This implies that in Step~\ref{step:choose-direction-p}, both trees are going to find the optimal splits that lead to the same division of training data in the sub-regions.
By induction and the recursive nature of the tree construction, Algorithm~\ref{alg:rf-pricing} outputs the same partition in the $b$th tree for both link functions, i.e., the training data is partitioned equally.
This completes the proof.
\endproof

\section{Aggregated Choice Data}\label{sec:aggregate-choice}
One of the most pressing practical challenges in data analytics is data quality.
In Section~\ref{sec:data_estimation}, the historical data $\left\{(i_t,\bm x_t)\right\}_{t=1}^T$ is probably the most structured and granular form of data a firm can hope to acquire.
While most academic papers studying the estimation of DCMs assume this level of granularity, in practice, it is frequent to see data in a more aggregate format.  As an example, consider an airline offering three service classes E, T and Q of a flight, where data is aggregated over different sales channels over a specific time window during which there may be changes in the offered assortments.  The company records information at certain time clicks as in Table~\ref{tab:sample-data}.
\begin{table}[h]
	\centering
	\caption{A sample daily data of offered service classes and the number of bookings.}
	\def\arraystretch{0.9}\begin{tabular}{crr}
		\hline
		Class & Closure percentage & \#Booking\\
		\hline
		E & 20\% & 2\\
		T & 0\% &5\\
		Q & 90\% &1\\
		\hline
	\end{tabular}
	\label{tab:sample-data}
\end{table}
For each class, the closure percentage reflects the fraction of time that the class is not open for booking, i.e., included in the assortment.
Thus, 100\% would imply that the corresponding class is not offered during that time window.
The number of bookings for each class is also recorded.
There may be various reasons behind the aggregation of data.
The managers may not realize the value of high-quality data or are unwilling to invest in the infrastructure and human resources to reform the data collection process.
%One of the authors has encountered this situation in practice with aggregate datasets as in Table~\ref{tab:sample-data}.

Fortunately, random forests can deal with aggregated choice data naturally.
Suppose the presented aggregated data has the form
$\left\{(\bm p_s, \bm b_s)\right\}_{s=1}^S$, where $\bm p_s\in [0,1]^N$ denotes the closure percentage of the $N$ products in day $s$, $\bm b_s\in \Z_+^{N+1}$ denotes the number of bookings\footnote{Again, we do not deal with demand censoring in this paper and assume that $\bm b_s$ has an additional dimension to record the number of consumers who do not book any class.}, and the data spans $S$ time windows.
We  transform the data into the desired form as follows:
for each time window $s$, we create $ D_s\triangleq \sum_{k=0}^N \bm b_s(k)$ observations, $\left\{(i_{s,k}, \bm x_{s,k})\right\}_{k=1}^{D_s}$.
The predictor $\bm x_{s,k}\equiv \bm 1-\bm p_s \in[0,1]^N$ and let the choices $i_{s,k}$ be valued $j$ for $b_s(j)$ times, for $j=0,\dots,N$.

To explain the intuition behind the data transformation, notice that we cannot tell from the data which assortment a customer faced when she made the booking.
We simply take an \emph{average} assortment that the customer may have faced, represented by $\bm 1-\bm p_s$.
In other words, if $1-\bm p_s(j)\in[0,1]$ is large, then it implies that product $j$ is offered most of the time during the day, and the transformation leads to the interpretation that consumers see a larger ``fraction'' of product $j$.
As the closure percentage has a continuous impact on the eventual choice, it is reasonable to transform the input into a Euclidean space $[0,1]^N$,
and build a smooth transition between the two ends $\bm p_s(j)=0$ (the product is always offered) and $\bm p_s(j)=1$ (the product is never offered).

The transformation creates a training dataset for classification with continuous input.
The random forest can accommodate the data with minimal adaptation.
In particular, all the steps in Algorithm~\ref{alg:rf} can be performed.
The tree may have different structures: because the predictor $\bm x$ may not be at the corner of the unit hypercube anymore,
the split points may no longer be at 0.5.

\section{Sample Code}\label{sec:sample-code}
\begin{center}
	\includegraphics[width=\textwidth]{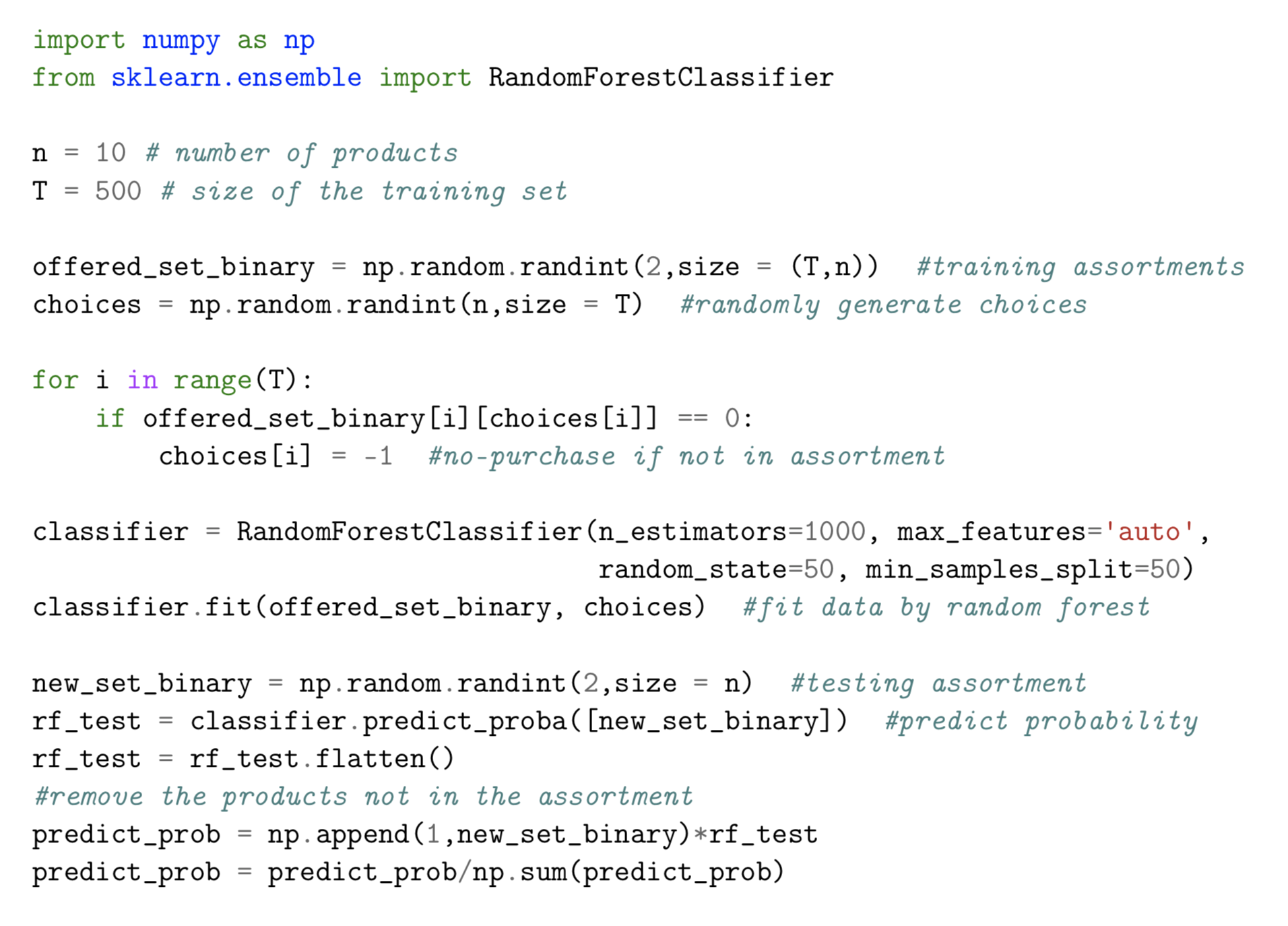}
\end{center}

}
\end{appendices}

% Acknowledgments here
\ACKNOWLEDGMENT{}	
		% CASE 1: BiBTeX used to constantly update the references
		%   (while the paper is being written).
		%\bibliographystyle{ormsv080} % outcomment this and next line in Case 1
		%\bibliography{<your bib file(s)>} % if more than one, comma separated
		
		% CASE 2: BiBTeX used to generate mypaper.bbl (to be further fine tuned)
		%\input{mypaper.bbl} % outcomment this line in Case 2
		
		%If you don't use BiBTex, you can manually itemize references as shown below.

%\newpage

%\begin{thebibliography}{}%
%
%\end{thebibliography}
	
\ECSwitch
\ECHead{Online Appendix: Extensions and Additional Experiments}
{\SingleSpacedXI

\section{Choice of Hyper-parameters} \label{sec:hyper-parameter}
In this section, we conduct an extensive numerical study on the hyper-parameters of Algorithms~\ref{alg:rf} to find the optimal choice and test the sensitivity.
The hyper-parameters investigated include the number of trees $B$,
the sub-sample size $z$, the number of products to split $m$, and the terminal leaf size $l$.
Note that in all the numerical studies in the paper, we use $B = 1000$, $z = T$, $m = \sqrt{N}$, and $l = 50$.
Such choice seems to be robust across all problem setups.

%We test when the ground truth model is MNL and 10 segments mixture of rank list model.
%The performance is in general robust on tunable parameters. Moreover, based on the RMSE result and running time we choose  in our further numerical studies.

\subsection{Rank-based DCM}\label{sec:hyper-para-rank}
In this section, we generate the data using the rank-based DCM with 10 customer types.
We consider $N \in \{10,30,50\}$ products.
We first generate the training data using the rank-based DCM, where the proportion/weight of 10 types follows a Dirichlet distribution.
For each type, the preference is a random permutation of $N$ products and the outside option.
The customer will choose the product (or the outside option) ranked the highest among the offered assortment.
If the outside option is ranked higher than all products in the assortment, then she leaves without purchase.
The training data consists of $\tilde{T} \in \{30, 150, 600\}$ periods.
In each period, we generate 10 transactions under a selected assortment, whose selection method is given shortly.
The total size of the training data is thus $T = 10 \tilde{T}$.
For $N=10$, we randomly and uniformly select an assortment from all $2^N-1$ assortments.
For $N =30$ and $50$, each product has a probability $1/6$ and $1/10$ respectively to be included in an assortment, so the average number of products in an assortment is always 5 for $N\in \left\{10, 30, 50\right\}$.
We make this choice to compare the RMSE \eqref{RMSE_soft}
across different $N$, because for assortments of different sizes, their RMSE usually differ substantially.
However, since we cannot enumerate $2^{30}$ or $2^{50}$ assortments to compute the RMSE, we randomly sample 10,000 assortments as a test set $\mathcal{T}_{test}$ to approximate the RMSE by \eqref{eq:RMSE_MC}.

We set $B = 1000, z = T, m = \sqrt{N}, l = 50$ as the default hyper-parameters.
We vary one of them to investigate the effect while fixing the other three.
When testing a combination of hyper-parameters, we generate 100 independent training datasets and compute the average and standard deviation of the RMSE.
Table \ref{tab:rank_z} shows the average and standard deviation of RMSE for different $z$ when $B = 1000, m = \sqrt{N}, l = 50$.
Similarly, we test $m$ and $l$ in Tables \ref{tab:rank_m} and \ref{tab:rank_l}.
For the number of trees $B$, we report RMSE and running time in Tables \ref{tab:rank_B} and \ref{tab:rank_B_time} respectively.
We can see that our default choice is among the best in all the cases.
From Tables \ref{tab:rank_B} and \ref{tab:rank_B_time} we can see that the performance is better when $B$ is large.
However, when $B \geq 1000$, the RMSE is  almost the same. But the running time grows linearly in $B$.
Therefore, choosing $B = 1000$ strikes a balance between the performance and computational efficiency.

\begin{table}[h]
	\begin{center}
		\caption{The average and standard deviation of RMSE using random forests with different sub-sample size $z$ when the training data is generated by the rank-based model.}
		\begin{tabular}{rrccccc}
			\hline
			$N$ & $T$  & $z = T/20$      & $z = T/10$      & $z=T/4$                  & $z=T/2$                  & $\bm {z=T}$           \\
			\hline
			10  & 300  & 0.113 (0.017) & 0.113 (0.017) & 0.099 (0.013) & 0.092 (0.012)          & \textbf{0.089} (0.012) \\
			10  & 1500 & 0.092 (0.010) & 0.078 (0.008) & 0.066 (0.006) & 0.060 (0.006)          & \textbf{0.057} (0.005) \\
			10  & 6000 & 0.063 (0.005) & 0.053 (0.005) & 0.044 (0.005) & \textbf{0.040} (0.004) & 0.041 (0.004)          \\
			30  & 300  & 0.217 (0.027) & 0.218 (0.027) & 0.187 (0.021) & 0.171 (0.017)          & \textbf{0.168} (0.017) \\
			30  & 1500 & 0.161 (0.020) & 0.137 (0.012) & 0.124 (0.008) & 0.117 (0.008)          & \textbf{0.114} (0.008) \\
			30  & 6000 & 0.117 (0.009) & 0.104 (0.007) & 0.092 (0.007) & 0.087 (0.007)          & \textbf{0.085} (0.008) \\
			50  & 300  & 0.256 (0.030) & 0.256 (0.030) & 0.224 (0.023) & 0.209 (0.020)          & \textbf{0.206} (0.020) \\
			50  & 1500 & 0.188 (0.018) & 0.155 (0.012) & 0.143 (0.010) & 0.140 (0.010)          & \textbf{0.139} (0.011) \\
			50  & 6000 & 0.128 (0.008) & 0.121 (0.006) & 0.111 (0.007) & 0.108 (0.008)          & \textbf{0.107} (0.009) \\
			\hline
		\end{tabular}
		\label{tab:rank_z}
	\end{center}
\end{table}

\begin{table}[t]
	\begin{center}
		\caption{The average and standard deviation of RMSE using random forests with different split  products $m$ when the training data is generated by the rank-based model.}
		\begin{tabular}{rrccccc}
			\hline
			$N$ & $T$  & $m = 1$                  & $m = \log N$              & $\bm {m = \sqrt{N}}$ & $m = N/2$       & $m = N$         \\
			\hline
			10  & 300  & 0.096 (0.013)          & 0.091 (0.012)          & \textbf{0.089} (0.012) & 0.090 (0.013) & 0.102 (0.017) \\
			10  & 1500 & 0.071 (0.007)          & 0.061 (0.005)          & \textbf{0.057} (0.005) & 0.059 (0.006) & 0.068 (0.008) \\
			10  & 6000 & 0.051 (0.005)          & 0.043 (0.003)          & \textbf{0.041} (0.004) & 0.043 (0.005) & 0.052 (0.006) \\
			30  & 300  & 0.171 (0.020)          & \textbf{0.166} (0.017) & 0.168 (0.017)          & 0.180 (0.021) & 0.191 (0.024) \\
			30  & 1500 & 0.131 (0.015)          & 0.116 (0.008)          & \textbf{0.114} (0.008) & 0.129 (0.017) & 0.140 (0.020) \\
			30  & 6000 & 0.109 (0.012)          & 0.088 (0.006)          & \textbf{0.085} (0.008) & 0.096 (0.015) & 0.104 (0.016) \\
			50  & 300  & \textbf{0.200} (0.019) & 0.201 (0.018)          & 0.206 (0.020)          & 0.222 (0.027) & 0.230 (0.030) \\
			50  & 1500 & 0.155 (0.016)          & \textbf{0.139} (0.009) & 0.139 (0.011)          & 0.162 (0.021) & 0.171 (0.021) \\
			50  & 6000 & 0.127 (0.014)          & 0.108 (0.007)          & \textbf{0.107} (0.009) & 0.126 (0.017) & 0.134 (0.019) \\
			\hline
		\end{tabular}
		\label{tab:rank_m}
	\end{center}
\end{table}

\begin{table}[t]
	\begin{center}
		\caption{The average and standard deviation of RMSE using random forests with different leaf sizes $l$ when the training data is generated by the rank-based model.}
		\begin{tabular}{rrccccc}
			\hline
			$N$ & $T$  & $l = 1$         & $l = 10$                 & $\bm {l = 50}$                 & $l = 100$                & $l = 200$       \\
			\hline
			10  & 300  & 0.094 (0.011) & 0.094 (0.011)          & \textbf{0.089} (0.012) & 0.095 (0.013) & 0.113 (0.017) \\
			10  & 1500 & 0.073 (0.004) & 0.071 (0.004)          & \textbf{0.057} (0.005) & 0.061 (0.006) & 0.068 (0.006) \\
			10  & 6000 & 0.076 (0.004) & 0.074 (0.004)          & \textbf{0.041} (0.004) & 0.041 (0.005) & 0.046 (0.005) \\
			30  & 300  & 0.166 (0.017) & \textbf{0.166} (0.017) & 0.168 (0.017)          & 0.177 (0.018) & 0.217 (0.027) \\
			30  & 1500 & 0.114 (0.010) & \textbf{0.114} (0.010) & 0.114 (0.008)          & 0.121 (0.008) & 0.131 (0.009) \\
			30  & 6000 & 0.086 (0.008) & 0.086 (0.008)          & \textbf{0.085} (0.008) & 0.089 (0.008) & 0.097 (0.007) \\
			50  & 300  & 0.206 (0.021) & \textbf{0.205} (0.021) & 0.206 (0.019)          & 0.214 (0.020) & 0.256 (0.030) \\
			50  & 1500 & 0.139 (0.012) & \textbf{0.139} (0.012) & 0.139 (0.011)          & 0.145 (0.011) & 0.153 (0.011) \\
			50  & 6000 & 0.108 (0.010) & 0.108 (0.009)          & \textbf{0.107} (0.009) & 0.111 (0.009) & 0.120 (0.008) \\
			\hline
		\end{tabular}
		\label{tab:rank_l}
	\end{center}
\end{table}

\begin{table}[t]
	\begin{center}
		\caption{The average and standard deviation of RMSE using random forests with different numbers of trees $B$ when the training data is generated by the rank-based model.}
		\begin{tabular}{rrccccc}
			\hline
			$N$ & $T$  & $B = 10$        & $B = 100$       & $\bm {B = 1000}$      & $B = 5000$      & $B = 10000$     \\
			\hline
			10  & 300  & 0.096 (0.014) & 0.089 (0.012) & 0.089 (0.012) & 0.089 (0.012) & 0.089 (0.012) \\
			10  & 1500 & 0.066 (0.006) & 0.058 (0.005) & 0.057 (0.005) & 0.057 (0.005) & 0.057 (0.005) \\
			10  & 6000 & 0.050 (0.004) & 0.042 (0.004) & 0.041 (0.004) & 0.041 (0.004) & 0.041 (0.004) \\
			30  & 300  & 0.178 (0.018) & 0.168 (0.017) & 0.168 (0.017) & 0.167 (0.017) & 0.167 (0.017) \\
			30  & 1500 & 0.128 (0.010) & 0.115 (0.008) & 0.114 (0.008) & 0.114 (0.008) & 0.114 (0.008) \\
			30  & 6000 & 0.099 (0.009) & 0.086 (0.008) & 0.085 (0.008) & 0.084 (0.008) & 0.084 (0.008) \\
			50  & 300  & 0.219 (0.022) & 0.207 (0.020) & 0.206 (0.020) & 0.206 (0.020) & 0.206 (0.020) \\
			50  & 1500 & 0.154 (0.014) & 0.141 (0.011) & 0.139 (0.011) & 0.139 (0.011) & 0.139 (0.011) \\
			50  & 6000 & 0.122 (0.011) & 0.109 (0.009) & 0.107 (0.009) & 0.107 (0.009) & 0.107 (0.009) \\
			\hline
		\end{tabular}
		\label{tab:rank_B}
	\end{center}
\end{table}

\begin{table}[t]
	\begin{center}
		\caption{The running time (in seconds) of random forests with different numbers of trees $B$ when the training data is generated by the rank-based model.}
		\begin{tabular}{rrccccc}
			\hline
			$N$ & $T$  & $B = 10$        & $B = 100$       & $\bm {B = 1000}$      & $B = 5000$      & $B = 10000$     \\
			\hline
			10  & 300  & 0.03     & 0.16      & 1.48       & 7.44       & 14.86       \\
			10  & 1500 & 0.03     & 0.19      & 1.76       & 8.71       & 17.47       \\
			10  & 6000 & 0.04     & 0.30      & 2.82       & 13.92      & 27.13       \\
			30  & 300  & 0.16     & 0.42      & 3.07       & 14.04      & 27.52       \\
			30  & 1500 & 0.18     & 0.50      & 3.85       & 17.24      & 32.92       \\
			30  & 6000 & 0.18     & 0.63      & 5.76       & 24.96      & 49.86       \\
			50  & 300  & 0.18     & 0.48      & 3.45       & 16.66      & 32.97       \\
			50  & 1500 & 0.19     & 0.56      & 4.69       & 20.52      & 40.89       \\
			50  & 6000 & 0.19     & 0.79      & 7.30       & 32.97      & 66.04        \\
			\hline
		\end{tabular}
		\label{tab:rank_B_time}
	\end{center}
\end{table}

\subsection{The MNL Model}\label{sec:hyper-para-mnl}
In this section, we generate the data using the MNL model, where the expected utility of each product and the outside option is drawn uniformly randomly from $[0,1]$.
Other settings are the same as Section~\ref{sec:hyper-para-rank}.
The results are shown in Tables \ref{tab:MNL_z} to \ref{tab:MNL_B_time}.
We can see that except for $m$, our default choice is among the best in all the cases.
For $m$, the optimal choice seems to be less than $\sqrt{N}$.
Even in this case, using $m=\sqrt{N}$ is within one standard deviation away from the best choice.

\begin{table}[t]
	\begin{center}
		\caption{The average and standard deviation of RMSE using random forests with different sub-sample size $z$ when the training data is generated by the MNL model.}
		\begin{tabular}{rrccccc}
			\hline
			$N$ & $T$  & $z = T/20$      & $z = T/10$      & $z=T/4$                  & $z=T/2$                  & $\bm {z=T}$           \\
			\hline
			10  & 300  & 0.076 (0.014) & 0.076 (0.014) & 0.067 (0.013)          & \textbf{0.065} (0.011) & 0.066 (0.010)          \\
			10  & 1500 & 0.058 (0.009) & 0.050 (0.007) & 0.045 (0.006)          & \textbf{0.045} (0.005) & 0.047 (0.004)          \\
			10  & 6000 & 0.039 (0.005) & 0.035 (0.004) & \textbf{0.033} (0.002) & 0.035 (0.002)          & 0.039 (0.001)          \\
			30  & 300  & 0.196 (0.017) & 0.196 (0.017) & 0.164 (0.015)          & 0.148 (0.013)          & \textbf{0.145} (0.013) \\
			30  & 1500 & 0.140 (0.011) & 0.114 (0.010) & 0.102 (0.008)          & 0.098 (0.006)          & \textbf{0.097} (0.006) \\
			30  & 6000 & 0.095 (0.007) & 0.084 (0.006) & 0.075 (0.004)          & \textbf{0.073} (0.003) & 0.074 (0.003)          \\
			50  & 300  & 0.244 (0.015) & 0.244 (0.014) & 0.204 (0.013)          & 0.183 (0.012)          & \textbf{0.179} (0.012) \\
			50  & 1500 & 0.170 (0.010) & 0.133 (0.008) & \textbf{0.121} (0.007) & 0.122 (0.006)          & 0.124 (0.006)          \\
			50  & 6000 & 0.105 (0.008) & 0.100 (0.007) & 0.092 (0.005)          & \textbf{0.092} (0.004) & 0.095 (0.003)          \\
			\hline
		\end{tabular}
		\label{tab:MNL_z}
	\end{center}
\end{table}

\begin{table}[t]
	\begin{center}
		\caption{The average and standard deviation of RMSE using random forests with different split products $m$ when the training data is generated by the MNL model.}
		\begin{tabular}{rrccccc}
			\hline
			$N$ & $T$  & $m = 1$                  & $m = \log N$              & $\bm {m = \sqrt{N}}$ & $m = N/2$       & $m = N$         \\
			\hline
			10  & 300  & \textbf{0.061} (0.011) & 0.063 (0.010)          & 0.066 (0.010)         & 0.071 (0.011) & 0.079 (0.013) \\
			10  & 1500 & \textbf{0.042} (0.005) & 0.043 (0.004)          & 0.047 (0.004)         & 0.055 (0.005) & 0.064 (0.006) \\
			10  & 6000 & \textbf{0.034} (0.002) & 0.035 (0.002)          & 0.039 (0.001)         & 0.047 (0.002) & 0.055 (0.003) \\
			30  & 300  & \textbf{0.138} (0.015) & 0.141 (0.013)          & 0.145 (0.013)         & 0.156 (0.014) & 0.162 (0.016) \\
			30  & 1500 & 0.092 (0.010)          & \textbf{0.092} (0.007) & 0.097 (0.006)         & 0.111 (0.007) & 0.117 (0.008) \\
			30  & 6000 & 0.068 (0.007)          & \textbf{0.067} (0.004) & 0.074 (0.003)         & 0.089 (0.005) & 0.093 (0.006) \\
			50  & 300  & \textbf{0.171} (0.014) & 0.174 (0.012)          & 0.179 (0.012)         & 0.189 (0.012) & 0.194 (0.013) \\
			50  & 1500 & 0.119 (0.011)          & \textbf{0.117} (0.008) & 0.124 (0.006)         & 0.138 (0.008) & 0.141 (0.009) \\
			50  & 6000 & 0.088 (0.009)          & \textbf{0.085} (0.006) & 0.095 (0.003)         & 0.110 (0.006) & 0.113 (0.006) \\
			\hline
		\end{tabular}
		\label{tab:MNL_m}
	\end{center}
\end{table}

\begin{table}[t]
	\begin{center}
		\caption{The average and standard deviation of RMSE using random forests with different leaf sizes $l$ when the training data is generated by the MNL model.}
		\label{tab:MNL_l}
		\begin{tabular}{rrccccc}
			\hline
			$N$ & $T$  & $l = 1$         & $l = 10$                 & $\bm {l = 50}$                 & $l = 100$                & $l = 200$       \\
			\hline
			10  & 300  & 0.079 (0.007) & 0.078 (0.007)          & \textbf{0.066} (0.010) & 0.067 (0.012)          & 0.076 (0.015) \\
			10  & 1500 & 0.072 (0.003) & 0.070 (0.003)          & 0.047 (0.004)          & \textbf{0.047} (0.005) & 0.049 (0.006) \\
			10  & 6000 & 0.082 (0.002) & 0.080 (0.002)          & 0.039 (0.001)          & \textbf{0.035} (0.002) & 0.036 (0.002) \\
			30  & 300  & 0.141 (0.013) & \textbf{0.141} (0.013) & 0.145 (0.013)          & 0.156 (0.013)          & 0.196 (0.016) \\
			30  & 1500 & 0.096 (0.005) & \textbf{0.096} (0.005) & 0.097 (0.006)          & 0.106 (0.006)          & 0.117 (0.007) \\
			30  & 6000 & 0.076 (0.002) & 0.075 (0.002)          & \textbf{0.074} (0.003) & 0.079 (0.003)          & 0.089 (0.003) \\
			50  & 300  & 0.175 (0.013) & \textbf{0.175} (0.012) & 0.179 (0.012)          & 0.192 (0.012)          & 0.243 (0.015) \\
			50  & 1500 & 0.122 (0.006) & \textbf{0.122} (0.006) & 0.124 (0.006)          & 0.131 (0.006)          & 0.142 (0.007) \\
			50  & 6000 & 0.095 (0.003) & 0.095 (0.003)          & \textbf{0.095} (0.003) & 0.100 (0.003)          & 0.112 (0.004) \\
			\hline
		\end{tabular}
	\end{center}
\end{table}

\begin{table}[t]
	\begin{center}
		\caption{The average and standard deviation of RMSE using random forests with different numbers of trees $B$ when the training data is generated by the MNL model.}
		\begin{tabular}{rrccccc}
			\hline
			$N$ & $T$  & $B = 10$        & $B = 100$       & $\bm {B = 1000}$      & $B = 5000$      & $B = 10000$     \\
			\hline
			10  & 300  & 0.075 (0.010) & 0.067 (0.011) & 0.066 (0.010) & 0.066 (0.010) & 0.066 (0.010) \\
			10  & 1500 & 0.057 (0.004) & 0.048 (0.004) & 0.047 (0.004) & 0.047 (0.004) & 0.047 (0.004) \\
			10  & 6000 & 0.048 (0.002) & 0.040 (0.001) & 0.039 (0.001) & 0.039 (0.002) & 0.039 (0.002) \\
			30  & 300  & 0.157 (0.012) & 0.146 (0.013) & 0.145 (0.013) & 0.145 (0.013) & 0.145 (0.013) \\
			30  & 1500 & 0.112 (0.005) & 0.099 (0.005) & 0.097 (0.006) & 0.097 (0.006) & 0.097 (0.006) \\
			30  & 6000 & 0.090 (0.003) & 0.075 (0.003) & 0.074 (0.003) & 0.073 (0.003) & 0.073 (0.003) \\
			50  & 300  & 0.191 (0.013) & 0.180 (0.012) & 0.179 (0.012) & 0.179 (0.011) & 0.179 (0.011) \\
			50  & 1500 & 0.139 (0.006) & 0.125 (0.006) & 0.124 (0.006) & 0.124 (0.006) & 0.124 (0.006) \\
			50  & 6000 & 0.111 (0.003) & 0.096 (0.003) & 0.095 (0.003) & 0.094 (0.003) & 0.094 (0.003) \\
			\hline
		\end{tabular}
		\label{tab:MNL_B}
	\end{center}
\end{table}

\begin{table}[t]
	\begin{center}
		\caption{The running time (in seconds) of random forests with different numbers of trees $B$ when the training data is generated by the MNL model.}
		\begin{tabular}{rrccccc}
			\hline
			$N$ & $T$  & $B = 10$        & $B = 100$       & $\bm {B = 1000}$      & $B = 5000$      & $B = 10000$     \\
			\hline
			10  & 300  & 0.03     & 0.16      & 1.32       & 7.30       & 14.62       \\
			10  & 1500 & 0.03     & 0.20      & 1.61       & 8.72       & 17.26       \\
			10  & 6000 & 0.04     & 0.30      & 2.58       & 14.15      & 28.20       \\
			30  & 300  & 0.17     & 0.45      & 3.00       & 15.62      & 31.17       \\
			30  & 1500 & 0.17     & 0.55      & 3.58       & 18.76      & 36.26       \\
			30  & 6000 & 0.19     & 0.67      & 5.54       & 26.79      & 50.99       \\
			50  & 300  & 0.18     & 0.51      & 3.47       & 18.17      & 36.52       \\
			50  & 1500 & 0.19     & 0.59      & 4.22       & 22.53      & 44.48       \\
			50  & 6000 & 0.20     & 0.82      & 6.79       & 33.27      & 66.06      \\
			\hline
		\end{tabular}
		\label{tab:MNL_B_time}
	\end{center}
\end{table}

\section{Expected Distance in Section~\ref{sec:distance}} \label{sec:distance_numerical}
In this section, we show that a polynomial number of assortments cannot guarantee the expected distance to be within $O(\log N)$ by numerical studies.
The result complements Proposition~\ref{thm:dis_upper_bound}.
Let $M$ be the number of assortments randomly drawn with replacement in the training data.
We sample 100,000 instances for a combination of  $N$ and $M$ and take the average number of zeros for the largest binary numbers. The results are shown in Figure~\ref{fig: ave_dis}.
\begin{figure}[t]
	\begin{center}
		\caption{Average distances of different $N$ and $M$.} \label{fig: ave_dis}
		\includegraphics[width=0.8\textwidth]{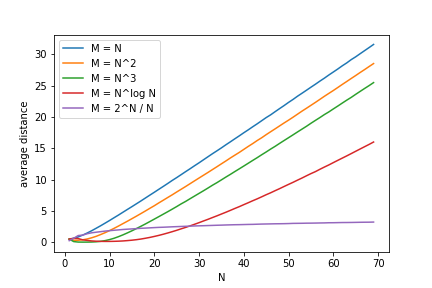}
	\end{center}
\end{figure}
We can observe that even for $M = O(N^{\log N})$, the average distance is still growing linearly.

\section{Numerical Experiments for Rank-based DCM}\label{sec:additional-numerical}
In this section, we complement the results in Theorem \ref{thm:single tree} by additional numerical studies.
Theorem~\ref{thm:single tree} states that the random forest algorithm can recover the rank-based DCM with a single ranking
when the training data is sampled uniformly.
We will demonstrate numerically that the result still holds when the training data is not uniform.
We also provide some examples showing that even though random forests may fail to recover the rankings exactly,
the predicted probability is still quite accurate.
%Essentially, the random forest includes trees that are
%and we are able to obtain the correct tree by reconstruction.
Moreover, we demonstrate the insights that when the rank-based DCM consists of more than one ranks (customer types),
the random forest may output a tree that ``concatenates'' multiple ranks.

\subsection{Non-uniform Training Data}
Consider $N=3$ products for the illustration purpose.
Suppose customers prefer $1 \succ 2 \succ 3 \succ 0$, where 0 denotes the outside option.
Figure \ref{fig:tree_RL} illustrates the decision tree corresponding to the preference of the customers,
where the splits are in the order of 1, 2, 3.
We evaluate the performance of random forests by the number of correct splits until the first mistake is made.
Notice that if the algorithm returns a tree that correctly splits the first $i$ cuts, then it can accurately predict a fraction of $1-1/2^i$ of total assortments.
For example, if the splits are 1, 3, 2 in order, then the algorithm makes one correct split and accurately predicts a half of total assortments.

Now consider $N=10$ and $2^N=1024$ possible assortments (including the empty set).
The parameter of random forests are $B=10, z = T, m = N, l = 1$.
We compare the performance of random forests in the following three sampling schemes:
\begin{enumerate}
	\item Uniform: Each assortment is observed with equal probability $1/2^N$;
	\item Non-uniform: The probability of observing each assortment follows a Dirichlet distribution with concentration parameters all equal to ones;
	\item Different occurrences: generate probability $p_i$ for each product, where $p_i$ independently follows a normal distribution with mean 0.5 and standard deviation 0.15 so that $p_i$ falls into $[0,1]$ with high probability. We set $p_i=0$ if the random variable is negative and $p_i = 1$ if it exceeds 1.
	In each assortment in the data, each product is included with probability $p_i$.
\end{enumerate}
%Notice that in the non-uniform case, although the observation probabilities of assortments are different, the total occurrences of each product may still be very close, at about 0.5.
%However, in the "different occurrences" case, the occurrences of different products may vary.
We test different sizes of training data $T \in [100, 1000, 10000]$.
For each setup, we generate 100 independent datasets and inspect $B=10$ trees for each dataset.
The mean and standard deviation of the correct splits are reported in Table \ref{tab:correct_cut}.
\begin{table}[t]
	\begin{center}
		\caption{The number of correct splits by random forests.}
		\begin{tabular}{rccc}
			\hline
			$T$   & Uniform       & Non-uniform   & Different occurrences \\
			\hline
			100   & 4.281 (0.951) & 4.185 (0.922) & 3.575 (1.372)       \\
			1000  & 7.715 (1.058) & 7.370 (1.110) & 6.369 (1.732)       \\
			10000 & 9.961 (0.256) & 9.131 (0.986) & 8.610 (2.145)      \\
			\hline
		\end{tabular}
		\label{tab:correct_cut}
	\end{center}
\end{table}
In general the uniform sampling has the best performance, but the algorithm is quite robust to other non-uniform sampling schemes.

\subsection{Reconstruction of Decision Trees}
From the numerical studies, we observe that some of the trees learned from the random forest may not have the some structure as the actual ranking.
They nevertheless represent the same choice probabilities.
To illustrate, consider $N=3$ and let the consumers' ranking be represented by Figure~\ref{fig:tree_RL}(a).
%Figure \ref{fig:tree_RL} shows the tree structure exactly cover the single rank list model.
The random forest may learn a tree as shown in Figure~
\ref{fig:tree_RL}(b) or \ref{fig:tree_RL}(c),
 %\ref{fig:tree_reconstruct} or \ref{fig:tree_artificial_cut}, 
which represents the same DCM but with a different structure.
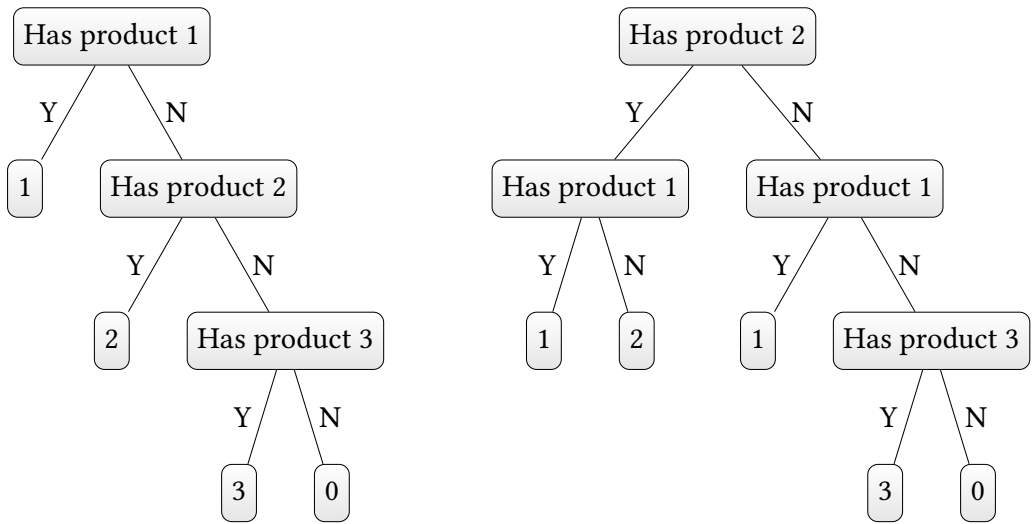
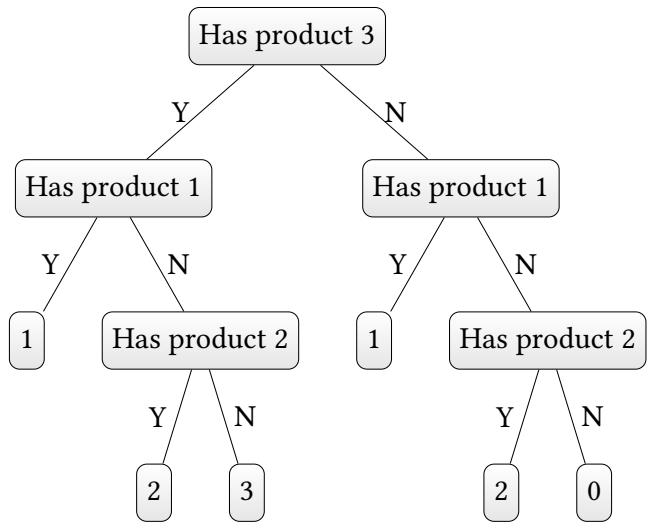
\begin{figure}[t]
	\begin{center}
		\caption{Three possible trees representing the same choice.}
		\scalebox{1}{\label{fig:tree_RL} 
			\subfigure[The tree representing the preference list of consumers]{
				\begin{forest}
					for tree={l sep+=.8cm,s sep+=.5cm,shape=rectangle, rounded corners,
						draw, align=center,
						top color=white, bottom color=gray!20}
					[Has product 1
					[1,edge label={node[midway,left]{Y}}
					]
					[Has product 2,edge label={node[midway,right]{N}}
					[2, edge label={node[midway,left]{Y}}]
					[Has product 3, edge label={node[midway,right]{N}}
					[3, edge label={node[midway,left]{Y}}]
					[0, edge label={node[midway,right]{N}}]
					]
					]
					]
		\end{forest}}}
		\hspace{1cm}
		\scalebox{1}{
			\subfigure[A reconstructed tree having the same choice]{\label{fig:tree_reconstruct}
				\begin{forest}
					for tree={l sep+=.8cm,s sep+=.5cm,shape=rectangle, rounded corners,
						draw, align=center,
						top color=white, bottom color=gray!20}
					[Has product 2
					[Has product 1,edge label={node[midway,left]{Y}}
					[1, edge label={node[midway,left]{Y}}]
					[2, edge label={node[midway,right]{N}}]
					]
					[Has product 1,edge label={node[midway,right]{N}}
					[1, edge label={node[midway,left]{Y}}]
					[Has product 3, edge label={node[midway,right]{N}}
					[3, edge label={node[midway,left]{Y}}]
					[0, edge label={node[midway,right]{N}}]
					]
					]
					]
		\end{forest}}}
		\scalebox{1}{
			\subfigure[A reconstructed tree having the same choice]{\label{fig:tree_artificial_cut}
				\begin{forest}
					for tree={l sep+=.8cm,s sep+=.5cm,shape=rectangle, rounded corners,
						draw, align=center,
						top color=white, bottom color=gray!20}
					[Has product 3
					[Has product 1,edge label={node[midway,left]{Y}}
					[1, edge label={node[midway,left]{Y}}]
					[Has product 2, edge label={node[midway,right]{N}}
					[2, edge label={node[midway,left]{Y}}]
					[3, edge label={node[midway,right]{N}}]
					]
					]
					[Has product 1,edge label={node[midway,right]{N}}
					[1, edge label={node[midway,left]{Y}}]
					[Has product 2, edge label={node[midway,right]{N}}
					[2, edge label={node[midway,left]{Y}}]
					[0, edge label={node[midway,right]{N}}]
					]
					]
					]
		\end{forest}}}
		\label{fig:tree_structure}
	\end{center}
\end{figure}

\subsection{Multiple Rankings}\label{sec:multi-ranking}
{Next we study the case when customers are represented by a mixture of multiple rankings.
	More precisely, consider $N=5$ and two customer segments.
	Suppose 70\% of customers are type one and have preference $1 \succ 2 \succ 3 \succ 4 \succ 5 \succ 0$; the remaining 30\% of customers are type two preferring $5 \succ 4 \succ 3 \succ 2 \succ 1 \succ 0$.
	When $T = 1000, B=10, z = T, m = N, l = 1$, a number of trees learned by Algorithm~\ref{alg:rf} resemble the structure shown in Figure \ref{fig:mixed_RL}.
	There may be more than one chosen product in the leaf node explained in equation \eqref{eq:class-prob}.
	For example, ``1(5)'' implies that most customers select product one in this leaf node, but some others purchase product 5.
	We can see that the main branch on the right resembles the preference list of type one.
	Meanwhile, some branches of the ranking of type two customers are attached to it.
	Random forests somehow merge the preferences of various segments into a single tree.
	This is a phenomenon commonly observed in our experiments and may shed light on the robust performance of random forests.}
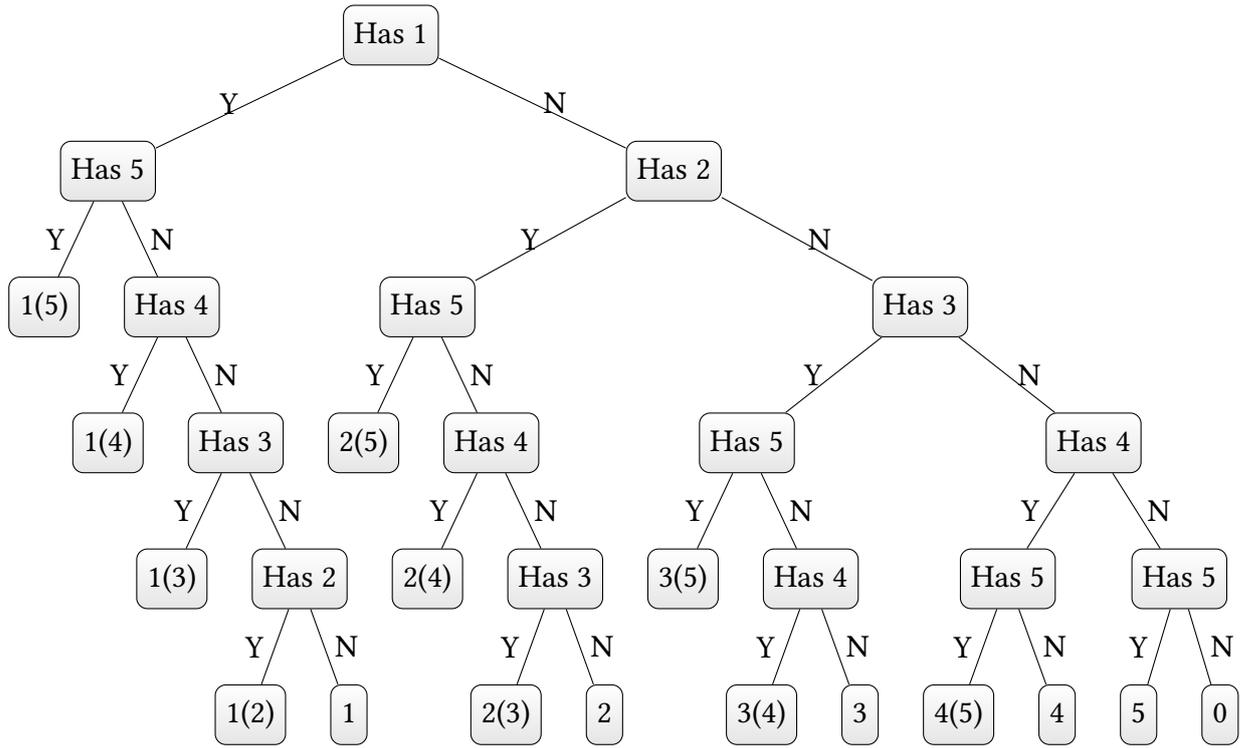
\begin{figure}[t]
	\begin{center}
		\caption{A tree output by random forests that merges two rankings.}
		\begin{forest}
			for tree={l sep+=.5cm,s sep+=.3cm,shape=rectangle, rounded corners,
				draw, align=center,
				top color=white, bottom color=gray!20}
			[Has 1
			[Has 5,edge label={node[midway,left]{Y}}
			[1(5), edge label={node[midway,left]{Y}}]
			[Has 4, edge label={node[midway,right]{N}}
			[1(4), edge label={node[midway,left]{Y}}]
			[Has 3, edge label={node[midway,right]{N}}
			[1(3), edge label={node[midway,left]{Y}}]
			[Has 2, edge label={node[midway,right]{N}}
			[1(2), edge label={node[midway,left]{Y}}]
			[1, edge label={node[midway,right]{N}}]
			]
			]
			]
			]
			[Has 2,edge label={node[midway,right]{N}}
			[Has 5, edge label={node[midway,left]{Y}}
			[2(5), edge label={node[midway,left]{Y}}]
			[Has 4, edge label={node[midway,right]{N}}
			[2(4), edge label={node[midway,left]{Y}}]
			[Has 3, edge label={node[midway,right]{N}}
			[2(3), edge label={node[midway,left]{Y}}]
			[2, edge label={node[midway,right]{N}}]
			]
			]
			]
			[Has 3, edge label={node[midway,right]{N}}
			[Has 5, edge label={node[midway,left]{Y}}
			[3(5), edge label={node[midway,left]{Y}}]
			[Has 4, edge label={node[midway,right]{N}}
			[3(4), edge label={node[midway,left]{Y}}]
			[3, edge label={node[midway,right]{N}}]
			]
			]
			[Has 4, edge label={node[midway,right]{N}}
			[Has 5, edge label={node[midway,left]{Y}}
			[4(5), edge label={node[midway,left]{Y}}]
			[4, edge label={node[midway,right]{N}}]
			]
			[Has 5, edge label={node[midway,right]{N}}
			[5, edge label={node[midway,left]{Y}}]
			[0, edge label={node[midway,right]{N}}]
			]
			]
			]
			]
			]
		\end{forest}
		\label{fig:mixed_RL}
	\end{center}
\end{figure}

\section{Numerical Examples for Product Importance}\label{sec:prod-imp-exp}
\subsection{Synthetic Examples}
Here we give an example of MDI (see Section~\ref{sec:importance} for details) under the rank-based DCM (with one ranking) and the MNL model.
\begin{example}
	\label{ex:MDI}
	Consider 10 products and data size $T = 10000$.
	The assortment in training data is uniformly generated among non-empty subsets.
	We use $B = 1000, m = \sqrt{N}, l = 50$ for the random forest algorithm.
	The ground-truth model is a single ranking: $1 \succ 2 \succ \ldots \succ 10$.
	The MDI is shown in Figure~\ref{fig:MDI_RankList}.
	We can observe that the MDI is decreasing in product indices.
	\begin{figure}[t]
		\begin{center}
			\caption{MDI of a single ranking when $N = 10, T = 10000$.}
			\label{fig:MDI_RankList}
			\includegraphics[width=0.8\textwidth]{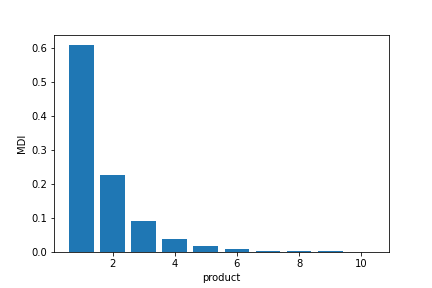}
		\end{center}
	\end{figure}
	
	When the ground-truth model is MNL, we generate utilities $u_i$ from a uniform distribution on $[0,1]$ for each product.
	The attraction of product $i$ is defined as $a_i = \exp(u_i)$ and the choice probability is $p(i, S) = a_i / \sum_{j \in S} a_j$.
	We show the attractions and MDI of random forests in Figure~\ref{fig:MDI_MNL}.
	It is clear that the MDI is highly correlated with attractions.
	\begin{figure}[t]
		\begin{center}
			\caption{MDI of the MNL model when $N = 10, T = 10000$.}
			\label{fig:MDI_MNL}
			\includegraphics[width=1\textwidth]{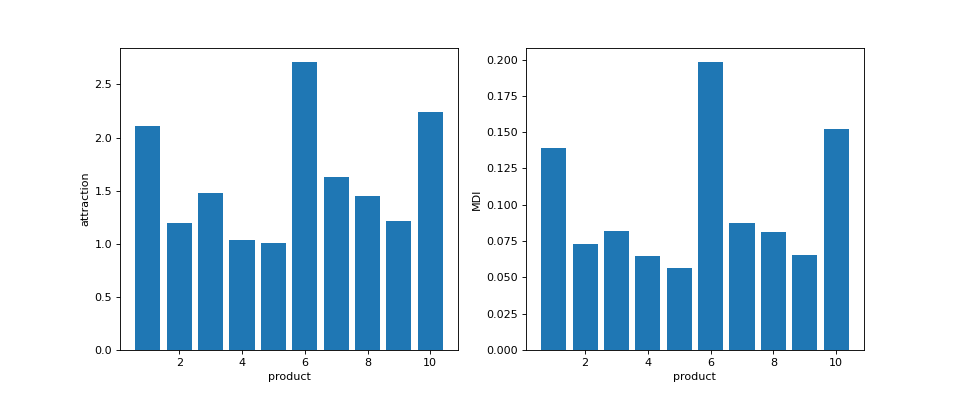}
		\end{center}
	\end{figure}
\end{example}
\subsection{Hotel and IRI Datasets}
%{\color{blue}
We also calculate the MDI for the hotel and IRI datasets.
Products with the highest MDI are listed in Tables~\ref{tab:MDI_Hotel} and \ref{tab:MDI_IRI}.
In the IRI dataset (Table~\ref{tab:MDI_IRI}), the products are represented by the vendor codes.
The products with the highest MDI do not necessarily have high demand, but play an important role in customers' decision process.
Therefore, we recommend the firms to prioritize the experimentation of these products when optimizing over assortments or prices.%}

\begin{table}[t]
	\begin{center}
		\caption{Four room types with the highest MDI in each hotel.}
		\scalebox{1}{
			\begin{tabular}{clc}
				\hline
				Hotel   & Room type                        & MDI   \\ \hline
				Hotel 1 & King Room 3                      & 0.222 \\
				& Special Type Room 1              & 0.160 \\
				& Queen Room 2                     & 0.100 \\
				& King Room 4                      & 0.096 \\ \hline
				Hotel 2 & 2 Queen Beds Room 2 Smoking      & 0.279 \\
				& 2 Queen Beds Room 1 Non-Smoking  & 0.202 \\
				& 2 Queen Beds Room 2 Non-Smoking  & 0.157 \\
				& King Room 3 Non-Smoking          & 0.148 \\ \hline
				Hotel 3 & King Room 1 Non-Smoking          & 0.523 \\
				& 2 Double Beds Room 1 Non-Smoking & 0.263 \\
				& King Room 3 Non-Smoking          & 0.081 \\
				& 2 Double Beds Room 1 Smoking     & 0.049 \\ \hline
				Hotel 4 & King Room 1 Non-Smoking          & 0.641 \\
				& 2 Queen Beds Room 1 Non-Smoking  & 0.179 \\
				& 2 Queen Beds Room 1 Smoking      & 0.100 \\
				& Suite 2 Non-Smoking              & 0.080 \\ \hline
				Hotel 5 & King Room 2 Non-Smoking          & 0.496 \\
				& King Room 1 Non-Smoking          & 0.228 \\
				& 2 Double Beds Room 1 Non-Smoking & 0.085 \\
				& 2 Double Beds Room 2 Non-Smoking & 0.077 \\ \hline
		\end{tabular}}
		\label{tab:MDI_Hotel}
	\end{center}
\end{table}

\begin{table}[t]
	\begin{center}
		\caption{Three product vendors with the highest MDI in each category of the IRI dataset.}
		\scalebox{1}{
			\begin{tabular}{lcccccc}
				\hline
				Product Category & No.1 prod & MDI & No.2 prod & MDI & No.3 prod & MDI \\
				\hline
				Beer & 87692 & 0.375 & 83820 & 0.364 & 78250 & 0.148\\
				Blades & 99998 & 0.302 & 52754 & 0.149 & 41058 & 0.140\\
				Carbonated Beverages & 73800 & 0.276 & 42200 & 0.178 & 71698 & 0.171 \\
				Cigarettes & 710 & 0.322 & 90500 & 0.248 & 99998 & 0.170\\
				Coffee & 75101 & 0.210 & 11141 & 0.200 & 71038 & 0.162\\
				Cold Cereal & 42400 & 0.358 & 18627 & 0.175 & 21908 & 0.148 \\
				Deodorant & 19045 & 0.315 & 9973 & 0.297 & 22600 & 0.150 \\
				Diapers & 99998 & 0.424 & 48157 & 0.216 & 32913 & 0.206 \\
				Facial Tissue & 63435 & 0.357 & 43032 & 0.296 & 99998 & 0.099 \\
				Frozen Dinners/Entrees & 50100 & 0.334 & 17854 & 0.251 & 72655 & 0.234\\
				Frozen Pizza & 74653 & 0.234 & 19600 & 0.202 & 35300 & 0.180\\
				Household Cleaners & 35000 & 0.276 & 23400 & 0.260 & 25700 & 0.190\\
				Hotdogs & 75278 & 0.178 & 85331 & 0.168 & 46600 & 0.151\\
				Laundry Detergent & 45893 & 0.447 & 35000 & 0.186 & 72613 & 0.138\\
				Margarine/Butter & 96451 & 0.479 & 33100 & 0.202 & 34500 & 0.115 \\
				Mayonnaise & 45200 & 0.248 & 52100 & 0.224 & 52500 & 0.172\\
				Milk & 99998 & 0.280 & 75457 & 0.269 & 41483 & 0.241\\
				Mustard & 71828 & 0.250 & 24000 & 0.235 & 70080 & 0.190\\
				Paper Towels & 43032 & 0.293 & 44096 & 0.216 & 30400 & 0.164\\
				Peanut Butter & 71018 & 0.321 & 45300 & 0.202 & 34000 & 0.117\\
				Photo & 74101 & 0.322 & 99998 & 0.280 & 41778 & 0.143\\
				Razors & 41058 & 0.350 & 99998 & 0.325 & 47400 & 0.197 \\
				Salt Snacks & 41780 & 0.401 & 41262 & 0.284 & 72600 & 0.157 \\
				Shampoo & 65632 & 0.350 & 71249 & 0.226 & 99998 & 0.142\\
				Soup & 41789 & 0.351 & 24000 & 0.332 & 50100 & 0.164 \\
				Spaghetti/Italian Sauce & 72940 & 0.313 & 77644 & 0.254 & 6010 & 0.161 \\
				Sugar Substitutes & 99998 & 0.191 & 19098 & 0.174 & 58312 & 0.137\\
				Toilet Tissue & 43032 & 0.416 & 44096 & 0.233 & 30400 & 0.205\\
				Toothbrushes & 70942 & 0.252 & 69055 & 0.216 & 416 & 0.140 \\
				Toothpaste & 68305 & 0.376 & 77326 & 0.248 & 10310 & 0.164 \\
				Yogurt & 21000 & 0.389 & 41148 & 0.171 & 53600 & 0.151\\
				\hline
		\end{tabular}}
		\label{tab:MDI_IRI}
	\end{center}
\end{table}

\section{Numerical Experiments on Aggregated Choice Data} \label{sec:aggregated_numerical}
In this section, we investigate the performance of random forests when the training data is aggregated as in Appendix~\ref{sec:aggregate-choice}.
To generate the aggregated training data, we first generate $T$ observations using the MNL model for $N=10$ products.
The utility of each product and the outside option is generated uniformly between 0 and 1.
%The only difference is that we only simulate one instead of ten transactions for each offered assortment.
Then, we let $a$ be the aggregation level, i.e., aggregate $a$ data points together.
For example, $a = 1$ is equivalent to the original unaggregated data.
For $a=5$, Table~\ref{tab:aggregate example} illustrates five observations in the original dataset for $N=5$.
Upon aggregation, the five transactions are replaced by five new observations with $\bm x_t \equiv [0.6,0.4,0.8,0.4,0.6]$ and $i_t=1,0,4,3,1$ for $t=1,2,3,4,5$.
\begin{table}[h]
	\begin{center}
		\caption{Five observations in the unaggregated original data. Upon aggregation, they are replaced by five new observations with $\bm x_t \equiv [0.6,0.4,0.8,0.4,0.6]$ and $i_t=1,0,4,3,1$ for $t=1,2,3,4,5$.}
		\def\arraystretch{0.9}\begin{tabular}{cccccc}
			\hline
			Product 1 & Product 2 & Product 3 & Product 4 & Product 5& Choices\\
			\hline
			1 & 1 & 1 & 1 & 1 & 1 \\
			0 & 1 & 0 & 0 & 1 & 0 \\
			1 & 0 & 1 & 1 & 1 & 4 \\
			0 & 0 & 1 & 0 & 0 & 3 \\
			1 & 0 & 1 & 0 & 0 & 1 \\
			\hline
		\end{tabular}
		\label{tab:aggregate example}
	\end{center}
	\vspace{-4mm}
\end{table}

We test the performance of different aggregate levels $a \in \{1,5,10,20,50\}$ when $T = 5000$.
The performance is measured in RMSE \eqref{RMSE_soft}.
Note that other DCMs cannot handle the situation naturally.
To apply the benchmarks, we ``de-aggregate'' the data by randomly generating $a$ assortments, each including product $j$ with probability 1-$\bm p_s(j) \in [0,1]$ independently.
Then we estimate the parameters for MNL and the Markov chain model from the unaggregated data.
We simulate 100 instances for each setting to evaluate the average and standard deviation, shown in Table~\ref{tab:aggregate result}.

\begin{table}[h]
	\begin{center}
		\caption{The performance of random forests and two other benchmarks for different aggregate levels.}
		\def\arraystretch{0.9}\begin{tabular}{cccc}
			\hline
			Aggregate levels & RF & MNL & MC \\
			\hline
			$a = 1$  & 0.038 (0.002) & 0.007 (0.002) & 0.020 (0.001) \\
			$a = 5$  & 0.045 (0.005) & 0.048 (0.007) & 0.052 (0.006)\\
			$a =10$ & 0.051 (0.010) & 0.057 (0.008) & 0.060 (0.007)\\
			$a =20$ & 0.054 (0.009) & 0.062 (0.008) & 0.065 (0.007)\\
			$a =50$ & 0.060 (0.010) & 0.065 (0.009) & 0.067 (0.008)\\
			\hline
		\end{tabular}
		\label{tab:aggregate result}
	\end{center}
	\vspace{-4mm}
\end{table}
From the results, MNL and the Markov chain model perform well for the original data ($a=1$).
However, after aggregation, random forests outperform the other two even when the underlying model is indeed MNL.
It showcases the strength of random forests for this type of data.
%We also show random forests outperform MNL and linear demand model when the price information is incorporated in Online Appendix~\ref{sec:pricing_numerical}.

\section{Additional Results for the IRI Dataset}\label{sec:iri-additional}
In this section, we provide additional results for the IRI dataset when the top seven and fifteen products are considered.
The setup is described in Section~\ref{sec:IRI_numerical}.

\begin{table}[t]
	\begin{center}
		\caption{The summary statistics (the data size, the number of unique assortments in the data, and the average number of products in an assortment) of the IRI dataset after preprocessing and the average and standard deviation of the out-of-sample RMSE \eqref{RMSE_realdata} for each category when considering the top 7 products.}
		\scalebox{1}{
			\begin{tabular}{lrrccccc} % {p{4cm}>{\raggedleft}p{1.5cm}>{\raggedleft}p{1.5cm}>{\raggedleft}p{1.5cm}>{\centering}p{3cm}>{\centering}p{3cm}>{\centering}p{3cm}}
				\hline
				Product category & \#Data  &\#Unique &\#Avg & RF & MNL & MC \\
				& & assort & prod \\
				\hline
				Beer & 1,201 & 40 & 3.65 & \textbf{0.088} (0.019) & \textbf{0.088} (0.020) & 0.090 (0.011) \\
				Blades & 1,441 & 48 & 3.71 & 0.053 (0.010) & 0.055 (0.008) & \textbf{0.047} (0.010) \\
				Carbonated Beverages & 416 & 14 & 4.14 & 0.107 (0.068) & 0.117 (0.050) & \textbf{0.104} (0.051) \\
				Cigarettes & 1,707 & 57 & 3.98 & 0.067 (0.023) & 0.074 (0.027) & \textbf{0.060} (0.024) \\
				Coffee & 934 & 31 & 4.45 & \textbf{0.108} (0.028) & 0.119 (0.024) & 0.111 (0.022) \\
				Cold Cereal & 383 & 13 & 5.38 & \textbf{0.087} (0.035) & 0.111 (0.035) & 0.107 (0.033) \\
				Deodorant & 1,538 & 51 & 4.12 & 0.057 (0.009) & 0.064 (0.019) & \textbf{0.053} (0.003) \\
				Diapers & 658 & 22 & 3.32 & 0.081 (0.040) & 0.079 (0.030) & \textbf{0.066} (0.030) \\
				Facial Tissue & 865 & 29 & 3.41 & \textbf{0.132} (0.017) & 0.146 (0.034) & 0.139 (0.015) \\
				Frozen Dinners/Entrees & 772 & 26 & 4.42 & \textbf{0.076} (0.022) & 0.098 (0.026) & 0.088 (0.024) \\
				Frozen Pizza & 1,504 & 50 & 3.76 & \textbf{0.137} (0.022) & 0.146 (0.012) & 0.146 (0.015) \\
				Household Cleaners & 994 & 33 & 4.73 & 0.104 (0.015) & \textbf{0.096} (0.015) & \textbf{0.096} (0.016) \\
				Hotdogs & 1,888 & 63 & 3.76 & \textbf{0.123} (0.023) & 0.130 (0.023) & \textbf{0.123} (0.027) \\
				Laundry Detergent & 1,614 & 54 & 4.20 & \textbf{0.124} (0.017) & 0.154 (0.017) & 0.137 (0.015) \\
				Margarine/Butter & 630 & 21 & 4.90 & \textbf{0.106} (0.012) & 0.135 (0.026) & 0.119 (0.026) \\
				Mayonnaise & 681 & 23 & 3.78 & \textbf{0.110} (0.075) & 0.151 (0.069) & 0.126 (0.052) \\
				Milk & 1,080 & 36 & 3.83 & \textbf{0.097} (0.026) & 0.111 (0.020) & 0.105 (0.024) \\
				Mustard & 897 & 30 & 4.30 & \textbf{0.089} (0.034) & 0.109 (0.032) & 0.109 (0.031) \\
				Paper Towels & 1,286 & 43 & 4.35 & \textbf{0.093} (0.018) & 0.131 (0.028) & 0.108 (0.034) \\
				Peanut Butter & 931 & 31 & 4.03 & \textbf{0.082} (0.031) & 0.085 (0.027) & 0.088 (0.038) \\
				Photography supplies & 2,974 & 99 & 3.48 & 0.085 (0.005) & 0.096 (0.016) & \textbf{0.083} (0.019) \\
				Razors & 1,555 & 52 & 2.92 & 0.073 (0.028) & 0.067 (0.008) & \textbf{0.044} (0.013) \\
				Salt Snacks & 749 & 25 & 4.52 & 0.101 (0.029) & \textbf{0.092} (0.027) & 0.097 (0.031) \\
				Shampoo & 1,775 & 59 & 3.92 & 0.085 (0.028) & \textbf{0.076} (0.013) & 0.077 (0.019) \\
				Soup & 629 & 21 & 5.14 & \textbf{0.130} (0.025) & 0.135 (0.025) & 0.148 (0.014) \\
				Spaghetti/Italian Sauce & 931 & 31 & 4.61 & \textbf{0.101} (0.048) & 0.104 (0.028) & 0.102 (0.031) \\
				Sugar Substitutes & 1,204 & 40 & 3.90 & 0.063 (0.020) & 0.064 (0.018) & \textbf{0.057} (0.009) \\
				Toilet Tissue & 1,020 & 34 & 4.41 & \textbf{0.124} (0.016) & 0.145 (0.016) & \textbf{0.124} (0.019) \\
				Toothbrushes & 2,622 & 87 & 3.86 & 0.092 (0.014) & \textbf{0.086} (0.009) & \textbf{0.086} (0.012) \\
				Toothpaste & 958 & 32 & 4.28 & 0.093 (0.036) & \textbf{0.087} (0.026) & 0.092 (0.020) \\
				Yogurt & 1,465 & 49 & 3.69 & \textbf{0.121} (0.041) & 0.132 (0.040) & 0.128 (0.042) \\
				\hline
		\end{tabular}}
		\label{tab:IRI KFold top7}
	\end{center}
\end{table}

\begin{table}[t]
	\begin{center}
		\caption{The summary statistics (the data size, the number of unique assortments in the data, and the average number of products in an assortment) of the IRI dataset after preprocessing and the average and standard deviation of the out-of-sample RMSE \eqref{RMSE_realdata} for each category when considering the top 15 products.}
		\scalebox{1}{
			\begin{tabular}{lrrccccc} % {p{4cm}>{\raggedleft}p{1.5cm}>{\raggedleft}p{1.5cm}>{\raggedleft}p{1.5cm}>{\centering}p{3cm}>{\centering}p{3cm}>{\centering}p{3cm}}
				\hline
				Product category & \#Data  &\#Unique &\#Avg & RF & MNL & MC \\
				& & assort & prod \\
				\hline
				Beer & 22,341 & 755 & 9.25 & \textbf{0.045} (0.004) & 0.061 (0.004) & 0.058 (0.004) \\
				Blades & 5,789 & 193 & 5.36 & 0.052 (0.014) & 0.054 (0.011) & \textbf{0.051} (0.011) \\
				Carbonated Beverages & 9,386 & 316 & 7.39 & \textbf{0.050} (0.004) & 0.067 (0.004) & 0.063 (0.003) \\
				Cigarettes & 15,052 & 506 & 6.46 & \textbf{0.049} (0.010) & 0.065 (0.012) & 0.056 (0.012) \\
				Coffee & 26,521 & 894 & 9.05 & \textbf{0.056} (0.003) & 0.075 (0.001) & 0.071 (0.001) \\
				Cold Cereal & 16,966 & 575 & 10.20 & \textbf{0.029} (0.002) & 0.036 (0.004) & 0.033 (0.003) \\
				Deodorant & 34,352 & 1137 & 9.43 & \textbf{0.038} (0.002) & 0.039 (0.002) & \textbf{0.038} (0.002) \\
				Diapers & 1,228 & 41 & 3.85 & \textbf{0.076} (0.014) & 0.080 (0.017) & \textbf{0.076} (0.015) \\
				Facial Tissue & 3,611 & 121 & 4.98 & \textbf{0.075} (0.015) & 0.093 (0.016) & 0.078 (0.012) \\
				Frozen Dinners/Entrees & 13,514 & 453 & 10.17 & \textbf{0.057} (0.005) & 0.078 (0.002) & 0.072 (0.003) \\
				Frozen Pizza & 27,204 & 917 & 7.98 & \textbf{0.066} (0.001) & 0.092 (0.003) & 0.086 (0.003) \\
				Household Cleaners & 21,239 & 704 & 10.58 & \textbf{0.049} (0.002) & 0.056 (0.003) & 0.053 (0.002) \\
				Hotdogs & 26,821 & 897 & 7.23 & \textbf{0.087} (0.005) & 0.116 (0.003) & 0.108 (0.004) \\
				Laundry Detergent & 10,943 & 367 & 7.20 & \textbf{0.081} (0.007) & 0.101 (0.009) & 0.092 (0.007) \\
				Margarine/Butter & 6,727 & 226 & 8.78 & \textbf{0.046} (0.006) & 0.068 (0.007) & 0.061 (0.005) \\
				Mayonnaise & 21,512 & 732 & 7.31 & \textbf{0.051} (0.003) & 0.086 (0.004) & 0.076 (0.004) \\
				Milk & 5,278 & 177 & 6.14 & \textbf{0.088} (0.010) & 0.112 (0.008) & 0.109 (0.009) \\
				Mustard & 42,703 & 1451 & 9.34 & \textbf{0.045} (0.002) & 0.056 (0.003) & 0.052 (0.002) \\
				Paper Towels & 5,963 & 199 & 6.33 & \textbf{0.084} (0.006) & 0.105 (0.005) & 0.094 (0.008) \\
				Peanut Butter & 14,484 & 488 & 6.84 & \textbf{0.055} (0.009) & 0.087 (0.007) & 0.082 (0.007) \\
				Photography supplies & 6,944 & 231 & 3.93 & 0.085 (0.013) & 0.096 (0.010) & \textbf{0.082} (0.010) \\
				Razors & 2,600 & 87 & 3.29 & 0.084 (0.014) & 0.071 (0.008) & \textbf{0.054} (0.003) \\
				Salt Snacks & 7,505 & 252 & 7.46 & \textbf{0.048} (0.006) & 0.061 (0.006) & 0.058 (0.007) \\
				Shampoo & 39,352 & 1305 & 9.15 & 0.053 (0.001) & 0.053 (0.002) & \textbf{0.052} (0.002) \\
				Soup & 19,661 & 666 & 10.02 & \textbf{0.051} (0.002) & 0.079 (0.004) & 0.075 (0.004) \\
				Spaghetti/Italian Sauce & 17,109 & 576 & 8.92 & \textbf{0.065} (0.007) & 0.076 (0.005) & 0.073 (0.006) \\
				Sugar Substitutes & 12,763 & 426 & 6.19 & \textbf{0.043} (0.005) & 0.048 (0.003) & 0.045 (0.003) \\
				Toilet Tissue & 4,340 & 145 & 6.86 & \textbf{0.081} (0.012) & 0.098 (0.012) & 0.092 (0.011) \\
				Toothbrushes & 74,686 & 2471 & 9.01 & \textbf{0.058} (0.002) & 0.061 (0.001) & 0.059 (0.002) \\
				Toothpaste & 43,678 & 1500 & 9.15 & \textbf{0.051} (0.001) & 0.053 (0.002) & 0.052 (0.002) \\
				Yogurt & 6,781 & 227 & 6.00 & \textbf{0.091} (0.008) & 0.107 (0.009) & 0.100 (0.007) \\
				\hline
		\end{tabular}}
		\label{tab:IRI KFold top15}
	\end{center}
\end{table}	
}

\end{document}